\documentclass[sigconf]{acmart}
\AtBeginDocument{
  }

\copyrightyear{2026}
\acmYear{2026}
\setcopyright{cc}
\setcctype{by}
\acmConference[KDD '26]{Proceedings of the 32nd ACM SIGKDD Conference on Knowledge Discovery and Data Mining V.2}{August 09--13, 2026}{Jeju Island, Republic of Korea}
\acmBooktitle{Proceedings of the 32nd ACM SIGKDD Conference on Knowledge Discovery and Data Mining V.2 (KDD '26), August 09--13, 2026, Jeju Island, Republic of Korea}
\acmDOI{10.1145/3770855.3818149}
\acmISBN{979-8-4007-2259-2/2026/08}

\usepackage{algorithm}
\usepackage{algorithmic}
\usepackage{multirow}
\usepackage{booktabs}
\usepackage{bbding}
\usepackage{pifont}
\usepackage{utfsym}
\usepackage{amsmath}
\usepackage{makecell}
\usepackage{array}
\usepackage{amsmath}
\usepackage[inline]{enumitem}
\usepackage{graphicx}
\usepackage{adjustbox}
\usepackage{csquotes}
\usepackage{newfloat}
\usepackage{listings}
\usepackage{bm}
\usepackage{graphicx}
\usepackage{subcaption}
\usepackage{makecell}
\usepackage{threeparttable}
\usepackage{hyperref}
\usepackage{colortbl}
\usepackage{amsthm}
\newtheorem*{remark}{Remark}
\usepackage{diagbox}
\usepackage{caption}
\usepackage{subcaption}
\usepackage[linesnumbered,ruled,vlined,algo2e]{algorithm2e}
\usepackage{tabularx} 
\usepackage{ragged2e} 
\usepackage{enumitem}

\DeclareMathOperator{\rank}{rank}
\DeclareMathOperator{\col}{col}
\DeclareMathOperator{\diag}{diag}

\newtheorem{theorem}{Theorem}
\newtheorem{lemma}{Lemma}
\newtheorem{proposition}{Proposition}
\newtheorem{corollary}{Corollary}
\newtheorem{assumption}{Assumption}
\theoremstyle{definition}
\newtheorem{definition}{Definition}

\newcommand{\eqcomma}{\,,}
\newcommand{\eqperiod}{\,.}

\begin{document}

\title{DiffoR: A Unified Continuous Generative Framework for Universal Ordinal Regression}

\author{Hongxu Ma}
\authornote{Both authors contributed equally to this research.}
\authornote{Work done during the internship at Kuaishou Technology.}
\affiliation{
  \institution{Fudan University}
  \city{Shanghai}
  \country{China}}
\email{hxma24@m.fudan.edu.cn}

\author{Lin Wang}
\authornotemark[1]
\affiliation{
  \institution{Kuaishou Technology}
  \city{Beijing}
  \country{China}}
\email{wanglin36@gmail.com}

\author{Chenghou Jin}
\authornotemark[1]
\affiliation{
  \institution{Fudan University}
  \city{Shanghai}
  \country{China}}
\email{jinch24@m.fudan.edu.cn}

\author{Han Zhou}
\affiliation{
  \institution{Shanghai University of Finance and Economics}
  \city{Shanghai}
  \country{China}}
\email{zhouhan@stu.sufe.edu.cn}

\author{Jie Zhang}
\affiliation{
  \institution{Kuaishou Technology}
  \city{Beijing}
  \country{China}}
\email{zhangjie39@kuaishou.com}

\author{Xiaoyu Yang}
\affiliation{
  \institution{Kuaishou Technology}
  \city{Beijing}
  \country{China}}
\email{yangxiaoyu@kuaishou.com@gmail.com}

\author{Chunjie Chen}
\affiliation{
  \institution{Kuaishou Technology}
  \city{Beijing}
  \country{China}}
\email{chencj517@gmail.com}

\author{Jihong Guan}
\affiliation{
  \institution{Tongji University}
  \city{Shanghai}
  \country{China}}
\email{jhguan@tongji.edu.cn}

\author{Shuigeng Zhou}
\authornote{Corresponding author.}
\affiliation{
  \institution{Fudan University}
  \city{Shanghai}
  \country{China}}
\email{sgzhou@fudan.edu.cn}
\renewcommand{\shortauthors}{Hongxu Ma et al.}

\begin{abstract}
Ordinal Regression (OR) aims to predict target values with inherent order,
underpinning critical applications across diverse domains, from recommender systems to computer vision.
Though having evolved from naive regression to discretization-based classification and generation, existing paradigms remain fundamentally constrained by quantization artifacts and the lack of global ordinal topological perception.
These methods typically enforce rigid boundary delineations, failing to capture the non-stationary semantic transitions inherent to ordinal data.

In this paper, we propose a novel paradigm where OR is formulated as a \textbf{Continuous Generative Ordinal Regression} task. Under the novel paradigm, we introduce \textbf{DiffoR}, a unified framework that leverages diffusion models to recover continuous ordinal values via iterative denoising, thereby enabling the dynamic learning of soft semantic transitions.
To explicitly preserve ordinal topology, we devise a Dual-Decoupling Strategy: Spatially, \textit{Multi-scale Increment Aggregation} decomposes targets into hierarchical continuous increments;  Temporally, \textit{Dynamic Denoising Perception} synchronizes denoising steps with feature frequencies, 
ensuring robust coarse-to-fine refinement.
Theoretically, we show that the proposed method can significantly enhance both representation capability and mechanistic interpretability. Extensive experiments on 12 benchmarks across four domains validate DiffoR's consistent superiority over state-of-the-art methods, establishing a new standard
that demonstrates strong potential as a general-purpose solution for universal ordinal regression.

\end{abstract}

\begin{CCSXML}
<ccs2012>
   <concept>
       <concept_id>10002951.10003317.10003338.10003343</concept_id>
       <concept_desc>Information systems~Learning to rank</concept_desc>
       <concept_significance>500</concept_significance>
       </concept>
 </ccs2012>
\end{CCSXML}

\ccsdesc[500]{Information systems~Learning to rank}

\keywords{Ordinal Regression, Diffusion Models, Continuous Generative Learning}
\maketitle

\section{Introduction}
\label{sec:intro}
\begin{figure}[t]
    \centering
    \includegraphics[scale=0.5]{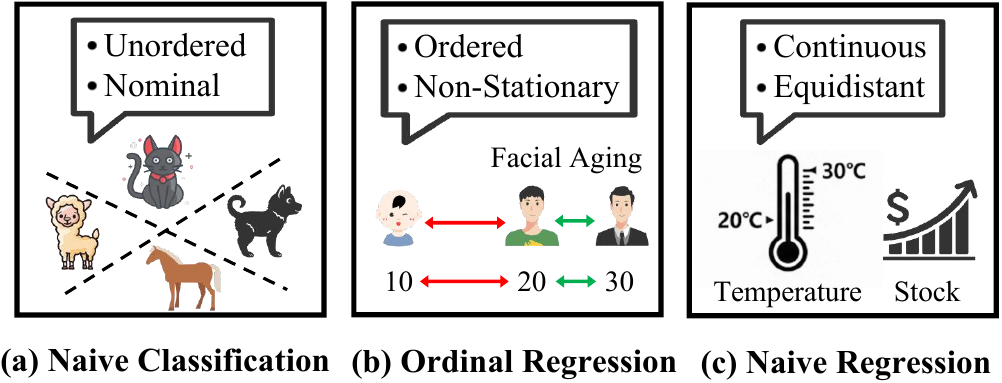}
    \caption{Conceptual comparison of (a) Naive Classification, (b) Ordinal Regression, and (c) Naive Regression. Ordinal Regression uniquely addresses data that is ordered but non-stationary, distinct from the nominal nature of classification and the equidistant metric of regression.}
    \label{fig:intro}
\end{figure}

\begin{table*}[t]
    \centering
    \caption{Evolution of Ordinal Regression Paradigms. `L' and `A' denote `Limitation' and `Advantage', respectively.}
    \label{tab:paradigm_comparison}
    \resizebox{0.95\linewidth}{!}{
    \begin{tabular}{l l l l}
        \toprule
        \textbf{Paradigm} & \textbf{Core Mechanism} & \textbf{Target Representation} & \textbf{Intrinsic Limitation / Advantage} \\
        \midrule
        \textit{Naive Regression} & Point-wise Mapping & Deterministic Point & \textbf{L:} Fails to capture complex ordinal distributions. \\
        \addlinespace[1pt]
        \textit{Space Discretization} & Space Quantization & Hard Categories & \textbf{L:} Disregards inherent ordinal relationships. \\
        \addlinespace[1pt]
        \textit{Rank-based} & Binary Decomposition & Independent Bits & \textbf{L:} Struggles with interval dependencies. \\
        \addlinespace[1pt]
        \textit{Discrete Generation} & Token Autoregression & Quantized Tokens & \textbf{L:} Heuristic vocabulary and quantization error. \\
        \midrule
        \rowcolor{gray!12} 
        \textbf{Continuous Generation} & \textbf{Manifold Diffusion} & \textbf{Continuous Distribution} & \textbf{A:} Captures soft transitions \& global topology. \\
        \bottomrule
    \end{tabular}
    }
\end{table*}
Predicting target values with inherent ordering, known as Ordinal Regression (OR), is instrumental across domains such as computer vision (e.g., facial age estimation~\cite{OR-CNN, Unimodal}), medical diagnosis (e.g., disease staging~\cite{wang2023ord2seq}) and recommendation systems (e.g. preference ranking~\cite{cread,ma2024generative,jin2026invariant,drachen2018or}). 
As illustrated in Fig.~\ref{fig:intro}, unlike classification (which ignores order) or metric regression (which assumes equidistant intervals), OR must model non-stationary semantic boundaries (e.g., the perceptual gap between ages 20→21 is smaller than 60→61 in facial aging). This inherent non-uniformity defies rigid discretization~\cite{wang2025survey}, a limitation we theoretically verify in Appendix~\ref{app:meancollapse} and \ref{app:ordinal}.

From the perspective of paradigm evolution (summarized in Tab.~\ref{tab:paradigm_comparison}), early approaches largely treat OR as a \textit{Naive Regression} task~\cite{rothe2015dex}, attempting to fit targets via point-wise mapping but often failing to capture complex ordinal distributions. Subsequently, \textit{Continuous Space Discretization} (CSD)~\cite{wang2025survey,diaz2019soft} becomes the dominant paradigm, simplifying regression into classification via space quantization. 
Its derivative, \textit{Rank-based} methods~\cite{OR-CNN,cread}, encode ordinality via binary subtasks yet struggle with inherent interval independence and rigid predictions.
Recently, analogous to Large Language Models (LLMs), \textit{Discrete Generative} methods~\cite{wang2023ord2seq, ma2026gor} reformulate OR as sequence generation, but remain hampered by heuristic vocabulary design and inevitable quantization error.
Crucially, as we theoretically show in Appendix.~\ref{app:ordinal} and \ref{app:ar_bottleneck}, these discretization-based formulations introduce deterministic boundaries into an inherently continuous ordinal space. As a result, predictions are often fragmented locally, insensitive to ordinal distances, and blind to the global topology of the ordinal space, especially near semantic transition regions.
While a number of works attempt to enhance encoders~\cite{wang2017zoom,yang2023diffmic,zhang2025multi,ma2025fine,ma2025ms,feng2026muvr} or tackle boundary ambiguity~\cite{li2021learning,shin2022moving}, these optimizations mostly address domain-specific symptoms rather than the intrinsic limitations of the underlying discrete paradigms, remaining inherently palliative and lacking cross-task generalization.

This situation poses a fundamental question: \textit{Is there a paradigm that can circumvent the intrinsic limitations of discretization while natively accommodating the strict ordinal dependencies and non-stationary boundaries of OR?}

A natural alternative is to directly regress continuous targets. However, naive regression collapses the ordinal structure into a single scalar estimate, failing to explicitly represent hierarchical ordinal semantics or capture uncertainty across different scales. In practice, coarse ordinal levels (e.g., ``which stage'') and fine-grained variations (e.g., ``where within a stage'') are governed by distinct semantic cues, which are difficult to disentangle within a single scalar prediction.

In response to this question, this paper turns to Continuous Generative Models~\cite{vae,ddpm,flowmatching} and posits that their exceptional capability in characterizing complex probability density manifolds enables the adaptive learning of latent target distributions across diverse domains, offering a principled solution to model the continuous ordinal space.
Unlike traditional discretization that enforces deterministic boundary delineation, this continuous generative capability enables the dynamic learning of soft semantic transitions, thereby perfectly capturing the non-stationary gradations between adjacent ordinal values.
To this end, we formally propose a novel paradigm --- \textbf{Continuous Generative Ordinal Regression}, and introduce \textbf{DiffoR}, a unified diffusion-based framework that models ordinal targets as a structured multivariate continuous distribution via joint generation rather than discrete classification.


Different from prior paradigms, we reformulate OR as a conditional continuous value recovery process, progressively estimating the target via iterative denoising. To inject ordinal priors and explicitly preserve the ordinal topology within the generation process, we devise \textit{Multi-scale Increment Aggregation}, which models the prediction as a \emph{multivariate continuous distribution}. This mechanism decouples the target into hierarchical continuous increments, spanning from global coarse-grained estimations to local fine-grained refinements. 
Instead of independent predictions, our model jointly generates all components from a shared latent representation, ensuring cross-scale ordinal dependency and global consistency.
By dedicating distinct attention heads in the Transformer to encode features at specific scales, we leverage parallel attention subspaces to capture multi-granular ordinal dependencies, enabling efficient joint decoding.


Furthermore, mirroring the coarse-to-fine cognitive perception where macroscopic judgments (e.g., ``elderly'') rely on low-frequency structures and microscopic estimations (e.g., ``65 years'') demand high-frequency details, we introduce a \textit{Dynamic Denoising Perception} strategy. 
Recognizing that distinct ordinal increments align with varying feature frequency spectra, we discard the unified timestep convention of standard diffusion models. By injecting scale-adaptive stochastic perturbations to decouple the denoising process, we enable the model to learn robust representations for coarse scales under high noise levels, while reserving low-noise steps for fine-grained calibration. This dynamic synchronization effectively aligns the denoising trajectory with the hierarchical semantics of the target.

Overall, the contributions of this paper are as follows:
\begin{itemize}[leftmargin=20pt]
    \item We provide a rigorous theoretical analysis that exposes the limitations inherent to existing paradigms and propose a novel paradigm that formalizes ordinal regression as a Continuous Generative task to bypass quantization limits.
    \item We develop \textbf{DiffoR}, a unified continuous regression framework tailored for OR that adapts diffusion generation to capture diverse latent ordinal spaces. By leveraging soft semantic transitions, it accurately captures the non-stationary gradations between adjacent values,
    \item We design a Dual-Decoupling Strategy, comprising \textit{Multi-scale Increment Aggregation} to construct a structured ordinal space and \textit{Dynamic Denoising Perception} to align semantic granularity with feature frequency. Theoretical analysis confirms that this design significantly enhances the model's representation capability and interpretability.
    \item Extensive experiments across 12 ordinal regression benchmarks spanning four domains demonstrate DiffoR's strong generalization and consistent superiority over the SOTAs.
\end{itemize}

\section{Related Work}
\subsection{Ordinal Regression (OR)}
OR addresses prediction tasks with ordered targets, widely applied in diverse domains like facial age estimation~\citep{niu2016ordinal,chen2017using}, image aesthetic/quality assessment~\citep{he2022rethinking,he2023thinking}, watch-time prediction~\cite {cread,tpm,ma2024generative,jin2026invariant}, life-time value prediction~\citep{wang2019deep,li2022billion,weng2024optdist}.
The landscape of OR has witnessed a shift from continuous mapping to discrete modeling.
Initial \textit{Naive Regression} methods~\citep{rothe2015dex} utilize standard regression losses but struggle with non-uniform ordinal intervals.
Continuous Space Discretization (CSD)~\cite{wang2025survey,diaz2019soft} and Rank-based Methods~\citep{OR-CNN,cread,tpm} transform regression into classification or binary subtasks. Despite their popularity, they still suffer from inherent quantization artifacts and rigid boundary delineations that fragment the global ordinal space.
Discrete Generative Models~\citep{wang2023ord2seq,ma2026gor} emulate LLMs to generate ordinal sequences, yet remain bound by discrete vocabularies and heuristic tokenization.
While CLIP-based methods~\citep{li2022ordinalclip,yu2024clip,wang2023learning,du2024teach} enhance semantic understanding via vision-language alignment, but they do not address the fundamental limitations of discretization.
In this paper, our DiffoR, the first Continuous Generative framework for OR, models the target distribution as a continuous manifold via diffusion, can effectively capture the soft and non-stationary transitions between ordinal values.

\subsection{Generative Regression Modeling}
Generative modeling has traditionally been bifurcated into discrete and continuous paradigms.
Discrete Generative Models, exemplified by Large Language Models (LLMs)~\citep{gpt3, llama,wang2026creativebench,xu-etal-2025-alignment,xu2025reducing}, excel in capturing sequential dependencies via autoregressive token prediction. While recently adapted for Ordinal Regression~\citep{wang2023ord2seq,ma2026gor}, these methods are inherently constrained by heuristic vocabularies and quantization errors, limiting their precision estimation.
Conversely, Continuous Generative Models, such as Diffusion~\citep{ddpm} and Flow Matching~\citep{flowmatching}, have surged in prominence due to their exceptional capability in modeling complex, high-dimensional probability density manifolds.
Leveraging this capability, recent works have successfully extended diffusion models to standard classification~\citep{card,yang2023diffmic,uliana2025diffusion,chen2023robust} and regression~\citep{zhao2019variational}, demonstrating superior robustness and uncertainty quantification.
However, the application of continuous generation to Ordinal Regression remains unexplored, primarily due to the difficulty of capturing strict ordinal relationships and non-stationary semantic boundaries—issues, which is addressed in this work.

\section{Preliminaries}

\subsection{Conditional Diffusion Models}
Diffusion Models ~\cite{ho2020denoising} learn complex data distributions by simulating a gradual noise injection and removal process. In this work, we focus on \emph{conditional latent diffusion models}, where the generation process is guided by auxiliary conditions $\mathbf{x}$ (i.e., the input features).

\subsubsection{Forward Diffusion Process.}
Given an initial latent representation $\mathbf{z}_0$, Gaussian noise is progressively injected through a Markov chain controlled by a predefined variance schedule $\beta_t \in (0,1)$:
\begin{equation}
q(\mathbf{z}_t \mid \mathbf{z}_{t-1})
= \mathcal{N}\!\left(\mathbf{z}_t;\sqrt{1-\beta_t}\mathbf{z}_{t-1},\beta_t \mathbf{I}\right).
\end{equation}
Let $\alpha_t = 1 - \beta_t$ and $\bar{\alpha}_t = \prod_{i=1}^t \alpha_i$. Using the reparameterization trick, the latent variable at time step $t$ can be expressed as:
\begin{equation}
\mathbf{z}_t
= \sqrt{\bar{\alpha}_t}\mathbf{z}_0
+ \sqrt{1 - \bar{\alpha}_t}\,\boldsymbol{\epsilon},
\quad \boldsymbol{\epsilon} \sim \mathcal{N}(0, \mathbf{I}).
\end{equation}
When the total number of diffusion steps $T$ becomes sufficiently large, $\mathbf{z}_T$ approaches an isotropic Gaussian distribution.

\subsubsection{Reverse Denoising Process.}
The reverse process aims to learn a parameterized conditional distribution
\begin{equation}
p_\theta(\mathbf{z}_{t-1} \mid \mathbf{z}_t, \mathbf{x}),
\end{equation}
which iteratively removes noise from $\mathbf{z}_t$ conditioned on $\mathbf{x}$. At each time step, the model predicts either the injected noise or the original latent representation. When predicting noise, the reverse update takes the form:
\begin{equation}
\mathbf{z}_{t-1}
= \frac{1}{\sqrt{\alpha_t}}
\left(
\mathbf{z}_t
- \frac{1-\alpha_t}{\sqrt{1-\bar{\alpha}_t}}
\epsilon_\theta(\mathbf{z}_t, \mathbf{x}, t)
\right)
+ \sigma_t \mathbf{v},
\end{equation}
where $\mathbf{v} \sim \mathcal{N}(0, \mathbf{I})$ and $\sigma_t$ controls  sampling process stochasticity.

\section{Methodology}


This section instantiates the paradigm described in the Introduction: we treat ordinal regression as \textbf{continuous generative ordinal regression} over an ordered latent space. DiffoR maps $\mathbf{x}$ to conditional features $\mathbf{c}=E_\psi(\mathbf{x})$, runs a conditional diffusion denoiser $\epsilon_\theta(\mathbf{z}_t,\mathbf{c},t)$ to obtain denoised latents $\{\hat{\mathbf{z}}_0(t)\}$, and decodes an \emph{order-aware} increment vector $\hat{\mathbf{B}}$ whose sum yields the final prediction $\hat{y}$.
Our key structural novelty is \textbf{spatio-temporal decoupling}, implemented by two modules: \textbf{MIA} (Multi-scale Increment Aggregation) \emph{spatially} decomposes the target into $S$ ordered continuous increments and assigns each increment to a dedicated head; \textbf{DDP} (Dynamic Denoising Perception) \emph{temporally} aligns each head to a dedicated denoising timestep $t_k$, so coarse-to-fine semantics are decoded from noise levels matched to feature frequency. The rest of this section introduces the increment target space, the conditional diffusion backbone, the dual-decoupled decoding rule, and the training/inference procedures.

\paragraph{Overall architecture.}
DiffoR consists of: (i) an encoder $E_\psi$ producing conditional features $\mathbf{c}=E_\psi(\mathbf{x})$; (ii) a diffusion denoiser (backbone) $\epsilon_\theta(\mathbf{z}_t,\mathbf{c},t)$; (iii) $S$ scale-specific decoding heads $\{\mathcal D_k\}_{k=1}^S$; and (iv) an order-preserving aggregation rule.

As illustrated in Fig.~\ref{fig:framework}, the overall pipeline is: (i) compute $\mathbf{c}=E_\psi(\mathbf{x})$; (ii) perform conditional diffusion to obtain denoised latent estimates $\{\hat{\mathbf{z}}_0(t)\}$; (iii) decode multi-scale increments $\hat{\mathbf{B}}$ from aligned timesteps; (iv) aggregate $\hat{\mathbf{B}}$ to produce the final ordinal prediction $\hat{y}$.
Crucially, DiffoR follows a \textbf{Dual-Decoupling} design (matching the Contributions): \textbf{spatially}, \emph{Multi-scale Increment Aggregation (MIA)} decomposes targets into hierarchical continuous increments and assigns each increment to a dedicated head; \textbf{temporally}, \emph{Dynamic Denoising Perception (DDP)} \emph{synchronizes denoising steps with semantic granularity / feature frequency}, so coarse semantics are perceived under higher noise while fine semantics are calibrated under lower noise. This design ensures robust coarse-to-fine refinement and yields provable guarantees (Sec.~\ref{sec:theory-main}).



\subsection{Problem Formulation for Multi-scale Increment Representation}
Given an normalized ordinal target $y \in [0,1]$, we divide the target range into $S$ ordered intervals with uniform width $\Delta = 1/S$. With this partition, we represent each target using an ordered $S$-dimensional continuous vector
\begin{equation}
\mathbf{B} = [b_1, b_2, \dots, b_S]^\top,
\end{equation}
where each component corresponds to the \emph{usable completion} of $y$ within the $k$-th ordinal interval. We use the explicit construction
\begin{equation}
b_k
\;=\;
\min\Big\{\max\big(y-(k-1)\Delta,\,0\big),\,\Delta\Big\},
\qquad k=1,\dots,S,
\label{eq:bk-definition}
\end{equation}
which guarantees $b_k\in[0,\Delta]$ and yields an exact additive reconstruction:
\begin{equation}
y = \sum_{k=1}^{S} b_k.
\end{equation}
This representation induces a hierarchical structure: lower-index components encode coarse ordinal completion, while higher-index components remain sensitive near semantic transition regions.

\subsection{Conditional Diffusion Backbone with Encoder and Denoiser}
\label{sec:diffusion-backbone}
To model the conditional distribution over $\mathbf{B}$, we adopt a conditional diffusion framework. Given input $\mathbf{x}$, we first encode it as $\mathbf{c}=E_\psi(\mathbf{x})$. We assume the structured ordinal information is embedded in a latent variable $\mathbf{z}_0$ on a continuous ordinal manifold, and we learn a conditional denoiser to recover $\mathbf{z}_0$ from its noised versions. Since $\mathbf{c}$ is a deterministic function of $\mathbf{x}$, conditioning $\epsilon_\theta(\cdot)$ on $\mathbf{c}$ is equivalent to conditioning on $\mathbf{x}$ as in standard conditional diffusion notation.
Using the standard reparameterization, the noised latent at time $t$ can be written as
\begin{equation}
\mathbf{z}_t
=
\sqrt{\bar{\alpha}_t}\mathbf{z}_0
+\sqrt{1-\bar{\alpha}_t}\,\boldsymbol{\epsilon},
\qquad
\boldsymbol{\epsilon}\sim\mathcal{N}(0,\mathbf{I}).
\end{equation}
We parameterize the reverse model by \emph{noise prediction} $\epsilon_\theta(\mathbf{z}_t,\mathbf c,t)$ and form the standard estimator
\begin{equation}
\hat{\mathbf{z}}_0(t)
=
\frac{1}{\sqrt{\bar{\alpha}_t}}
\left(
\mathbf{z}_t-\sqrt{1-\bar{\alpha}_t}\,\epsilon_\theta(\mathbf{z}_t,\mathbf c,t)
\right).
\end{equation}

\begin{figure*}
    \centering
    \includegraphics[scale=0.75]{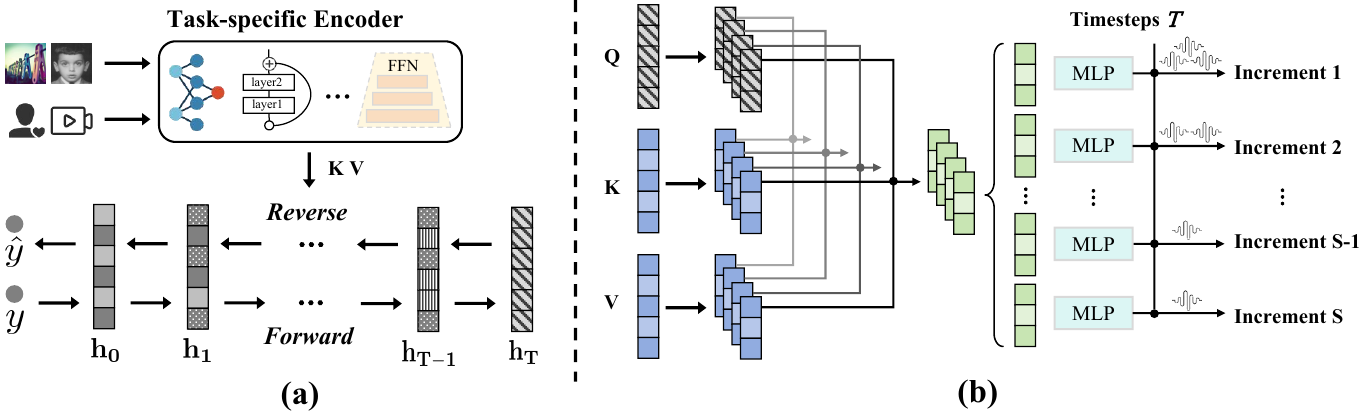}
    \caption{Overview of the proposed Generative Ordinal Regression framework DiffoR. (a) The overall pipeline operates as a conditional generative process. It consists of a forward diffusion process that perturbs the label $y$ into a latent space $\mathbf{h}_T$, and a reverse denoising process that recovers $\hat{y}$ using features from a task-specific encoder. (b) The detailed architecture of the Denoising Module with Multi-scale Increment Aggregation and Dynamic Denoising Perception.}
    \label{fig:framework}
\end{figure*}

\subsection{Dual-Decoupling Strategy}
\subsubsection{Spatial decoupling: Multi-scale Increment Aggregation (MIA)}
MIA implements the \textbf{spatial decoupling} described in the Introduction: instead of predicting $y$ with a single head, we decode the increment vector $\hat{\mathbf B}$ using $S$ parallel \emph{scale-specific heads} $\{\mathcal{D}_1,\dots,\mathcal{D}_S\}$, each specializing in one ordinal increment $b_k$. Each head predicts a bounded scalar increment in $[0,\Delta]$ by
\begin{equation}
\hat b_k \;=\; \Delta\cdot \sigma\!\left(\mathrm{MLP}_k(\mathbf h_k)\right),\qquad k=1,\dots,S,
\label{eq:mia-decoding}
\end{equation}
where $\sigma(\cdot)$ denotes the sigmoid function. Importantly, the heads are not forced to share the same denoising state; instead, DDP below specifies the scale-specific input $\mathbf h_k$ as an aligned denoising state.
This design is motivated by the non-stationary nature of ordinal semantics: coarse-level decisions (e.g., ``which interval'') and fine-level calibration (e.g., ``where within the interval'') are governed by different cues and should not be entangled into a single scalar readout. By representing $y$ as an \emph{ordered sum} of bounded increments (Eq.~\ref{eq:bk-definition}), each head becomes interpretable and ablatable at the interval level, the bounded range $\hat b_k\in[0,\Delta]$ stabilizes decoding and facilitates the order-preserving aggregation (Eq.~\ref{eq:usable-increments}), and the multi-head structure encourages specialization across semantic scales under a fixed overall model size. All heads share the same conditional diffusion backbone and only differ in lightweight head parameters (the $\mathrm{MLP}_k$'s), so the additional cost is $O(S)$ small MLPs on top of one shared denoiser.

\subsubsection{Temporal decoupling: Dynamic Denoising Perception (DDP)}
\label{sec:ddp}
DDP implements the \textbf{temporal decoupling} described in the Introduction: it \emph{synchronizes denoising steps with semantic granularity / feature frequency}. Coarse semantics are dominated by low-frequency structures and are robust under higher noise, while fine semantics require high-frequency details and are better calibrated under lower noise.
Accordingly, DDP aligns semantic granularity to denoising noise levels by assigning each scale head $k$ a dedicated timestep $t_k$:
\begin{equation}
t_1 > t_2 > \cdots > t_S,\qquad t_k \in \{1,\dots,T\}.
\label{eq:tk-schedule}
\end{equation}
Smaller $k$ corresponds to coarser increments and uses larger noise (larger $t_k$), while larger $k$ corresponds to finer increments and uses smaller noise (smaller $t_k$). The schedule $\{t_k\}$ is a simple hyper-parameter (e.g., linear or log-spaced over $[1,T]$).
In our implementation we use a monotone spacing over $[1,T]$, e.g., a simple linear map,
\begin{equation}
t_k \;=\; 1+\left\lfloor \frac{S-k}{S-1}(T-1)\right\rfloor,\qquad k=1,\dots,S,
\label{eq:tk-linear}
\end{equation}
and we also consider log-spaced schedules in ablations. This explicit mapping makes DDP fully reproducible.

Each head consumes its aligned denoised estimate:
\begin{equation}
\mathbf h_k \;=\; \hat{\mathbf z}_0(t_k),
\qquad
\hat b_k \;=\; \Delta\cdot \sigma\!\left(\mathrm{MLP}_k(\mathbf h_k)\right),
\qquad k=1,\dots,S.
\label{eq:ddp-decoding}
\end{equation}
This realizes temporal decoupling: different semantic scales are decoded from different denoising states rather than from a single shared timestep representation.
During inference, we cache $\hat{\mathbf z}_0(t_k)$ when the denoising chain reaches $t_k$ and decode $\hat b_k$ by Eq.~\ref{eq:ddp-decoding} (one pass, negligible overhead).
Besides the standard diffusion loss sampled at a random timestep $t$, we explicitly supervise the scale heads at their aligned timesteps $\{t_k\}$ via $\mathcal L_{\mathrm{mia}}$ (Eq.~\ref{eq:ddp-decoding}). To emphasize DDP during diffusion training, we use a mixture timestep sampler
\begin{equation}
p(t) \;=\; (1-\beta)\cdot \mathrm{Unif}\{1,\dots,T\} \;+\; \beta\cdot \frac{1}{S}\sum_{k=1}^S \delta_{t_k}(t),
\label{eq:t-mix}
\end{equation}
where $\mathrm{Unif}\{1,\dots,T\}$ denotes the uniform distribution over all timesteps and $\delta_{t_k}(t)$ denotes a point mass at $t=t_k$ (i.e., sampling $t_k$ with probability $1$). Equivalently, Eq.~\ref{eq:t-mix} can be implemented by: with probability $1-\beta$, sample $t$ uniformly from $\{1,\dots,T\}$; with probability $\beta$, sample $t$ uniformly from the aligned set $\{t_1,\dots,t_S\}$.
The purpose of this mixture is twofold: (i) the uniform term maintains global coverage of the denoising trajectory, preserving the standard diffusion training behavior; (ii) the aligned-set term increases the training frequency of the DDP-critical timesteps, making the denoised states $\{\hat{\mathbf z}_0(t_k)\}$ more reliable for coarse-to-fine decoding. The hyper-parameter $\beta\in[0,1]$ controls the trade-off between global trajectory learning ($\beta\!\downarrow$) and aligned-step emphasis for DDP ($\beta\!\uparrow$).
The complete training and inference procedure is summarized in Algorithm~\ref{alg:diffior}.

A standard conditional diffusion regressor typically decodes the prediction from a \emph{single} denoising state (often the final sample), i.e.,
\begin{equation}
\text{(single-time, single-head)}\qquad
\hat y_{\mathrm{std}} \;=\; g\!\left(\hat{\mathbf z}_0(t_\star),\,\mathbf c\right),
\label{eq:std-diffusion-decoding}
\end{equation}
for some fixed $t_\star$ (or using the final denoised output). In contrast, DiffoR jointly uses \emph{multiple} aligned timesteps $\{t_k\}$ and \emph{multiple} scale heads to generate a structured increment vector and then aggregates it as follows:
\begin{equation}
\text{(DiffoR)}\quad
\hat b_k=\mathcal D_k\!\left(\hat{\mathbf z}_0(t_k)\right),\ \ 
\hat{\mathbf B}=[\hat b_1,\dots,\hat b_S]^\top,\ \ 
\hat y=\mathrm{Agg}(\hat{\mathbf B}),
\label{eq:our-decoding-summary}
\end{equation}
matching the spatio-temporal decoupling advocated in Sec.~\ref{sec:intro}.

\begin{algorithm}[t]
\caption{DiffoR: training and inference with scale--time aligned decoding}
\label{alg:diffior}
\begin{algorithmic}[1]
\STATE \textbf{Inputs:} sample $(\mathbf x,y)$, intervals $S$ (width $\Delta=1/S$), aligned timesteps $\{t_k\}_{k=1}^S$, weights $\lambda$
\STATE Compute conditional features $\mathbf c \leftarrow E_\psi(\mathbf x)$
\STATE Construct targets $b_k \leftarrow \min\{\max(y-(k-1)\Delta,0),\Delta\}$ for $k=1,\dots,S$ \hfill (Eq.~\ref{eq:bk-definition})
\STATE Sample $\boldsymbol\epsilon\sim\mathcal N(0,\mathbf I)$ and sample a timestep $t$ for diffusion training
\STATE Form $\mathbf z_t \leftarrow \sqrt{\bar\alpha_t}\mathbf z_0+\sqrt{1-\bar\alpha_t}\boldsymbol\epsilon$ \hfill (forward diffusion)
\STATE Predict noise $\hat{\boldsymbol\epsilon}\leftarrow \epsilon_\theta(\mathbf z_t,\mathbf c,t)$ and compute $\mathcal L_{\mathrm{diff}} \leftarrow \|\boldsymbol\epsilon-\hat{\boldsymbol\epsilon}\|_2^2$
\FOR{$k=1$ \TO $S$}
\STATE Form $\mathbf z_{t_k}\leftarrow \sqrt{\bar\alpha_{t_k}}\mathbf z_0+\sqrt{1-\bar\alpha_{t_k}}\boldsymbol\epsilon$ \hfill (reuse $\boldsymbol\epsilon$)
\STATE Compute $\hat{\mathbf z}_0(t_k)\leftarrow \frac{1}{\sqrt{\bar\alpha_{t_k}}}\big(\mathbf z_{t_k}-\sqrt{1-\bar\alpha_{t_k}}\,\epsilon_\theta(\mathbf z_{t_k},\mathbf c,t_k)\big)$
\STATE Decode increment $\hat b_k \leftarrow \Delta\cdot\sigma(\mathrm{MLP}_k(\hat{\mathbf z}_0(t_k)))$ \hfill (Eq.~\ref{eq:ddp-decoding})
\ENDFOR
\STATE Compute $\mathcal L_{\mathrm{mia}}\leftarrow \sum_{k=1}^S \|b_k-\hat b_k\|_2^2$ and total loss $\mathcal L\leftarrow \mathcal L_{\mathrm{diff}}+\lambda\mathcal L_{\mathrm{mia}}$
\STATE \textbf{Inference:} run the reverse denoising chain; cache $\hat{\mathbf z}_0(t_k)$ at aligned timesteps; decode $\hat b_k$; apply truncation (Eq.~\ref{eq:usable-increments}) to obtain $\hat y$
\end{algorithmic}
\end{algorithm}

\subsection{Order-preserving Aggregation and Inference}
\label{sec:inference}
In inference, DiffoR runs the reverse process and obtains the aligned denoised estimates $\{\hat{\mathbf z}_0(t_k)\}_{k=1}^S$ along the denoising trajectory, then decodes $\hat{\mathbf B}$ using \eqref{eq:ddp-decoding}.
The final ordinal prediction is obtained via an additive aggregation with deterministic truncation:
\begin{align}
\tilde b_1 &:= \min\{\hat b_1,\Delta\}, \nonumber\\
\tilde b_k &:= \min\Big\{\hat b_k,\Delta,\big[1-\sum_{j=1}^{k-1}\tilde b_j\big]_+\Big\},\qquad k\ge 2, \label{eq:usable-increments}\\
\hat y &:= \sum_{k=1}^S \tilde b_k, \nonumber
\end{align}
where $[\cdot]_+=\max\{\cdot,0\}$. Intuitively, the nested $\min\{\cdot\}$ enforces feasibility of the increment representation by clipping each predicted increment to both its interval range ($[0,\Delta]$) and the remaining ordinal capacity, ensuring $\tilde b_k\in[0,\Delta]$ and $\hat y\in[0,1]$. This mapping prevents later increments from exceeding the remaining ordinal capacity.
Note that this dual-decoupled formulation is not merely heuristic. Sec.~\ref{sec:theory-main} shows that spatial decoupling can mitigate representation collapse and tighten ordinal error bounds, while temporal decoupling via multi-time inference yields non-worse (and potentially tighter) ordinal guarantees.

\begin{table*}[!ht]
\centering
\caption{Results of Image Aesthetics Assessment on four datasets.}
\label{tab:IAA}
\renewcommand{\arraystretch}{0.7}
\resizebox{0.95\textwidth}{!}{
\begin{tabular}{ll|cccccccc}
\toprule
Dataset & Metric
& RAPID~\citep{lu2014rapid}
& AADB~\citep{kong2016photo}
& PAM~\citep{ren2017personalized}
& NIMA~\citep{talebi2018nima}
& TANet~\citep{he2022rethinking}
& Mamba~\citep{gao2024aesmamba}
& GoR~\citep{ma2026gor}
& \textbf{DiffoR} \\
\midrule

\multirow{4}{*}{ICAA17K}
& MAE $\downarrow$
& 0.742 & 0.714 & 0.707 & 0.696 & 0.679 & 0.613 & \underline{0.584} & \textbf{0.552} \\
& XAUC $\uparrow$
& 0.642 & 0.666 & 0.673 & 0.684 & 0.701 & 0.766 & \underline{0.791} & \textbf{0.841} \\
& LCC $\uparrow$
& 0.516 & 0.531 & 0.539 & 0.546 & 0.559 & 0.614 & \underline{0.682} & \textbf{0.759}\\
& SRCC $\uparrow$
& 0.508 & 0.520 & 0.525 & 0.533 & 0.547 & 0.629 & \underline{0.679} & \textbf{0.765} \\

\midrule
\multirow{4}{*}{TAD66K}
& MAE $\downarrow$
& 1.766 & 1.463 & 1.314 & 1.422 & 1.081 & 1.035 & \underline{0.996} & \textbf{0.914} \\
& XAUC $\uparrow$
& 0.510 & 0.523 & 0.534 & 0.511 & 0.649 & 0.666 & \underline{0.677} & \textbf{0.694} \\
& LCC $\uparrow$
& 0.332 & 0.400 & 0.440 & 0.405 & 0.452 & 0.482 & \underline{0.541} & \textbf{0.553}\\
& SRCC $\uparrow$
& 0.314 & 0.379 & 0.422 & 0.390 & 0.428 & 0.468 & \underline{0.513} & \textbf{0.526}\\

\midrule
\multirow{4}{*}{AVA}
& MAE $\downarrow$
& 0.978 & 0.784 & 0.614 & 0.715 & 0.577 & 0.522 & \underline{0.395} & \textbf{0.388} \\
& XAUC $\uparrow$
& 0.513 & 0.534 & 0.619 & 0.532 & 0.659 & 0.697 & \underline{0.751} & \textbf{0.766} \\
& LCC $\uparrow$
& 0.336 & 0.431 & 0.531 & 0.472 & 0.568 & 0.663 & \underline{0.726} & \textbf{0.734} \\
& SRCC $\uparrow$
& 0.327 & 0.408 & 0.521 & 0.447 & 0.554 & 0.656 & \underline{0.701} & \textbf{0.725} \\

\midrule
\multirow{4}{*}{SPAQ}
& MAE $\downarrow$
& 1.089 & 1.083 & 1.073 & 1.076 & 1.047 & 0.988 & \underline{0.872} & \textbf{0.768} \\
& XAUC $\uparrow$
& 0.700 & 0.704 & 0.710 & 0.708 & 0.728 & 0.752 & \underline{0.765} & \textbf{0.805} \\
& LCC $\uparrow$
& 0.657 & 0.665 & 0.669 & 0.671 & 0.684 & 0.726 & \underline{0.743} & \textbf{0.799} \\
& SRCC $\uparrow$
& 0.613 & 0.616 & 0.622 & 0.620 & 0.638 & 0.690 & \underline{0.723} & \textbf{0.785} \\

\bottomrule
\end{tabular}
}
\begin{tablenotes}
\footnotesize
\item[*] The best and second best results are marked in \textbf{bold} and \underline{underline}, respectively.
$\uparrow$ indicates that higher values are better, while $\downarrow$ indicates the opposite.
\end{tablenotes}
\end{table*}

\subsection{Optimization Objective}
DiffoR is trained by jointly optimizing the diffusion denoising objective and the multi-scale increment supervision.
\subsubsection{Latent manifold (diffusion) loss.}
We adopt the standard noise-prediction objective:
\begin{equation}
\mathcal{L}_{\mathrm{diff}}
\;:=
\mathbb E_{t,\boldsymbol\epsilon}\Big[\big\|\boldsymbol\epsilon-\epsilon_\theta(\mathbf z_t,\mathbf c,t)\big\|_2^2\Big],
\end{equation}
where $t$ is sampled from a pre-defined distribution over $\{1,\dots,T\}$. To better match DDP, we use a mixture that emphasizes the aligned timesteps $\{t_k\}$ while still covering the full trajectory.

\subsubsection{Multi-scale aggregation loss.}
To supervise the decoded multivariate representation, we define a scale-wise regression objective:
\begin{equation}
\mathcal{L}_{\mathrm{mia}}
\;:=
\sum_{k=1}^{S}
\big\| b_k - \hat{b}_k \big\|_2^2,
\end{equation}
encouraging different heads to specialize in distinct ordinal scales (coarse-to-fine).

\subsubsection{Overall objective.} 
The whole training objective is
\begin{equation}
\mathcal{L}
\;:=
\mathcal{L}_{\mathrm{diff}}
+
\lambda\,\mathcal{L}_{\mathrm{mia}},
\label{eq:total_loss}
\end{equation}
where $\lambda$ controls the trade-off between denoising quality and multi-scale ordinal supervision.

The above design directly supports the statements in the Introduction section and our contributions: spatial decoupling (MIA) improves representational capacity for hierarchical ordinal semantics, while temporal decoupling (DDP) provides multi-time, frequency-aligned denoising states for coarse-to-fine calibration. Sec.~\ref{sec:theory-main} formalizes how these mechanisms translate into tighter ordinal error bounds and improved optimization behavior.

\subsection{Theoretical Results}
\label{sec:theory-main}
We provide theoretical results connecting spatio-temporal decoupling to \emph{ordinal prediction quality}. For simplicity we present the scalar case (extension to vectors follows by replacing squares with squared norms). We establish: (i) spatial decoupling mitigates representation collapse and tightens the ordinal bound; (ii) temporal decoupling yields a non-worse ordinal bound via multi-time selection/fusion; (iii) MH-MT admits linear convergence under standard smoothness and PL conditions. Full proofs are in Appendix~\ref{app:paradigms}.


\begin{theorem}[Spatial decoupling mitigates collapse and tightens the ordinal bound]
\label{thm:spatial-decoupling-main}
Consider two representations over the same samples: a single-embedding representation matrix $U^{\mathrm{SE}}$ and a multi-embedding (concatenated) representation matrix $U^{\mathrm{ME}}$ with the \emph{same total dimension} (e.g., $4\times 32$ vs.\ $1\times 128$).
If the multi-embedding representation spans a larger subspace,
\[
\mathrm{col}(U^{\mathrm{SE}})\subseteq \mathrm{col}(U^{\mathrm{ME}}),
\]
then the best linear readout risk (least-squares) for predicting the continuous target $z_0$ is non-increasing:
\[
\mathcal R(U^{\mathrm{ME}})\le \mathcal R(U^{\mathrm{SE}}),
\qquad
\mathcal R(U)\triangleq \min_w \frac{1}{n}\|Uw-z_0\|_2^2.
\]
If the inclusion is strict and the target contains a component explainable by $\mathrm{col}(U^{\mathrm{ME}})$ but not by $\mathrm{col}(U^{\mathrm{SE}})$, then $\mathcal R(U^{\mathrm{ME}})< \mathcal R(U^{\mathrm{SE}})$.
Consequently, via the ordinal MSE-to-error inequality, improving representation fit tightens the ordinal error upper bound.
\end{theorem}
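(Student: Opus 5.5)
The plan is to recast least-squares readout risk as a squared distance to a column space, and then use that nested column spaces give nested orthogonal projections. For any $U\in\mathbb R^{n\times d}$, let $P_U$ be the orthogonal projector onto $\col(U)$ (for instance $P_U=UU^{+}$ with the Moore--Penrose pseudoinverse, so no full-rank assumption is needed). As $w$ ranges over its domain, $Uw$ ranges exactly over $\col(U)$. Hence
\[
\mathcal R(U)=\frac1n\min_{v\in\col(U)}\|v-z_0\|_2^2=\frac1n\|(I-P_U)z_0\|_2^2 .
\]
This is just the projection theorem in $\mathbb R^n$, applied to normal equations that are always consistent.

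\textbf{Non-strict inequality.} If $\col(U^{\mathrm{SE}})\subseteq\col(U^{\mathrm{ME}})$, then the feasible set of the minimization for $U^{\mathrm{SE}}$ is a subset of the feasible set for $U^{\mathrm{ME}}$. So $\mathcal R(U^{\mathrm{ME}})\le\mathcal R(U^{\mathrm{SE}})$ immediately. The total dimension plays no role here. I will remark that the equal-dimension hypothesis only serves to make the comparison fair, not to drive the argument.

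\textbf{Strict inequality.} Write $V_1=\col(U^{\mathrm{SE}})\subseteq V_2=\col(U^{\mathrm{ME}})$ and $P_i$ for the corresponding projectors. Since $V_1\subseteq V_2$, we have $P_1P_2=P_2P_1=P_1$. The residual then splits orthogonally:
\[
(I-P_1)z_0=(I-P_2)z_0+(P_2-P_1)z_0 .
\]
The first term lies in $V_2^\perp$ and the second in $V_2\cap V_1^\perp$, so Pythagoras gives
\[
n\,\mathcal R(U^{\mathrm{SE}})=n\,\mathcal R(U^{\mathrm{ME}})+\|(P_2-P_1)z_0\|_2^2 .
\]
The main obstacle is turning the informal hypothesis, that "the target contains a component explainable by $V_2$ but not $V_1$", into something precise. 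I will formalize it as $(P_2-P_1)z_0\neq 0$, i.e.\ $z_0$ has a nonzero projection onto $V_2\ominus V_1$. That condition is exactly what the identity needs for strictness. Strict inclusion of the column spaces alone does not suffice: $z_0$ could happen to lie in $V_1$. I will state this caveat explicitly.

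\textbf{Ordinal consequence.} I will invoke the ordinal MSE-to-error inequality from the appendix in the form: ordinal error $\le \Phi(\mathrm{MSE})$ for some nondecreasing $\Phi$. For example, a Markov/Chebyshev-type bound gives $\Pr(|\hat y-y|\ge\Delta/2)\le 4\,\mathrm{MSE}/\Delta^2$. Monotonicity of $\Phi$ then transfers the inequality between the risks to the bounds, so $\Phi(\mathcal R(U^{\mathrm{ME}}))\le\Phi(\mathcal R(U^{\mathrm{SE}}))$, with strict inequality whenever $\Phi$ is strictly increasing.
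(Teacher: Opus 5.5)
Your proposal is correct and follows essentially the paper's own argument. Least squares is treated as orthogonal projection onto $\col(U)$, the residual cannot increase when the column spaces are nested, and the risk is then substituted into the Markov-type bound $\mathbb{P}(\hat y\neq y)\le 4\,\mathbb{E}[(\hat z-z_0)^2]/\gamma^2$. Your Pythagorean identity $n\,\mathcal R(U^{\mathrm{SE}})=n\,\mathcal R(U^{\mathrm{ME}})+\|(P_2-P_1)z_0\|_2^2$ is a mild sharpening: it shows $(P_2-P_1)z_0\neq 0$ is exactly the condition for strictness, whereas the paper's condition (the whole residual $(I-P_{U^{\mathrm{SE}}})z_0$ lying in $\col(U^{\mathrm{ME}})$) is a special case that even forces $\mathcal R(U^{\mathrm{ME}})=0$.
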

\noindent\textbf{Proof sketch.}
Least squares is orthogonal projection; projecting onto a larger subspace cannot increase residual norm (strictly decreases when the residual becomes representable).

\begin{theorem}[Temporal decoupling improves the best achievable ordinal-error bound]
\label{thm:temporal-decoupling-main}
Let $\hat z_0(t)$ be the diffusion-derived estimator constructed from a noise predictor at time $t$ (Appendix, Eq.~(9)).
Assume threshold decoding with minimum gap $\gamma>0$ (Appendix, Assumption~\ref{ass:thresholds}) and the standard diffusion forward process.
Then for any fixed time $t$,
\[
\mathbb P(\hat y(t)\ne y)\ \le\ \frac{4}{\gamma^2}\,\mathbb E[(\hat z_0(t)-z_0)^2],
\]
and the regression MSE satisfies the exact identity
\[
\mathbb E[(\hat z_0(t)-z_0)^2]
:=\frac{1-\bar{\alpha}_t}{\bar{\alpha}_t}\,\mathbb E[(\hat\epsilon(t)-\epsilon)^2].
\]
Moreover, given a set of time points $\mathcal T=\{t_1,\dots,t_M\}$, define the oracle-selected predictor
$t^\dagger\in\arg\min_{t\in\mathcal T}\mathbb E[(\hat z_0(t)-z_0)^2]$.
Then its ordinal error bound is no worse than any fixed single-time baseline $t_\star\in\mathcal T$:
\[
\mathbb P(\hat y(t^\dagger)\ne y)\ \le\ \frac{4}{\gamma^2}\,\mathbb E[(\hat z_0(t^\dagger)-z_0)^2]
\ \le\ \frac{4}{\gamma^2}\,\mathbb E[(\hat z_0(t_\star)-z_0)^2].
\]
\end{theorem}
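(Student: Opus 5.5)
The proof has three parts: a Markov/Chebyshev step for the misclassification bound, an algebraic identity from the reparameterization, and a monotonicity argument for the oracle selection.

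\emph{Misclassification bound.} We use the threshold-decoding assumption (Assumption~\ref{ass:thresholds}). The ordinal label is obtained by comparing a scalar to ordered thresholds whose consecutive gaps are at least $\gamma$. The true $z_0$ is taken to lie at least $\gamma/2$ from every threshold, e.g.\ at the centre of its bin. An error $\hat y(t)\neq y$ then requires $\hat z_0(t)$ to cross a threshold, which forces $|\hat z_0(t)-z_0|\ge\gamma/2$. Hence
\[
\mathbb P(\hat y(t)\ne y)\le\mathbb P\bigl(|\hat z_0(t)-z_0|\ge\gamma/2\bigr)\le\frac{4}{\gamma^2}\,\mathbb E[(\hat z_0(t)-z_0)^2],
\]
where the last step is Markov's inequality applied to the squared deviation. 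This holds for every fixed $t$ and needs nothing about the estimator.

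\emph{MSE identity.} Write $\hat\epsilon(t)=\epsilon_\theta(z_t,\mathbf c,t)$. Substituting the forward reparameterization $z_t=\sqrt{\bar\alpha_t}z_0+\sqrt{1-\bar\alpha_t}\epsilon$ into the definition of $\hat z_0(t)$ gives
\[
\hat z_0(t)=z_0+\frac{\sqrt{1-\bar\alpha_t}}{\sqrt{\bar\alpha_t}}\,\bigl(\epsilon-\hat\epsilon(t)\bigr).
\]
Squaring and taking expectations over $(x,z_0,\epsilon)$ produces the stated identity with factor $(1-\bar\alpha_t)/\bar\alpha_t$. This is exact because the estimator is affine in the noise error, so no assumption on the predictor is needed.

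\emph{Oracle selection.} Define $m(t)=\mathbb E[(\hat z_0(t)-z_0)^2]$. Since $t^\dagger$ minimizes $m$ over the finite set $\mathcal T$, we have $m(t^\dagger)\le m(t_\star)$ for any $t_\star\in\mathcal T$. Apply the first bound at the fixed time $t^\dagger$ and chain it with this inequality, multiplied by $4/\gamma^2$.

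One point must be made explicit: $t^\dagger$ is chosen from population MSEs and does not depend on the sample. It is therefore a deterministic time, and the fixed-$t$ bound applies to it directly.

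\emph{Main obstacle.} The delicate step is the threshold geometry. A minimum gap $\gamma$ only yields the factor $4/\gamma^2$ if $z_0$ is at distance at least $\gamma/2$ from every decision boundary. I would state this explicitly as part of how the assumption is read, taking latent targets at bin centres. If the assumption instead allows $z_0$ near a boundary, the argument must use the margin it actually guarantees. Everything else is routine algebra.
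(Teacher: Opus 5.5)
Your proposal is correct and follows the paper's own argument: the appendix proves the misclassification bound by Markov's inequality under exactly the margin hypothesis you flag ($z_0$ at least $\gamma/2$ from the two thresholds of its true interval). It obtains the MSE identity by substituting the forward reparameterization and gets the oracle bound directly from the definition of the minimizer. Your remark that the identity holds pointwise after squaring, so no unbiasedness of the predictor is needed, is accurate and slightly cleaner than the paper's proof, which unnecessarily invokes $\mathbb{E}[\epsilon-\epsilon_\Theta]=0$.
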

\noindent\textbf{Proof sketch.}
The first inequality is the ordinal MSE-to-error bound; the MSE identity is a direct algebraic consequence of the diffusion reparameterization; the oracle bound follows from the minimizer definition.

\begin{theorem}[Linear convergence rate of DiffoR on the multi-time objective]
\label{thm:convergence-main}
Consider the multi-time training objective
\[
F_{\mathrm{MH}}(\vartheta)=\sum_{m=1}^M \lambda_m f_m(\theta_m),\qquad \vartheta=(\theta_1,\dots,\theta_M),\ \lambda_m>0,
\]
where each $f_m$ is differentiable.
Assume each $f_m$ is $L_m$-smooth and satisfies the Polyak--\L{}ojasiewicz (PL) inequality with constant $\mu_m>0$:
\[
\frac{1}{2}\|\nabla f_m(\theta)\|^2 \ge \mu_m\bigl(f_m(\theta)-f_m^\star\bigr),\qquad f_m^\star=\inf_\theta f_m(\theta).
\]
Let $L\triangleq \sum_{m=1}^M \lambda_m L_m$, $\mu_{\min}\triangleq \min_m \mu_m$, and $\lambda_{\min}\triangleq \min_m \lambda_m$.
Then gradient descent with step size $\eta\le 1/L$ yields the linear rate
\[
F_{\mathrm{MH}}(\vartheta^{k})-F_{\mathrm{MH}}^\star
\le
\bigl(1-\eta\,\mu_{\min}\lambda_{\min}\bigr)^k\bigl(F_{\mathrm{MH}}(\vartheta^{0})-F_{\mathrm{MH}}^\star\bigr),
\]
\end{theorem}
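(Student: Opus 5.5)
The plan is the standard smoothness-plus-PL argument for gradient descent, applied to $F_{\mathrm{MH}}$ itself. Because the blocks $\theta_m$ are disjoint, $F_{\mathrm{MH}}$ is separable, and this separability does most of the work.

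\emph{Step 1 (smoothness of $F_{\mathrm{MH}}$).} The gradient is block-diagonal: $\nabla F_{\mathrm{MH}}(\vartheta)=(\lambda_1\nabla f_1(\theta_1),\dots,\lambda_M\nabla f_M(\theta_M))$. Each block is $\lambda_m L_m$-Lipschitz, so $F_{\mathrm{MH}}$ is $\max_m \lambda_m L_m$-smooth. Since $\max_m\lambda_mL_m\le\sum_m\lambda_mL_m=L$, it is in particular $L$-smooth. With $\eta\le 1/L$, the descent lemma then gives
\[
F_{\mathrm{MH}}(\vartheta^{k+1})\le F_{\mathrm{MH}}(\vartheta^{k})-\tfrac{\eta}{2}\|\nabla F_{\mathrm{MH}}(\vartheta^{k})\|^2 .
\]

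\emph{Step 2 (PL for $F_{\mathrm{MH}}$).} Separability gives $F_{\mathrm{MH}}^\star=\sum_m\lambda_m f_m^\star$. For the gradient norm,
\[
\tfrac12\|\nabla F_{\mathrm{MH}}\|^2=\sum_m \tfrac12\lambda_m^2\|\nabla f_m(\theta_m)\|^2\ge \sum_m\lambda_m^2\mu_m\bigl(f_m(\theta_m)-f_m^\star\bigr).
\]
We then lower bound $\lambda_m^2\mu_m\ge \lambda_{\min}\mu_{\min}\,\lambda_m$, which yields
\[
\tfrac12\|\nabla F_{\mathrm{MH}}\|^2\ge\mu_{\min}\lambda_{\min}\bigl(F_{\mathrm{MH}}-F_{\mathrm{MH}}^\star\bigr).
\]
So $F_{\mathrm{MH}}$ is PL with constant $\mu_{\min}\lambda_{\min}$.

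\emph{Step 3 (contraction).} Combining Steps 1 and 2 gives
\[
F_{\mathrm{MH}}(\vartheta^{k+1})-F_{\mathrm{MH}}^\star\le(1-\eta\mu_{\min}\lambda_{\min})\bigl(F_{\mathrm{MH}}(\vartheta^k)-F_{\mathrm{MH}}^\star\bigr).
\]
Unrolling over $k$ iterations gives the stated bound. The contraction factor lies in $[0,1)$ because PL combined with $L$-smoothness forces $\mu_m\lambda_m\le\lambda_mL_m\le L$, hence $\eta\mu_{\min}\lambda_{\min}\le \mu_{\min}\lambda_{\min}/L\le1$.

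The main obstacle is Step 2. The inequality $\lambda_m^2\ge\lambda_{\min}\lambda_m$ is what produces exactly the constant $\mu_{\min}\lambda_{\min}$, and I would check it carefully, together with the bookkeeping of infima. If the $f_m$ are instead intended to share parameters, separability fails and PL of the sum no longer follows blockwise. In that case I would fall back on assuming PL for the aggregate objective, or restrict the statement to the disjoint-block (multi-head) case that the notation $\vartheta=(\theta_1,\dots,\theta_M)$ indicates.
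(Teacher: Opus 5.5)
Your proof is correct and follows the paper's argument essentially step for step. The paper also uses the descent lemma for $\eta\le 1/L$, obtains a PL inequality for $F_{\mathrm{MH}}$ from the block-separated gradient together with per-block PL and $\lambda_m^2\ge\lambda_{\min}\lambda_m$, and then unrolls the contraction. Your Step 1 (showing $F_{\mathrm{MH}}$ is $\max_m\lambda_mL_m$-smooth, hence $L$-smooth) and your check that $1-\eta\mu_{\min}\lambda_{\min}\in[0,1)$ spell out two points the paper uses without justification.
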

where
$F_{\mathrm{MH}}^\star=\sum_{m=1}^M \lambda_m f_m^\star.$

\noindent\textbf{Proof sketch.}
Using smoothness, one obtains a standard descent inequality
$F(\vartheta^{k+1})\le F(\vartheta^{k})-\frac{\eta}{2}\|\nabla F(\vartheta^{k})\|^2$ for $\eta\le 1/L$.
For MH-MT, the gradient is block-separated across heads, so $\|\nabla F_{\mathrm{MH}}\|^2=\sum_m \lambda_m^2\|\nabla f_m(\theta_m)\|^2$ with no cross terms; applying per-task PL and $\lambda_m^2\ge \lambda_{\min}\lambda_m$ yields
$\frac{1}{2}\|\nabla F_{\mathrm{MH}}\|^2\ge \mu_{\min}\lambda_{\min}(F_{\mathrm{MH}}-F_{\mathrm{MH}}^\star)$, giving the stated recursion.

\begin{remark}[Interpretation and connection to our model design]
\label{rem:interpretation-main}
Theorem~\ref{thm:spatial-decoupling-main} formalizes the benefit of \textbf{spatial decoupling}: splitting the representation into multiple embedding sets promotes \emph{subspace diversification} (less collapse), which improves the best-fit continuous regression error and therefore tightens the ordinal bound.
Theorem~\ref{thm:temporal-decoupling-main} formalizes the benefit of \textbf{temporal decoupling}: multi-time, time-specific modeling provides a strictly richer inference strategy space (selection/fusion) and yields a non-worse ordinal bound than any single fixed time step.
Theorem~\ref{thm:convergence-main} complements the above with a \textbf{trainability} guarantee: parameter separation across time steps makes the multi-time objective inherit a global PL-type descent behavior from per-time objectives, yielding an explicit linear convergence rate under standard conditions.
Together, these results justify why the spatio-temporal decoupled generative ordinal regression architecture is beneficial: it improves representational capacity, inference flexibility, and optimization robustness in ways that provably translate into tighter ordinal prediction guarantees.
\end{remark}


\begin{table*}[t]
  \centering
  \caption{Facial age estimation results on four benchmarks}
  \label{tab:age}
  \vspace{-1em}
  \resizebox{0.8\linewidth}{!}{
  \renewcommand{\arraystretch}{0.85}
    \begin{tabular}{ccccccccc}
      \toprule
      \multirow{2}{*}{Method}
        & \multicolumn{2}{c}{UTKFace}
        & \multicolumn{2}{c}{FG-NET}
        & \multicolumn{2}{c}{MORPH}
        & \multicolumn{2}{c}{CACD} \\
      \cmidrule(lr){2-3} \cmidrule(lr){4-5}
      \cmidrule(lr){6-7} \cmidrule(lr){8-9}
      & MAE $\downarrow$ & CS(\%) $\uparrow$ 
      & MAE $\downarrow$ & CS(\%) $\uparrow$ 
      & MAE $\downarrow$ & CS(\%) $\uparrow$ 
      & MAE $\downarrow$ & CS(\%) $\uparrow$ \\
      \midrule
      OR-CNN~\citep{OR-CNN}
        & 4.40 & 63.67 & 5.09 & 83.80 & 2.83 & 61.97 & 4.01 & 73.41 \\
      DLDL~\citep{DLDL}
        & 4.39 & 63.65 & 5.26 & 83.83 & 2.81 & 62.43 & 3.96 & 73.37 \\
      SORD~\citep{SORD}
        & 4.36 & 64.25 & 5.59 & 82.83 & 2.81 & 61.31 & 3.96 & 73.48 \\
      Mean-Var.~\citep{meanvar}
        & 4.42 & 63.36 & 5.45 & 83.43 & 2.83 & 62.87 & 4.07 & 72.98 \\
      Unimodal~\citep{Unimodal}
        & 4.47 & 62.67 & 5.13 & 83.97 & 2.78 & 63.15 & 4.10 & 73.55 \\
      FaRL~\citep{FaRL}
        & 3.87 & 65.38 & 4.95 & 84.52 & 3.04 & 63.49 & 3.96 & 74.18 \\
      GoR~\citep{ma2026gor}
        & \underline{3.43} & \underline{66.58} & \underline{4.68} & \underline{85.66} & \underline{2.69} & \underline{64.95} & \underline{3.73} & \underline{75.29} \\
      \textbf{DiffoR}
        & \textbf{3.39} & \textbf{67.23} & \textbf{4.29} & \textbf{87.35} & \textbf{2.58} & \textbf{66.79} & \textbf{3.64} & \textbf{76.88} \\
      \bottomrule
    \end{tabular}
  }
\end{table*}

\section{Experiments}
In this section, we first evaluate DiffoR’s overall performance across diverse domains, followed by an in-depth analysis of architectural choices, distributional visualizations, and component ablations to elucidate its underlying mechanisms.
We employ a comprehensive set of metrics, including Mean Absolute Error (MAE), Cumulative Score (CS), XAUC~\citep{d2q}, Linear Correlation Coefficient (LCC), and Spearman’s Rank Correlation Coefficient (SRCC). Due to space limit, detailed metric definitions, implementation specifics, and supplementary results are provided in Appendix~\ref{app:experiment}.

\subsection{Overall Performance across Domains}
\subsubsection{Image Aesthetics Assessment (IAA)}
\label{sec:iaa}
\paragraph{Setting.}
Following the protocol in~\citep{he2023thinking}, we benchmark DiffoR against 14 representative baselines on four standard datasets: AVA~\citep{murray2012ava}, TAD66K~\citep{he2022rethinking},  ICAA17K~\citep{he2023thinking}, and SPAQ~\citep{fang2020perceptual}. Performance is assessed using MAE, XAUC, LCC, and SRCC. 
Due to space limit, here we present only the results of six top-performing methods in Tab.~\ref{tab:IAA}. The complete results are in Appendix~\ref{app:IAA}.
For all experiments, we employ a ResNet50~\citep{he2016deep} backbone as the encoder.

\subsubsection{Watch Time Prediction (WTP)}
\begin{table}[t]
\centering
\caption{Performance comparison among different approaches on KuaiRec and KuaiRand.}
\label{tab:watch_time}
\resizebox{0.46\textwidth}{!}{%
\renewcommand{\arraystretch}{0.85}
\begin{tabular}{c|cc|cc}
\toprule
\multirow{2}{*}{Method}
& \multicolumn{2}{c|}{KuaiRec}
& \multicolumn{2}{c}{KuaiRand} \\
\cmidrule(lr){2-3}\cmidrule(lr){4-5}
& MAE~$\downarrow$ & XAUC~$\uparrow$
& MAE~$\downarrow$ & XAUC~$\uparrow$ \\
\midrule


D2Co~\cite{d2co}
& 3.2633 & 0.5895
& 20.7854 & 0.6547 \\

CWM~\cite{cwm}
& 3.3532 & 0.5899
& 19.6351 & 0.6668 \\

D2Q~\cite{d2q}
& 3.2696 & 0.6043
& 19.4258 & \underline{0.6715} \\

TPM~\cite{tpm}
& 3.4584 & 0.5819
& 22.5950 & 0.6303 \\

CREAD~\cite{cread}
& 3.2290 & 0.6119
& 19.8087 & 0.6678 \\

PTPM~\cite{PTPM}
& 3.2865 & 0.6033
& 20.6584 & 0.6679 \\

SWaT~\cite{swat}
& 3.3496 & 0.5888
& 22.3353 & 0.6515 \\

GoR~\cite{ma2026gor}
& 3.1985 & \underline{0.6117}
& \underline{19.2742} & 0.6682 \\

EGMN~\cite{egmn}
& \underline{3.1890} & 0.6098
& 19.3246 & 0.6682 \\

\textbf{DiffoR}
& \textbf{3.1427} & \textbf{0.6126}
& \textbf{19.1066} & \textbf{0.6763} \\

\bottomrule
\end{tabular}%
}
\vspace{-1em}
\end{table}

\paragraph{Performance.}
Tab.~\ref{tab:IAA} highlights DiffoR's dominance across all metrics.
Even with a generic ResNet50, DiffoR surpasses all SOTA methods equipped with specialized architectures by a significant margin in both ordinal-sensitive and accuracy metrics.
Considering the pivotal role of visual features in IAA~\citep{he2022rethinking}, our continuous generative paradigm not only excels in ordinal modeling but also demonstrates robust encoder-agnostic generalization.

\subsubsection{Facial Age Estimation (FAE)}
\paragraph{Setting.}
FAE aims to predict chronological age from facial imagery by analyzing visual cues. 
Adhering to the evaluation protocol established in~\citep{FaRL}, we benchmark DiffoR on four standard datasets: UTKFace~\citep{zhang2017age}, FG-NET~\citep{993553}, MORPH~\citep{ricanek2006morph}, and CACD~\citep{chen2014cross}, using Mean Absolute Error (MAE) and Cumulative Score (CS) with a tolerance level of $L=5$ as metrics, in comparison with 7 SOTAs including OR-CNN~\citep{OR-CNN}, DLDL~\citep{DLDL}, SORD~\citep{SORD}, Mean-Var.~\citep{meanvar}, Unimodal~\citep{Unimodal}, FaRL~\citep{FaRL}, and GoR~\citep{ma2026gor}. For all experiments, we employ a standard ResNet50 backbone as the encoder.

\paragraph{Performance.}
As presented in Tab.~\ref{tab:age}, DiffoR achieves SOTA performance, significantly outperforming all baselines across all datasets and metrics. Specifically, DiffoR delivers substantial gains, with MAE reductions ranging from 4.08\% (MORPH) to 8.33\% (FG-NET) and CS improvements between 1.59\% (CACD) and 0.65\% (UTKFace).
These consistent improvements underscore DiffoR's robust generalization capability across diverse ordinal distributions.

\paragraph{Setting.}
We conduct evaluations on two publicly available datasets KuaiRec~\citep{gao2022kuairec} and KuaiRand~\citep{gao2022kuairand}, which are both collected from real-world recommender platforms.
Using the Feed-Forward Network (FFN) as an encoder~\citep{ma2024generative}, we report MAE and XAUC.

\paragraph{Performance.}
As shown in Tab.~\ref{tab:watch_time}, DiffoR surpasses all nine baselines. Against the runner-up, it reduces MAE by up to 1.45\% on KuaiRec.
This dominance extends to the KuaiRand, where DiffoR achieves a 0.168 MAE drop and a 0.71\% XAUC gain.
These results not only validate the method's efficacy, but also highlight its practical value for optimizing real-world recommendation systems.

\subsubsection{Life Time Value Prediction (LTV)}
\begin{table}[t]
  \centering
  \caption{Performance comparison on LTV datasets.}
  \vspace{-1em}
  \resizebox{0.94\linewidth}{!}{
  \renewcommand{\arraystretch}{0.9}
    \begin{tabular}{c|cc|cc}
      \toprule
      \multirow{2}{*}{Method} & \multicolumn{2}{c|}{Criteo-SSC} & \multicolumn{2}{c}{Kaggle} \\
      & MAE~$\downarrow$ & SRCC~$\uparrow$ & MAE~$\downarrow$ & SRCC~$\uparrow$ \\
      \midrule
      Two-stage~\citep{drachen2018or} & 21.719 & 0.2386 & 74.782 & 0.431 \\
      MTL-MSE~\citep{ma2018entire}    & 21.190 & 0.2478 & 74.065 & 0.433 \\
      ZILN~\citep{wang2019deep}       & 20.880 & 0.2434 & 72.528 & 0.524 \\
      MDME~\citep{li2022billion}      & 16.598 & 0.2269 & 72.900 & 0.516 \\
      MDAN~\citep{liu2024mdan}        & 20.030 & 0.2470 & 73.940 & 0.437 \\
      OptDist~\citep{weng2024optdist} & 15.784 & 0.2505 & 70.929 & 0.525 \\
      HiLTV~\citep{xu2025hiltv} & 14.764 & 0.2645 & 69.331 & 0.512 \\
      GoR~\citep{ma2026gor} & \underline{12.996} & \underline{0.3026} & \underline{67.035} & \underline{0.533} \\
      \textbf{DiffoR} & \textbf{12.533} & \textbf{0.3066} & \textbf{66.522} & \textbf{0.557} \\
      \bottomrule
    \end{tabular}
  }
  \vspace{-1em}
  \label{tab:ltv}
\end{table}
\paragraph{Setting.}
We benchmark DiffoR on the Criteo-SSC and Kaggle datasets following~\citep{weng2024optdist}, reporting MAE and SRCC.
An FFN-based encoder (identical to the WTP task) is employed.
Detailed specifications of the encoder, baselines, and dataset statistics are provided in Appendix~\ref{app:ltv}.

\paragraph{Performance.}
As presented in Tab.~\ref{tab:ltv}, DiffoR consistently outperforms eight existing methods across both ordinal (SRCC) and numeric (MAE) metrics. 
On Criteo-SSC, DiffoR surpasses GoR, reducing MAE by 3.56\% and improving SRCC by 1.32\%.
on Kaggle, DiffoR achieves a 0.531 reduction in MAE and a 4.5\% boost in SRCC compared to GoR, substantiating the superiority of DiffoR.

\subsection{Further Analysis}
\subsubsection{Ablation Study}
\paragraph{Module Contribution}
We investigate the effects of our core components: Multi-scale Increment Aggregation (MIA) and Dynamic Denoising Perception (DDP).
As shown in Tab.~\ref{tab:ablation}, removing MIA (Row (d)) leads to a sharp performance drop (e.g., SRCC $\downarrow$ 0.1), highlighting its necessity in capturing hierarchical ordinal dependencies.
Similarly, discarding DDP (Row (e)) degrades performance, confirming the importance of synchronizing denoising steps with feature frequencies.
The removal of both modules results in the worst performance (Row (f)), verifying that MIA and DDP function synergistically to enable robust learning.

\paragraph{Architecture Analysis}
To validate the universality of our paradigm, we replace the diffusion backbone with Flow Matching (Row (b) in Tab.~\ref{tab:ablation}). The comparable performance confirms that the efficacy stems from the Continuous Generative Ordinal Regression formulation itself, rather than a specific generative model.
Furthermore, replacing the Transformer architecture with a simple MLP-based structure (Row (c) in Tab.~\ref{tab:ablation}) causes a performance dip due to the loss of attention-based feature interaction. However, this variant still outperforms the SOTA baselines, demonstrating that our generative framework provides a fundamental improvement robust to architectural simplifications.

\begin{table}[t]
    \centering
    \caption{Ablation study of key modules and generative architectures on the ICAA17K dataset under the IAA task.}
    \label{tab:ablation}
    \setlength{\tabcolsep}{3.5pt}
    \renewcommand{\arraystretch}{1.0}
    \begin{tabular}{cc|cccc}
        \toprule
        \multicolumn{2}{c|}{Variant}
        & MAE $\downarrow$ & XAUC $\uparrow$ & LCC $\uparrow$ & SRCC $\uparrow$ \\
        \midrule
        (a) & \textbf{DiffoR}
        & \textbf{0.552} & \textbf{0.841} & \textbf{0.759} & \textbf{0.765} \\
        \midrule
        \multicolumn{6}{l}{\textit{Generative Architecture Ablation}} \\
        (b) & w/ Flow Matching
        & 0.551 & 0.842 & 0.758 & 0.765 \\
        (c) & Rep. Trans. w/ MLP
        & 0.568 & 0.805 & 0.710 & 0.728 \\
        \midrule
        \multicolumn{6}{l}{\textit{Key Modules Ablation}} \\
        (d) & w/o \textit{MIA}
        & 0.597 & 0.786 & 0.675 & 0.666 \\
        (e) & w/o \textit{DDP}
        & 0.566 & 0.801 & 0.696 & 0.701 \\
        (f) & w/o Both line (d) \& (e)
        & 0.606 & 0.763 & 0.653 & 0.642 \\
        \bottomrule
    \end{tabular}
\end{table}

\subsubsection{Latent Space Visualization}
\begin{figure}[t]
    \centering
    \includegraphics[width=0.6\linewidth]{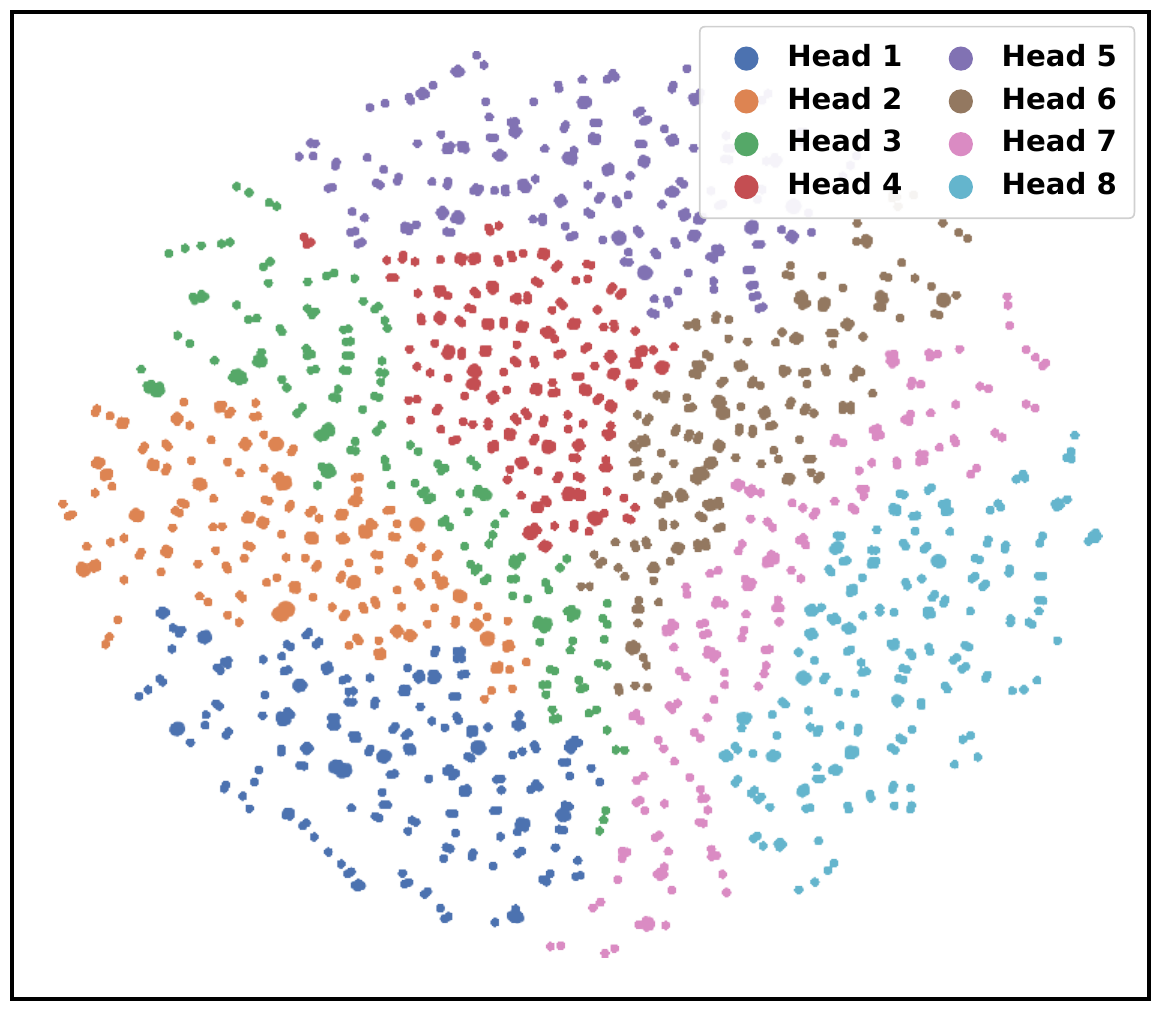}
    \caption{t-SNE visualization of embeddings from different attention heads. The distinct clusters demonstrate that each head captures diverse and non-redundant ordinal patterns.}
    \label{fig:tsne}
\end{figure}
To see whether DiffoR effectively decomposes the ordinal regression task, we visualize the latent representations learned by different attention heads using t-SNE. As illustrated in Fig.~\ref{fig:tsne}, the embeddings from the 8 heads exhibit clear spatial disentanglement, forming distinct and cohesive clusters without significant overlap. This implies that each head specializes in capturing a specific ordinal pattern or semantic subspace. Such diversity confirms that our Multi-scale Increment Aggregation strategy successfully encourages the model to learn complementary features, rather than collapsing into redundant representations.

\subsubsection{Hyperparameter Effect}
\begin{figure}[t]
    \centering
    \includegraphics[width=0.86\linewidth]{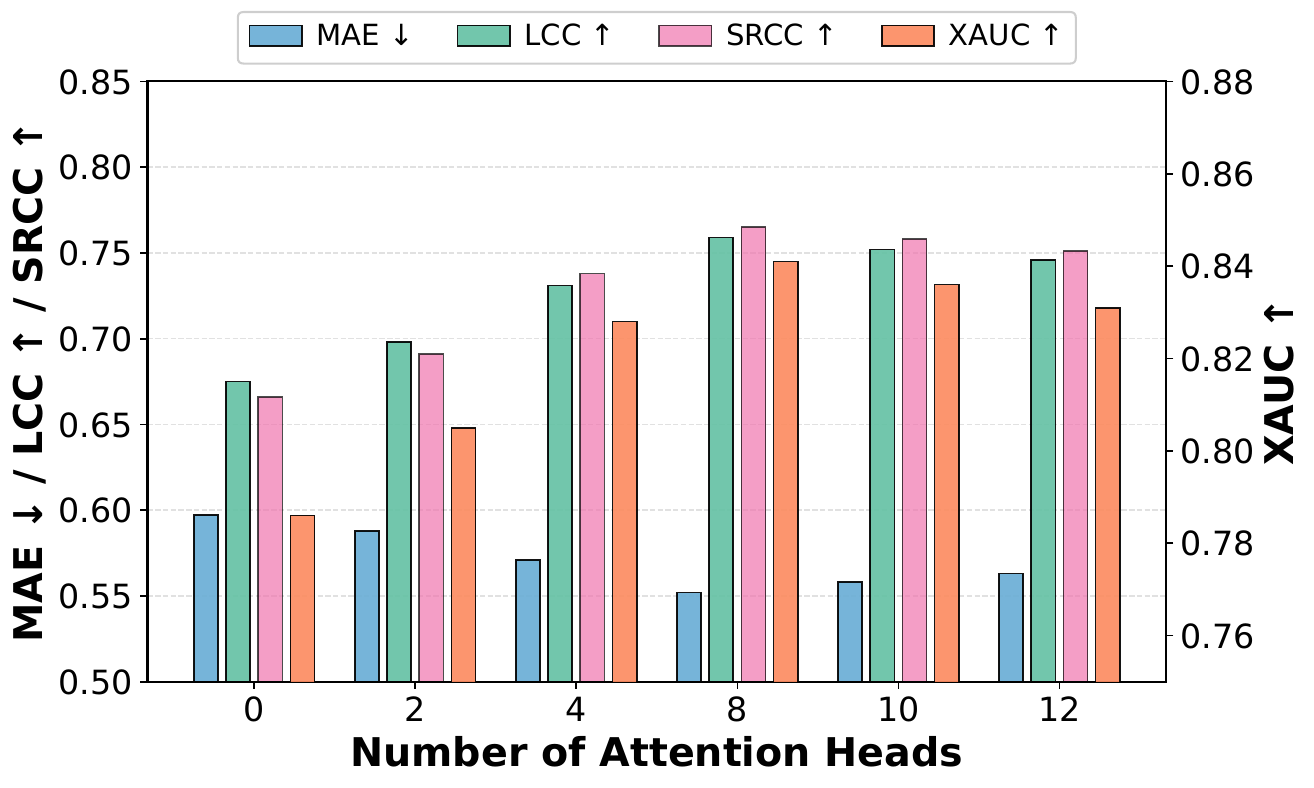}
    \caption{Impact of the number of attention heads on ICAA17K. ``0 head'' corresponds to the variant without MIA.}
    \label{fig:head_ablation}
    \vspace{-1em}
\end{figure}
We further investigate the impact of the number of attention heads on the ICAA17K dataset, which governs the granularity of ordinal subspace decoupling. As shown in Fig.~\ref{fig:head_ablation}, the configuration with 0 head (equivalent to removing MIA, cf. Tab.~\ref{tab:ablation} Row (d)).
As the number of heads increases, we observe a steady improvement across all metrics, peaking at 8 heads. 
This suggests that a sufficient number of parallel subspaces is crucial for capturing diverse ordinal increments. 
However, further increasing the heads (to 10 or 12) leads to a slight performance degradation, likely due to over-parameterization or the fragmentation of semantic features into overly fine-grained, noisy subspaces.

\section{Conclusion}
This work addresses the fundamental bottlenecks of discretization in Ordinal Regression (OR) by proposing DiffoR, a novel continuous generative framework.
By reformulating OR as a conditional value recovery process via diffusion, DiffoR bypasses quantization artifacts and natively captures the non-stationary semantic transitions of ordinal data.
Our proposed Dual-Decoupling Strategy --- synergizing spatial increment aggregation with temporal denoising perception --- ensures the preservation of hierarchical ordinal topology.
Extensive experiments across 12 benchmarks in four domains demonstrate DiffoR's consistent superiority and robust generalization.
By establishing a high-precision, universal solution for OR, this work offers a robust foundation for future research in modeling data with inherent ordering.

\section*{Acknowledgments} 
This work was partially supported by Kuaishou Technology. Shuigeng Zhou was supported by National Social Science Fund of China (NSFC) under grant No. 24\&ZD185. The computations in this research were performed using the CFFF platform of Fudan University.


\bibliographystyle{ACM-Reference-Format}
\bibliography{main}

\newpage
\appendix
\section{Limitaions of Existing Modeling Paradigms}
\label{app:paradigms}

This appendix provides a self-contained theoretical treatment of existing modeling paradigms for watch-time prediction (WTP) and includes full proofs for the propositions stated in the main paper. Throughout, $\mathbf{x}$ denotes the conditioning features, $y \in \mathbb{R}^+$ denotes watch time, and $P_{\text{data}}(\cdot)$ denotes the true (unknown) data-generating distribution.

\subsection{Mean Collapse in Conventional Regression}
\label{app:meancollapse}

A common baseline is point-wise regression trained with mean squared error (MSE):
\[
\min_f \; \mathbb{E}_{(\mathbf{x},y)\sim P_{\text{data}}}\big[(y-f(\mathbf{x}))^2\big].
\]
When $P_{\text{data}}(y\mid \mathbf{x})$ is multimodal, the MSE objective drives the predictor to the conditional mean, which can lie in a low-density region.

\begin{proposition}[Mean Collapse Effect]
\label{pro:meanCollapse_app}
Assume that, for a fixed $\mathbf{x}$, $P_{\text{data}}(y\mid \mathbf{x})$ consists of multiple well-separated modes with separation $\Delta$ and within-mode scale $\sigma$. Let $f^*(\mathbf{x})=\mathbb{E}[y\mid \mathbf{x}]$ denote the MSE-optimal regressor. Then, under the disjoint-modes regime $\Delta/\sigma\to\infty$,
\begin{equation}
\lim_{\Delta/\sigma\to\infty}
P_{\text{data}}\!\left(f^*(\mathbf{x}) \mid \mathbf{x}\right)=0.
\end{equation}
\end{proposition}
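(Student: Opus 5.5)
The plan is to fix a concrete parametric model that realizes the hypotheses of Proposition~\ref{pro:meanCollapse_app}, compute $f^*(\mathbf{x})$ exactly, and show that the conditional density at that point vanishes as $\Delta/\sigma\to\infty$. I will interpret $P_{\text{data}}(f^*(\mathbf{x})\mid\mathbf{x})$ as the conditional density evaluated at $f^*(\mathbf{x})$. The model is a mixture with $K\ge 2$ components:
\[
p(y\mid\mathbf{x})=\sum_{j=1}^K \pi_j\,\frac{1}{\sigma}\,\phi\!\left(\frac{y-\mu_j}{\sigma}\right),
\]
where the weights $\pi_j>0$ are fixed and $\phi$ is a fixed unimodal kernel (Gaussian, or any kernel with $\phi(u)\to 0$ as $|u|\to\infty$). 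The means are $\mu_j=\mu_1+(j-1)\Delta$, or more generally any configuration with $\min_{i\ne j}|\mu_i-\mu_j|\ge\Delta$. Because $|\mu_i-\mu_j|/\Delta$ is bounded, the normalized geometry $(\mu_j-\mu_1)/\Delta$ stays fixed. The argument is scale-invariant, so I can set $\sigma=1$ and send $\Delta\to\infty$, with the density measured in units $1/\sigma$. Equivalently, I show that $\sigma\,p(f^*\mid\mathbf{x})\to 0$.

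The first step is $f^*(\mathbf{x})=\sum_j\pi_j\mu_j$, which is the standard fact that the MSE minimizer is the conditional mean. Write $\mu_j=\mu_1+\Delta c_j$. Then
\[
f^*=\mu_1+\Delta\bar c,\qquad \bar c=\sum_j\pi_j c_j .
\]
The key step is to show that $\bar c$ stays a fixed positive distance from every $c_j$. Let $\delta=\min_j|\bar c-c_j|$; I need $\delta>0$.

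This is the main obstacle. It can fail: with three equally spaced modes and symmetric weights, the mean coincides with the middle mode, and then no collapse occurs. The hypothesis "well-separated modes" must therefore be read as also requiring that the mean falls in an inter-mode gap. The natural instance is the two-mode case $K=2$. There $\bar c=\pi_2\in(0,1)$, so $\delta=\min(\pi_2,1-\pi_2)>0$ automatically. For general $K$, I will state the non-coincidence condition $\bar c\notin\{c_j\}$ explicitly as part of the regime and note that it holds generically in the weights.

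With $\delta>0$, the distance from $f^*$ to every mode is $|f^*-\mu_j|\ge\delta\Delta$. This gives
\[
\sigma\,p(f^*\mid\mathbf{x})\le\sum_j\pi_j\,\phi\!\left(\frac{\delta\Delta}{\sigma}\right)=\phi\!\left(\frac{\delta\Delta}{\sigma}\right)\longrightarrow 0
\]
as $\Delta/\sigma\to\infty$. For the Gaussian kernel this decay is $\exp(-\delta^2\Delta^2/2\sigma^2)$, which is exponentially fast.

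If instead the density must vanish in absolute terms with $\sigma$ held fixed, the same bound already gives that result directly. I will close by remarking that the mean therefore sits in a region of asymptotically zero likelihood, while each mode retains density of order $\pi_j\phi(0)/\sigma$.
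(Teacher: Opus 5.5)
Your argument is correct, and it is a more general route than the paper's. The paper treats only the equal-weight two-Gaussian mixture. It computes $f^*=\bar\mu=(\mu_1+\mu_2)/2$ and evaluates the density there exactly as $\frac{1}{\sqrt{2\pi}\sigma}\exp(-\Delta^2/(8\sigma^2))$. It then asserts in one sentence that the argument ``extends to multiple modes'' because the mean ``lies between modes.''

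You replace the exact evaluation with the uniform bound $\sigma\,p(f^*\mid\mathbf{x})\le\phi(\delta\Delta/\sigma)$. This covers any number of components, any fixed weights and any decaying kernel. For $K=2$, $\pi_2=1/2$, $\delta=1/2$ with a Gaussian kernel, it holds with equality and recovers the paper's value. Your version gains two things the paper's proof lacks:
\begin{itemize}
\item Your symmetric three-mode example shows that the paper's one-line extension is false as stated. The mean sits on the middle mode, and the density there tends to $\phi(0)/(3\sigma)$, not $0$. So for ``multiple modes'' the proposition genuinely needs your non-coincidence condition $\bar c\notin\{c_j\}$, or the restriction to $K=2$.
\item Normalizing by $\sigma$ makes the limit well-posed. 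With $r=\Delta/\sigma$, the paper's expression $\frac{1}{\sqrt{2\pi}\sigma}e^{-r^2/8}$ tends to $0$ only if $\sigma$ stays bounded below; choosing $\sigma=e^{-r^2/4}$ makes it diverge. Your remark about holding $\sigma$ fixed handles this explicitly.
\end{itemize}
In exchange, the paper's route gives an exact closed form with no extra hypotheses in the special case it covers.

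A few small points to tighten:
\begin{itemize}
\item The sentence ``Because $|\mu_i-\mu_j|/\Delta$ is bounded, the normalized geometry stays fixed'' is not a valid implication. State as an assumption that the $c_j$ are fixed along the limit, or at least that $\delta$ stays bounded away from $0$.
\item The identity $f^*=\sum_j\pi_j\mu_j$ requires the kernel to have a finite first moment and mean zero, e.g.\ to be symmetric. The condition ``$\phi(u)\to 0$'' alone does not give this; a Cauchy kernel is a counterexample.
\item The step $\phi((f^*-\mu_j)/\sigma)\le\phi(\delta\Delta/\sigma)$ requires $\phi$ to be non-increasing in $|u|$. Otherwise bound each term by $\sup_{|u|\ge\delta\Delta/\sigma}\phi(u)$, which still tends to $0$.
\end{itemize}
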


\begin{proof}
\textbf{Step 1: MSE-optimal predictor.}
For any measurable $f$, the MSE risk is
\[
\mathcal{R}(f)=\mathbb{E}\big[(y-f(\mathbf{x}))^2\big].
\]
Conditioning on $\mathbf{x}$ and minimizing pointwise gives
\[
f^*(\mathbf{x})
=
\arg\min_{a\in\mathbb{R}}
\mathbb{E}\big[(y-a)^2\mid \mathbf{x}\big]
=
\mathbb{E}[y\mid \mathbf{x}].
\]

\textbf{Step 2: Density at the conditional mean under separated modes.}
Consider the bimodal case for clarity:
\[
P_{\text{data}}(y\mid \mathbf{x})
=
\frac{1}{2}\mathcal{N}(y;\mu_1,\sigma^2)
+
\frac{1}{2}\mathcal{N}(y;\mu_2,\sigma^2),
\quad
\Delta\triangleq|\mu_1-\mu_2|.
\]
Then
\[
f^*(\mathbf{x})=\bar{\mu}\triangleq\frac{\mu_1+\mu_2}{2}.
\]
The conditional density at $\bar{\mu}$ is
\begin{align}
P_{\text{data}}(\bar{\mu}\mid \mathbf{x})
&=
\frac{1}{2}\mathcal{N}(\bar{\mu};\mu_1,\sigma^2)
+
\frac{1}{2}\mathcal{N}(\bar{\mu};\mu_2,\sigma^2) \nonumber\\
&=
\frac{1}{\sqrt{2\pi}\sigma}
\exp\!\left(-\frac{\Delta^2}{8\sigma^2}\right).
\end{align}
Taking the limit $\Delta/\sigma\to\infty$ yields
\[
\lim_{\Delta/\sigma\to\infty}P_{\text{data}}(\bar{\mu}\mid \mathbf{x})=0,
\]
which proves the claim. The argument extends to multiple modes: when modes are mutually separated, the conditional mean lies between modes and its density decays exponentially with separation.
\end{proof}

\subsection{Limitations of Discretization}
\label{app:ordinal}

Ordinal regression methods (e.g., CREAD~\cite{cread} and SWaT~\cite{swat}) discretize the continuous watch time $y$ using predefined thresholds
\[
c_1 < c_2 < \dots < c_M,
\]
and transform $y$ into an ordered sequence of binary decisions:
\[
\mathbf{B}^m = \mathbb{I}(y > c_m), \quad m=1,\dots,M.
\]
This converts a continuous regression problem into multiple classification sub-tasks and can reduce sensitivity to long-tailed targets. However, it introduces: (i) \textbf{hard quantization error}, since thresholds break continuity and create boundary effects; and (ii) an \textbf{interval independence assumption}, since standard losses (e.g., cross-entropy) typically treat $\{\mathbf{B}^m\}$ as conditionally independent given $\mathbf{x}$, ignoring ordinal dependencies across intervals~\cite{ma2024generative}.

\begin{proposition}[Dependency Error in Discretized Modeling]
\label{pro:disError_app}
Let $P_{\text{data}}(\mathbf{B}\mid \mathbf{x})$ denote the true joint distribution of interval decisions, where $\mathbf{B}=(\mathbf{B}^1,\dots,\mathbf{B}^M)$, and let the naive discretization model assume conditional independence:
\[
P_{\text{naive}}(\mathbf{B}\mid \mathbf{x})=\prod_{m=1}^{M} P(\mathbf{B}^m\mid \mathbf{x}).
\]
Then the modeling error measured by KL divergence admits the decomposition
\begin{equation}
\begin{aligned}
D_{\mathrm{KL}}\!\left(P_{\text{data}} \,\|\, P_{\text{naive}}\right)
&=
\sum_{m=1}^{M}
\mathbb{E}_{\mathbf{B}^{<m} \sim P_{\text{data}}}
\!\left[
D_{\mathrm{KL}}
\!\left(
P(\mathbf{B}^m \mid \mathbf{x}, \mathbf{B}^{<m})
\,\|\, 
P(\mathbf{B}^m \mid \mathbf{x})
\right)
\right] \\
&=
\sum_{m=1}^{M}
I(\mathbf{B}^m;\mathbf{B}^{<m}\mid \mathbf{x}),
\end{aligned}
\end{equation}
where $\mathbf{B}^{<m}=(\mathbf{B}^1,\dots,\mathbf{B}^{m-1})$ and $I(\mathbf{B}^m;\mathbf{B}^{<m}\mid \mathbf{x})$ is the conditional mutual information.
\end{proposition}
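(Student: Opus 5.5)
The plan is the chain rule for KL divergence together with the chain-rule factorization of the true joint. Fix $\mathbf{x}$ throughout; all distributions are conditional on $\mathbf{x}$, and the binary vector $\mathbf{B}$ takes finitely many values, so every expectation is a finite sum. The marginals in $P_{\text{naive}}$ are taken to be the true marginals $P(\mathbf{B}^m\mid\mathbf{x})$ of $P_{\text{data}}$; this is the reading under which the identity is exact.

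First, write the true joint autoregressively:
\[
P_{\text{data}}(\mathbf{B}\mid\mathbf{x})=\prod_{m=1}^M P(\mathbf{B}^m\mid\mathbf{x},\mathbf{B}^{<m}).
\]
Substituting this and the product form of $P_{\text{naive}}$ into $D_{\mathrm{KL}}=\mathbb{E}_{P_{\text{data}}}[\log P_{\text{data}}-\log P_{\text{naive}}]$ turns the log-ratio into a sum over $m$ of
\[
\log\frac{P(\mathbf{B}^m\mid\mathbf{x},\mathbf{B}^{<m})}{P(\mathbf{B}^m\mid\mathbf{x})}.
\]
By linearity of expectation, $D_{\mathrm{KL}}$ is the sum over $m$ of the expectations of these terms.

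Next, the $m$-th term depends only on $(\mathbf{B}^{\le m})$. We therefore marginalize out $\mathbf{B}^{>m}$ and use the tower property: take the outer expectation over $\mathbf{B}^{<m}\sim P_{\text{data}}$ and the inner expectation over $\mathbf{B}^m\sim P(\cdot\mid\mathbf{x},\mathbf{B}^{<m})$. The inner expectation is exactly
\[
D_{\mathrm{KL}}\bigl(P(\mathbf{B}^m\mid\mathbf{x},\mathbf{B}^{<m})\,\|\,P(\mathbf{B}^m\mid\mathbf{x})\bigr),
\]
which gives the first line of the claimed decomposition.

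For the second equality, recall the definition of conditional mutual information:
\[
I(\mathbf{B}^m;\mathbf{B}^{<m}\mid\mathbf{x})=\mathbb{E}\Bigl[\log\frac{P(\mathbf{B}^m,\mathbf{B}^{<m}\mid\mathbf{x})}{P(\mathbf{B}^m\mid\mathbf{x})P(\mathbf{B}^{<m}\mid\mathbf{x})}\Bigr].
\]
Dividing numerator and denominator by $P(\mathbf{B}^{<m}\mid\mathbf{x})$ rewrites the ratio as $P(\mathbf{B}^m\mid\mathbf{x},\mathbf{B}^{<m})/P(\mathbf{B}^m\mid\mathbf{x})$. This is the same expectation as above, so the two lines agree term by term. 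The $m=1$ term vanishes because $\mathbf{B}^{<1}$ is empty.

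The main obstacle is bookkeeping rather than substance. We must be explicit that the naive marginals coincide with the true marginals, since otherwise an extra term $\sum_m D_{\mathrm{KL}}(P(\mathbf{B}^m\mid\mathbf{x})\,\|\,P_{\text{naive}}(\mathbf{B}^m\mid\mathbf{x}))$ appears. We must also handle zero-probability configurations with the convention $0\log 0=0$. Because the ordinal structure forces $\mathbf{B}^{m}=1\Rightarrow\mathbf{B}^{m-1}=1$, many joint configurations have probability zero. They contribute nothing under $P_{\text{data}}$-expectations, and absolute continuity $P_{\text{data}}\ll P_{\text{naive}}$ holds automatically: every configuration that is possible under the joint has positive marginals.
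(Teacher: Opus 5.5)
Your proposal is correct and follows the paper's proof essentially step for step: you factor $P_{\text{data}}$ by the chain rule, split the log-ratio into a sum over $m$, and take the inner expectation over $\mathbf{B}^m$ given $\mathbf{B}^{<m}$ to obtain the expected KL, then identify the second line as conditional mutual information. The small differences are these. You derive the mutual-information form from the joint-versus-product-of-marginals definition, whereas the paper simply takes the expected-KL expression as the definition. You also make explicit two points the paper leaves implicit: that the naive factors must be the true marginals $P(\mathbf{B}^m\mid\mathbf{x})$, and that zero-probability configurations are harmless because $P_{\text{data}}\ll P_{\text{naive}}$.
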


\begin{proof}
We first write the true distribution using the chain rule:
\[
P_{\text{data}}(\mathbf{B}\mid \mathbf{x})
=
\prod_{m=1}^{M}
P(\mathbf{B}^m \mid \mathbf{x}, \mathbf{B}^{<m}).
\]
By definition,
\begin{align}
D_{\mathrm{KL}}\!\left(P_{\text{data}} \,\|\, P_{\text{naive}}\right)
&=
\mathbb{E}_{\mathbf{B}\sim P_{\text{data}}}
\left[
\log
\frac{P_{\text{data}}(\mathbf{B}\mid \mathbf{x})}{P_{\text{naive}}(\mathbf{B}\mid \mathbf{x})}
\right] \nonumber\\
&=
\mathbb{E}_{\mathbf{B}\sim P_{\text{data}}}
\left[
\log
\frac{\prod_{m=1}^{M} P(\mathbf{B}^m \mid \mathbf{x}, \mathbf{B}^{<m})}
{\prod_{m=1}^{M} P(\mathbf{B}^m \mid \mathbf{x})}
\right] \nonumber\\
&=
\sum_{m=1}^{M}
\mathbb{E}_{\mathbf{B}\sim P_{\text{data}}}
\left[
\log
\frac{P(\mathbf{B}^m \mid \mathbf{x}, \mathbf{B}^{<m})}{P(\mathbf{B}^m \mid \mathbf{x})}
\right]. \label{eq:kl_sum_log_ratio}
\end{align}
Taking expectation over $\mathbf{B}^{<m}$ explicitly yields
\begin{equation}
\sum_{m=1}^{M}
\mathbb{E}_{\mathbf{B}^{<m}\sim P_{\text{data}}}
\left[
\sum_{b\in\{0,1\}}
P(\mathbf{B}^m=b\mid \mathbf{x}, \mathbf{B}^{<m})
\log\frac{P(\mathbf{B}^m=b\mid \mathbf{x}, \mathbf{B}^{<m})}{P(\mathbf{B}^m=b\mid \mathbf{x})}
\right],
\end{equation}
which is exactly
\[
\sum_{m=1}^{M}
\mathbb{E}_{\mathbf{B}^{<m}\sim P_{\text{data}}}
\left[
D_{\mathrm{KL}}
\!\left(
P(\mathbf{B}^m \mid \mathbf{x}, \mathbf{B}^{<m})
\,\|\, 
P(\mathbf{B}^m \mid \mathbf{x})
\right)
\right].
\]
Finally, by the definition of conditional mutual information,
\begin{align}
I(\mathbf{B}^m;\mathbf{B}^{<m}\mid \mathbf{x})
&\triangleq
\mathbb{E}_{\mathbf{B}^{<m}\sim P_{\text{data}}}
\left[
D_{\mathrm{KL}}
\!\left(
P(\mathbf{B}^m \mid \mathbf{x}, \mathbf{B}^{<m})
\,\|\, 
P(\mathbf{B}^m \mid \mathbf{x})
\right)
\right],
\end{align}
so summing over $m$ gives the stated equality.
\end{proof}

\subsection{Proof of AR Limitations}
\label{app:ar_bottleneck}

We provide a bias variance decomposition of the expected squared regression error for tokenized AR models ~\cite{ma2024generative}.

\paragraph{Setup and Notation.}
Consider a continuous target reconstructed from a sequence of value tokens:
$y \triangleq \sum_{t=1}^{T} \phi(s^{t})$,
where $s^{t}$ denotes the ground-truth token at step $t$ and $\phi(\cdot)$ maps a token to its numeric value.
The AR model predicts a token sequence $\hat{s}^{1:T}$, yielding the prediction
$\hat{y} \triangleq \sum_{t=1}^{T} \phi(\hat{s}^{t})$.

Define the numeric token values
$C^{t} \triangleq \phi(s^{t}), \hat{C}^{t} \triangleq \phi(\hat{s}^{t})$,
and the step-wise prediction error
$\Delta_t \triangleq \hat{C}^{t} - C^{t}$. By assumption, all token values are bounded:
$C^{t}, \hat{C}^{t} \in [w_{\min}, w_{\max}]$, and the step-wise bias satisfies
$\lvert \mathbb{E}[\Delta_t] \rvert \le B, \forall t$.

\paragraph{Error Decomposition.}
The squared regression error can be written as
\begin{equation}
\mathbb{E}\!\left[(\hat{y}-y)^2\right]
=
\mathbb{E}\!\left[
\left(\sum_{t=1}^{T} \Delta_t \right)^2
\right].
\end{equation}
Using the bias--variance decomposition, we obtain
\begin{equation}
\mathbb{E}\!\left[
\left(\sum_{t=1}^{T} \Delta_t \right)^2
\right]
=
\left(\sum_{t=1}^{T} \mathbb{E}[\Delta_t]\right)^2
+
\mathbb{V}\!\left(\sum_{t=1}^{T} \Delta_t\right).
\end{equation}

\paragraph{Bias Term.}
Let $b_t \triangleq \mathbb{E}[\Delta_t]$. Since $|b_t| \le B$, we have
\begin{equation}
\left(\sum_{t=1}^{T} b_t\right)^2
\le
T^2 B^2.
\end{equation}

\paragraph{Variance Term.}
The variance term expands as
\begin{equation}
\mathbb{V}\!\left(\sum_{t=1}^{T} \Delta_t\right)
=
\sum_{t=1}^{T} \mathbb{V}(\Delta_t)
+
\sum_{t \neq t'} \mathrm{Cov}(\Delta_t, \Delta_{t'}).
\end{equation}
Applying the Cauchy--Schwarz inequality yields
\begin{equation}
\sum_{t \neq t'} \mathrm{Cov}(\Delta_t, \Delta_{t'})
\le
\frac{T(T-1)}{2}
\max_t \mathbb{V}(\Delta_t).
\end{equation}

Since $\Delta_t = \hat{C}^{t} - C^{t}$ and both $\hat{C}^{t}$ and $C^{t}$ lie in $[w_{\min}, w_{\max}]$,
Popoviciu’s inequality gives
\begin{equation}
\mathbb{V}(\Delta_t) \le \frac{(w_{\max}-w_{\min})^2}{4}.
\end{equation}
Therefore,
\begin{equation}
\mathbb{V}\!\left(\sum_{t=1}^{T} \Delta_t\right)
\le
T^2 \cdot \frac{(w_{\max}-w_{\min})^2}{4}.
\end{equation}

\paragraph{Final Bound.}
Combining the bias and variance bounds, we obtain
\begin{equation}
\mathbb{E}\!\left[(\hat{y}-y)^2\right]
\le
T^{2} B^{2}
+
T^{2}\frac{(w_{\max}-w_{\min})^{2}}{4}. \qed
\end{equation}

\paragraph{Vocabulary-induced trade-off.}
Both the sequence length $T$ and the step-wise bias bound $B$ are induced by the vocabulary design.
Let $\mathcal{V}$ denote the value-token vocabulary and $\phi(\mathcal{V}) \subset [w_{\min}, w_{\max}]$ its numeric range.
A finer-grained vocabulary yields smaller token magnitudes and typically requires a longer sequence to represent the same target, leading to larger $T$.
Conversely, a coarser vocabulary reduces $T$ but increases discretization error at each step, enlarging the attainable bias bound $B$.
In particular, since $\Delta_t=\phi(\hat{s}^{t})-\phi(s^{t})$ and $\phi(s^{t}),\phi(\hat{s}^{t})\in[w_{\min},w_{\max}]$, we have
$|\Delta_t| \le w_{\max}-w_{\min}
\quad \Rightarrow \quad
|\mathbb{E}[\Delta_t]| \le B \le w_{\max}-w_{\min}$.
Therefore, tokenized AR regression is intrinsically constrained by a vocabulary-dependent trade-off between $T$ and $B$, which directly controls the error bound.

\section{Theoretical Analysis}
\label{sec:theoretical-analysis}

This section provides a rigorous theoretical analysis for our spatio-temporal multi-head diffusion-style ordinal regression formulation under explicit assumptions. We focus on statements with step-by-step derivations for key equations.

\subsection{Notation and Symbols}
\label{subsec:notation}

Table~\ref{tab:notation} summarizes key notation.

\begin{table}[h]
\centering
\caption{Summary of key mathematical notation.}
\label{tab:notation}
\begin{tabularx}{\columnwidth}{@{}>{\raggedright\arraybackslash}p{1.8cm}X@{}}
\toprule
Symbol & Description \\
\midrule
$x, y$ & Input features and ordinal label ($y \in \{1,\dots,K\}$) \\
$z_0 = \psi(y)$ & Continuous target after encoding $\psi: \{1,\dots,K\} \to [0,1]$ \\
$\mathcal{T}$ & Discrete diffusion time points ($\{t_1,\dots,t_M\}$) \\
$\bar{\alpha}_t$ & Noise schedule at time $t$ \\
$\epsilon_\Theta(\cdot)$ & Noise predictor (parameters $\Theta$) \\
$\epsilon$ & Standard Gaussian noise ($\mathcal{N}(0,1)$) \\
$\gamma$ & Minimum gap between ordinal thresholds \\
$\rho_m, \eta_m$ & Contraction factor and error term at step $m$ \\
$r_{\mathrm{eff}}(U)$ & Effective rank of representation matrix $U$ \\
\bottomrule
\end{tabularx}
\end{table}

\subsection{Setup: Ordinal Regression and Diffusion Training}
\label{subsec:setup}

\subsubsection{Data and ordinal regression}
\label{subsubsec:data-ordinal}

Let $(x,y)\sim\mathcal{D}$, where $x\in\mathcal{X}\subseteq\mathbb{R}^d$ and $y\in\{1,2,\dots,K\}$ is an ordinal label.
We assume an encoding $\psi:\{1,\dots,K\}\to[0,1]$ and define a continuous target,
\begin{equation}
z_0=\psi(y)\eqperiod
\label{eq:z0-encoding}
\end{equation}

\subsubsection{Forward diffusion and noise-prediction loss}
\label{subsubsec:diffusion}

Fix a discrete set of time points $\mathcal{T}=\{t_1,\dots,t_M\}$ with $0<t_1<\cdots<t_M$.
For each $t\in\mathcal{T}$, define the forward process,
\begin{equation}
z_t=\sqrt{\bar\alpha_t}\,z_0+\sqrt{1-\bar\alpha_t}\,\epsilon\eqcomma\qquad \epsilon\sim\mathcal{N}(0,1),
\label{eq:forward-diffusion}
\end{equation}
where $\bar\alpha_t\in(0,1)$ is a prescribed noise schedule.

\begin{lemma}[Gaussian reparameterization]
\label{lem:gaussian-reparam}
If $Z\sim\mathcal{N}(\mu,\sigma^2)$ with $\sigma>0$, and $\epsilon\sim\mathcal{N}(0,1)$, then $\tilde{Z}\triangleq \mu+\sigma\epsilon\sim\mathcal{N}(\mu,\sigma^2)$.
\end{lemma}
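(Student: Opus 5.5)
The natural route is a direct computation with the distribution function, or equivalently the characteristic function, of $\tilde Z=\mu+\sigma\epsilon$, using only that $\epsilon$ has the standard normal density $\varphi(u)=(2\pi)^{-1/2}e^{-u^2/2}$. Since $\sigma>0$, the map $u\mapsto \mu+\sigma u$ is a strictly increasing affine bijection of $\mathbb R$. This monotonicity is the only structural fact we need.

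First, for any $z\in\mathbb R$, write
\[
\mathbb P(\tilde Z\le z)=\mathbb P\!\left(\epsilon\le \tfrac{z-\mu}{\sigma}\right)=\int_{-\infty}^{(z-\mu)/\sigma}\varphi(u)\,du .
\]
The first equality uses $\sigma>0$; with $\sigma<0$ the inequality would flip, which is why the lemma assumes positivity.

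Second, substitute $u=(s-\mu)/\sigma$, so that $du=ds/\sigma$. This turns the integral into
\[
\int_{-\infty}^{z}\frac{1}{\sqrt{2\pi}\,\sigma}\exp\!\left(-\frac{(s-\mu)^2}{2\sigma^2}\right)ds .
\]
That is exactly the CDF of $\mathcal N(\mu,\sigma^2)$ evaluated at $z$.

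Since distribution functions determine laws, $\tilde Z\sim\mathcal N(\mu,\sigma^2)$, which is the same law as $Z$. As a cross-check, I would note the characteristic-function version:
\[
\mathbb E\, e^{i\xi\tilde Z}=e^{i\xi\mu}\,\mathbb E\, e^{i(\sigma\xi)\epsilon}=e^{i\xi\mu-\sigma^2\xi^2/2},
\]
which is the characteristic function of $\mathcal N(\mu,\sigma^2)$.

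There is no real obstacle here. The only point needing care is the change of variables with the correct Jacobian factor $1/\sigma$ and the correct orientation of the inequality, both guaranteed by $\sigma>0$.

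In the sequel this lemma will be applied with $\mu=\sqrt{\bar\alpha_t}\,z_0$ and $\sigma=\sqrt{1-\bar\alpha_t}$, conditionally on $z_0$. Then \eqref{eq:forward-diffusion} realises $q(z_t\mid z_0)=\mathcal N(\sqrt{\bar\alpha_t}z_0,1-\bar\alpha_t)$. This is the fact underlying the MSE identity in Theorem~\ref{thm:temporal-decoupling-main}.
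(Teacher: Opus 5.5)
Your proof is correct and is the paper's proof: both compute $\mathbb P(\tilde Z\le z)=\mathbb P\bigl(\epsilon\le (z-\mu)/\sigma\bigr)$ using $\sigma>0$ and identify the result as the $\mathcal N(\mu,\sigma^2)$ distribution function. Your substitution $u=(s-\mu)/\sigma$ simply writes out the identification $\Phi\bigl((z-\mu)/\sigma\bigr)=F_{\mathcal N(\mu,\sigma^2)}(z)$, which the paper asserts directly. One peripheral quibble about your closing remark: the MSE identity in Theorem~\ref{thm:noise-to-z0} follows by pure algebra from the definition of $z_t$ and does not need this lemma, whose role is only to confirm that the reparameterized $z_t$ has the forward-process law $q(z_t\mid z_0)$.
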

\begin{proof}
For any $u\in\mathbb{R}$,
\[
\mathbb{P}(\tilde{Z}\le u)
=\mathbb{P}(\mu+\sigma\epsilon\le u)
=\mathbb{P}\!\left(\epsilon\le \frac{u-\mu}{\sigma}\right)
=\Phi\!\left(\frac{u-\mu}{\sigma}\right),
\]
where $\Phi$ is the standard normal CDF. This matches the CDF of $\mathcal{N}(\mu,\sigma^2)$.
\end{proof}

We use the standard noise-prediction MSE objective. Given a predictor $\epsilon_\Theta(\cdot)$, define the per-time expected loss,
\begin{equation}
\mathcal{L}_m(\Theta)
\triangleq
\mathbb{E}_{(x,z_0)\sim\mathcal{D}}\;
\mathbb{E}_{\epsilon\sim\mathcal{N}(0,1)}
\Bigl[\bigl(\epsilon-\epsilon_\Theta(z_{t_m},t_m,x)\bigr)^2\Bigr],
\label{eq:per-time-loss}
\end{equation}
and the weighted multi-time loss,
\begin{equation}
\mathcal{L}(\Theta)\triangleq \sum_{m=1}^M \lambda_m\,\mathcal{L}_m(\Theta)\eqcomma\qquad \lambda_m>0.
\label{eq:multi-time-loss}
\end{equation}

\subsubsection{Architectures: SH, MH, and the SH-1T baseline}
\label{subsubsec:architectures}

\paragraph{Shared-head (SH).}
Parameters are $\Theta_{\mathrm{SH}}=(\phi,\theta)$ with encoder $E_\phi$ and a single decoder $g_\theta$:
\[
\epsilon_{\Theta_{\mathrm{SH}}}(z_t,t,x)=g_\theta(E_\phi(x),z_t,t).
\]

\paragraph{Multi-head over discrete times (MH-MT).}
Parameters are $\Theta_{\mathrm{MH}}=(\phi,\theta_1,\dots,\theta_M)$, and each time $t_m$ has its own head:
\[
\epsilon_{\Theta_{\mathrm{MH}}}(z_{t_m},t_m,x)=g_{\theta_m}(E_\phi(x),z_{t_m},t_m).
\]

\begin{proposition}[No cross-time mixing in head gradients]
\label{prop:no-cross-gradients}
In MH-MT, the total loss decomposes as
\begin{align}
    \mathcal{L}(\phi,\theta_1,\dots,\theta_M)=\sum_{m=1}^M\lambda_m\mathcal{L}_m(\phi,\theta_m).
\end{align}
 Hence, for any $k$,
\[
\nabla_{\theta_k}\mathcal{L}(\phi,\theta_1,\dots,\theta_M)=\lambda_k\nabla_{\theta_k}\mathcal{L}_k(\phi,\theta_k),
\]
and for any $m\ne k$, $\nabla_{\theta_k}\mathcal{L}_m(\phi,\theta_m)=0$.
\end{proposition}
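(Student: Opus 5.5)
The statement is structural, so the plan is to substitute the MH-MT parameterization into the weighted multi-time loss \eqref{eq:multi-time-loss} and read off which parameters each summand depends on.

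\textbf{Step 1 (decomposition).} In MH-MT, the per-time loss $\mathcal{L}_m$ in \eqref{eq:per-time-loss} evaluates the predictor only at the time point $t_m$. There, by definition of the architecture, $\epsilon_{\Theta_{\mathrm{MH}}}(z_{t_m},t_m,x)=g_{\theta_m}(E_\phi(x),z_{t_m},t_m)$. The random variable $z_{t_m}$ is built from $(z_0,\epsilon)$ through \eqref{eq:forward-diffusion}, which involves no learnable parameters.

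Hence the integrand $\bigl(\epsilon-g_{\theta_m}(E_\phi(x),z_{t_m},t_m)\bigr)^2$ is a function of $(\phi,\theta_m)$ and the data and noise only. Taking the expectation over $(x,z_0)\sim\mathcal D$ and $\epsilon$ preserves this, so $\mathcal{L}_m$ is a function $\mathcal{L}_m(\phi,\theta_m)$ alone. Summing with weights $\lambda_m$ gives the claimed decomposition $\mathcal{L}=\sum_m\lambda_m\mathcal{L}_m(\phi,\theta_m)$.

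\textbf{Step 2 (vanishing cross-gradients).} For $m\neq k$, the function $\mathcal{L}_m(\phi,\theta_m)$ does not depend on the coordinate block $\theta_k$, since $\theta_k$ and $\theta_m$ are distinct, non-shared parameter blocks by construction of $\Theta_{\mathrm{MH}}$. Its partial gradient with respect to $\theta_k$ is therefore identically zero.

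\textbf{Step 3 (head gradient).} By linearity of the gradient,
\[
\nabla_{\theta_k}\mathcal{L}=\sum_m\lambda_m\nabla_{\theta_k}\mathcal{L}_m=\lambda_k\nabla_{\theta_k}\mathcal{L}_k(\phi,\theta_k).
\]

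\textbf{Main obstacle: justifying differentiation under the expectation.} Writing $\nabla_{\theta_k}\mathcal{L}_k$ requires moving the gradient inside $\mathbb{E}$. The vanishing of the cross-gradients does not need this, because a function constant in $\theta_k$ has zero derivative regardless. For the head gradient I would assume $g_{\theta}$ is $C^1$ in its parameters with an integrable dominating bound on the squared loss's gradient; for example, Gaussian $\epsilon$ with polynomially bounded $g$ suffices. Dominated convergence then licenses the interchange. If this is undesirable, the identity can be stated at the level of the (possibly generalized) derivative of $\lambda_k\mathcal{L}_k$ without any interchange, since it only uses linearity and the decomposition.

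Finally, note that the shared encoder parameters $\phi$ do receive contributions from every time point. The proposition asserts non-mixing only for the head parameters $\theta_k$.
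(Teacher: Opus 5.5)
Your proposal is correct and follows the paper's proof. The paper applies linearity of $\nabla_{\theta_k}$ over the weighted sum and drops the $m\ne k$ terms because $\mathcal{L}_m(\phi,\theta_m)$ is constant in $\theta_k$; your Step 1 only spells out the decomposition that the paper reads off directly from the MH-MT parameterization, and your interchange discussion is unnecessary, since the identity only needs $\mathcal{L}_k$ to be differentiable in $\theta_k$ (which, along with the gradient--expectation interchange, the paper assumes in Assumption~\ref{ass:smoothness}).
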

\begin{proof}
By linearity of differentiation,
\[
\nabla_{\theta_k}\mathcal{L}
=\nabla_{\theta_k}\sum_{m=1}^M\lambda_m\mathcal{L}_m
=\sum_{m=1}^M \lambda_m\nabla_{\theta_k}\mathcal{L}_m.
\]
For $m\ne k$, $\mathcal{L}_m(\phi,\theta_m)$ does not depend on $\theta_k$, hence $\nabla_{\theta_k}\mathcal{L}_m=0$. The remaining term is $m=k$.
\end{proof}

\paragraph{Baseline: single-head single-time (SH-1T).}
Fix a time $t_\star\in\mathcal{T}$.
SH-1T only trains and predicts at $t_\star$:
\[
\epsilon_{\Theta_{\mathrm{1T}}}(z_{t_\star},t_\star,x)=g_\theta(E_\phi(x),z_{t_\star},t_\star),
\]
with objective
\[
\mathcal{L}_{\mathrm{1T}}(\Theta_{\mathrm{1T}})
\triangleq
\mathbb{E}\Bigl[\bigl(\epsilon-\epsilon_{\Theta_{\mathrm{1T}}}(z_{t_\star},t_\star,x)\bigr)^2\Bigr].
\]

\subsection{Assumptions}
\label{subsec:assumptions}

\begin{assumption}[Smoothness and exchanging gradient/expectation]
\label{ass:smoothness}
Each $\mathcal{L}_m(\Theta)$ is differentiable and $\nabla \mathcal{L}_m(\Theta)=\mathbb{E}[\nabla \ell_m(\Theta;\xi)]$ for the sampling randomness $\xi$.
\end{assumption}

\begin{assumption}[Gradient interference condition]
\label{ass:gradient-interference}
In SH, define $g_m\triangleq\nabla_\theta \mathcal{L}_m(\phi,\theta)$. A sufficient ``conflict'' condition is that for some pairs $(i,j)$,
\[
\langle g_i,g_j\rangle\le -c_{ij}\eqcomma\qquad c_{ij}\ge 0.
\]
This is checkable by logging cosine similarity during training.
\end{assumption}

\begin{assumption}[Coarse-to-fine contraction form]
\label{ass:contraction}
A refinement sequence $\hat{z}_M,\hat{z}_{M-1},\dots,\hat{z}_0$ exists such that for all samples,
\[
|\hat{z}_{m-1}-z_0|
\le
\rho_m|\hat{z}_m-z_0|+\eta_m\eqcomma\qquad m=1,2,\dots,M,
\]
where $\rho_m\in[0,1)$ and $\eta_m\ge 0$.
\end{assumption}

\begin{assumption}[Ordinal decoding thresholds]
\label{ass:thresholds}
Let $0=\tau_0<\tau_1<\cdots<\tau_{K-1}<\tau_K=1$, and decode $\hat{y}=k$ iff $\hat{z}\in[\tau_{k-1},\tau_k)$.
Define the minimum gap $\gamma\triangleq\min_{1\le k\le K}(\tau_k-\tau_{k-1})>0$.
\end{assumption}

\begin{assumption}[Error-covariance bounds for multi-time fusion]
\label{ass:covariance}
For each time point $t_m$, define the $z_0$ regression error $e_m\triangleq \hat{z}_0^{(m)}-z_0$. Assume
\[
\mathbb{E}[e_m^2]\le v_m\eqcomma\qquad |\mathbb{E}[e_ie_j]|\le b_{ij}\quad(i\ne j),
\]
for some $v_m\ge 0$ and $b_{ij}\ge 0$.
\end{assumption}

\subsection{Coarse-to-Fine Refinement Error Bound}
\label{subsec:coarse-to-fine}

\begin{theorem}[Explicit error bound for contractive refinement]
\label{thm:coarse-to-fine}
Under Assumption~\ref{ass:contraction}, if $|\hat{z}_{m-1}-z_0|\le \rho_m|\hat{z}_m-z_0|+\eta_m$ for $m=1,\dots,M$, then
\[
|\hat{z}_0-z_0|
\le
\left(\prod_{m=1}^M\rho_m\right)|\hat{z}_M-z_0|
+
\sum_{k=1}^M\left(\eta_k\prod_{j=1}^{k-1}\rho_j\right),
\]
where the empty product is defined as $1$.
\end{theorem}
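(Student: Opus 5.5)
Writing $e_m\triangleq|\hat z_m-z_0|$, I would unroll the one-step contraction of Assumption~\ref{ass:contraction} by induction. The recursion $e_{m-1}\le\rho_m e_m+\eta_m$ runs from $m=M$ down to $m=1$. Unrolling it therefore expresses $e_0$ through $e_n$ for increasing $n$, and the induction should be on $n$ rather than on the final index directly.

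\textbf{Induction hypothesis.} For each $n\in\{0,1,\dots,M\}$ I would prove
\[
e_0\le\Bigl(\prod_{m=1}^{n}\rho_m\Bigr)e_n+\sum_{k=1}^{n}\Bigl(\eta_k\prod_{j=1}^{k-1}\rho_j\Bigr),
\]
with empty products equal to $1$ and empty sums equal to $0$. The case $n=0$ is the trivial identity $e_0\le e_0$. Taking $n=M$ gives exactly the statement of Theorem~\ref{thm:coarse-to-fine}.

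\textbf{Inductive step.} Assume the claim holds for some $n<M$. Assumption~\ref{ass:contraction} with $m=n+1$ gives $e_n\le\rho_{n+1}e_{n+1}+\eta_{n+1}$. Multiply this inequality by $P_n\triangleq\prod_{m=1}^{n}\rho_m$. This preserves the inequality direction because every $\rho_m\ge0$, so $P_n\ge0$. Substituting into the hypothesis yields
\[
e_0\le P_n\rho_{n+1}e_{n+1}+P_n\eta_{n+1}+\sum_{k=1}^{n}\eta_k\prod_{j=1}^{k-1}\rho_j .
\]
Two identifications finish the step. First, $P_n\rho_{n+1}=P_{n+1}$. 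Second, $P_n\eta_{n+1}$ is precisely the $k=n+1$ summand $\eta_{n+1}\prod_{j=1}^{n}\rho_j$, so the sum extends to $k=n+1$.

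\textbf{Main point of care.} The argument is short, so the only real obstacle is bookkeeping. The index direction must be right: the product attached to $\eta_k$ runs over $j<k$ and not over $j>k$. Nonnegativity of the $\rho_m$ must be invoked explicitly when multiplying inequalities. Because the assumption holds pointwise for all samples, the bound is also pointwise. Taking expectations afterwards, if desired, requires only monotonicity and linearity of expectation.
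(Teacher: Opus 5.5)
Your proposal is correct and is essentially the paper's argument. The paper also inducts: it assumes the bound for $M-1$ steps and substitutes the $m=M$ contraction inequality, which is your step from $n$ to $n+1$ (your trivial base case $n=0$ replaces the paper's $M=1$), and your explicit note that $\prod_{m\le n}\rho_m\ge 0$ justifies multiplying the inequality is a point the paper uses only implicitly.
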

\begin{proof}
We prove by induction on $M$.

\textbf{Base case ($M=1$):} Directly from Assumption~\ref{ass:contraction} with $m=1$:
\begin{equation}
|\hat{z}_0-z_0|\le \rho_1|\hat{z}_1-z_0|+\eta_1.
\label{eq:base-case}
\end{equation}

\textbf{Inductive step:} Assume the bound holds for $M-1$ steps:
\begin{equation}
|\hat{z}_0-z_0| \le \left(\prod_{m=1}^{M-1}\rho_m\right)|\hat{z}_{M-1}-z_0| + \sum_{k=1}^{M-1}\left(\eta_k\prod_{j=1}^{k-1}\rho_j\right).
\label{eq:inductive-hypothesis}
\end{equation}
From Assumption~\ref{ass:contraction} with $m=M$:
\begin{equation}
|\hat{z}_{M-1}-z_0|\le \rho_M|\hat{z}_M-z_0|+\eta_M.
\label{eq:step-M}
\end{equation}
Substituting \eqref{eq:step-M} into \eqref{eq:inductive-hypothesis}:
\begin{align}
|\hat{z}_0-z_0|
&\le \left(\prod_{m=1}^{M-1}\rho_m\right)\bigl(\rho_M|\hat{z}_M-z_0|+\eta_M\bigr) + \sum_{k=1}^{M-1}\left(\eta_k\prod_{j=1}^{k-1}\rho_j\right) \nonumber\\
&= \left(\prod_{m=1}^{M}\rho_m\right)|\hat{z}_M-z_0| + \eta_M\prod_{j=1}^{M-1}\rho_j + \sum_{k=1}^{M-1}\left(\eta_k\prod_{j=1}^{k-1}\rho_j\right) \nonumber\\
&= \left(\prod_{m=1}^{M}\rho_m\right)|\hat{z}_M-z_0| + \sum_{k=1}^{M}\left(\eta_k\prod_{j=1}^{k-1}\rho_j\right).
\label{eq:inductive-step}
\end{align}
This completes the induction.
\end{proof}

\subsection{Core Lemmas and Theorems}
\label{subsec:theorems}

\subsubsection{Cross-term structure: SH vs MH}
\label{subsubsec:cross-terms}

\begin{proposition}[Gradient interference: SH suffers potential conflict, MH avoids it by design]
\label{prop:gradient-interference}
Consider the multi-time objective $\mathcal{L} = \sum_{m=1}^M \lambda_m \mathcal{L}_m$.

\begin{enumerate}[label=(\alph*), wide=0pt, leftmargin=*, nosep] 
\item \textbf{Shared-head (SH).} The total gradient w.r.t.\ the shared parameters $\theta$ is
\[
g_{\mathrm{SH}} = \sum_{m=1}^M \lambda_m g_m,\quad g_m \triangleq \nabla_\theta \mathcal{L}_m.
\]
Expanding the squared norm yields
\begin{equation}
\|g_{\mathrm{SH}}\|^2 = \sum_{m=1}^M \lambda_m^2 \|g_m\|^2
+ 2\sum_{1\le i<j\le M} \lambda_i\lambda_j \langle g_i, g_j\rangle.
\label{eq:sh-gradient-norm}
\end{equation}
If $\langle g_i,g_j\rangle \le -c_{ij}$ with $c_{ij}>0$ for some pair $(i,j)$ (Assumption~\ref{ass:gradient-interference}), then
\begin{equation}
\|g_{\mathrm{SH}}\|^2 \le \sum_{m=1}^M \lambda_m^2 \|g_m\|^2 - 2\lambda_i\lambda_j c_{ij}
< \sum_{m=1}^M \lambda_m^2 \|g_m\|^2.
\label{eq:sh-weakened-update}
\end{equation}
Thus, conflicting gradients weaken the combined update magnitude.

\item \textbf{Multi-head (MH-MT).} With independent parameters $\theta_1,\dots,\theta_M$, the gradient vector is block-diagonal:
\[
\nabla_{(\theta_1,\dots,\theta_M)} \mathcal{L}
= (\lambda_1 g_1,\, \lambda_2 g_2,\, \dots,\, \lambda_M g_M),
\quad g_m \triangleq \nabla_{\theta_m} \mathcal{L}_m.
\]
Consequently,
\begin{equation}
\|\nabla \mathcal{L}\|^2 = \sum_{m=1}^M \lambda_m^2 \|g_m\|^2,
\label{eq:mh-gradient-norm}
\end{equation}
with no cross terms (Proposition~\ref{prop:no-cross-gradients}).
\end{enumerate}
\end{proposition}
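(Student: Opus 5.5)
The argument is a direct expansion of squared Euclidean norms, using bilinearity of the inner product. We treat the two parts separately.

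For part (a), write $g_{\mathrm{SH}}=\sum_{m=1}^M \lambda_m g_m$ and compute
\[
\|g_{\mathrm{SH}}\|^2=\Bigl\langle \sum_i \lambda_i g_i,\sum_j \lambda_j g_j\Bigr\rangle=\sum_{i,j}\lambda_i\lambda_j\langle g_i,g_j\rangle .
\]
Split the double sum into the diagonal $i=j$ and the off-diagonal pairs. Symmetry of the inner product merges the terms for $(i,j)$ and $(j,i)$, which gives the factor $2$ over $i<j$ and yields \eqref{eq:sh-gradient-norm}.

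For the inequality \eqref{eq:sh-weakened-update}, the phrase ``for some pair'' must be read carefully. To conclude that the whole cross sum is at most $-2\lambda_i\lambda_j c_{ij}$, we need every other cross term to be nonpositive. We will state this explicitly, reading Assumption~\ref{ass:gradient-interference} as applying to all pairs with $c_{ij}\ge 0$, and to the distinguished pair with $c_{ij}>0$. Under that reading:
\begin{itemize}
\item each term $2\lambda_k\lambda_l\langle g_k,g_l\rangle$ is $\le 0$, since $\lambda_k,\lambda_l>0$;
\item the distinguished term is $\le -2\lambda_i\lambda_j c_{ij}$;
\item the strict inequality then follows from $\lambda_i\lambda_j c_{ij}>0$.
\end{itemize}
This step is the main obstacle. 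Without the nonpositivity of the other pairs, positive cross terms could cancel the conflict, and the stated bound fails. I would add a remark noting that the inequality holds as written precisely under this reading.

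For part (b), the parameter vector is the concatenation $(\theta_1,\dots,\theta_M)$. By Proposition~\ref{prop:no-cross-gradients}, the $\theta_k$-block of $\nabla\mathcal L$ is $\lambda_k\nabla_{\theta_k}\mathcal L_k=\lambda_k g_k$. The squared norm of a concatenated vector is the sum of the squared norms of its blocks, because distinct blocks occupy disjoint coordinates and are therefore orthogonal. This gives \eqref{eq:mh-gradient-norm} with no cross terms.

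Finally, we contrast the two cases. In SH the gradients $g_m$ live in the same space, so their inner products can be negative and shrink the combined update. In MH they occupy disjoint coordinates, so those inner products are identically zero by construction.
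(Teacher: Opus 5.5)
Your proposal follows the paper's proof exactly: expand $\|\sum_m\lambda_m g_m\|^2$ bilinearly into diagonal and off-diagonal terms for SH, and for MH-MT use that the squared norm of a concatenated vector is the sum of its block norms. Your caveat is also warranted: the paper merely ``substitutes'' the single-pair bound into the off-diagonal sum, which tacitly requires the other cross terms to be nonpositive (automatic only for $M=2$)---e.g.\ unit weights with $g_1=g_3=(1,0)$, $g_2=(-0.1,1)$ give $\|g_{\mathrm{SH}}\|^2=4.61$ against the claimed bound $2.81$---so stating that hypothesis explicitly, as you do, is the right repair.
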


\begin{proof}
Equation~\eqref{eq:sh-gradient-norm} follows from expanding $\|\sum_m \lambda_m g_m\|^2 = \langle \sum_m \lambda_m g_m, \sum_n \lambda_n g_n\rangle$ and separating diagonal ($m=n$) and off-diagonal ($m\neq n$) terms. Equation~\eqref{eq:sh-weakened-update} substitutes $\langle g_i,g_j\rangle \le -c_{ij}$ into the off-diagonal sum. Equation~\eqref{eq:mh-gradient-norm} holds because the Euclidean norm of a concatenated vector equals the sum of block norms.
\end{proof}

\begin{remark}
Proposition~\ref{prop:gradient-interference} identifies a potential failure mode of SH when gradients conflict. It does not claim conflict always occurs; however, MH-MT guarantees absence of interference regardless of gradient directions.
\end{remark}

\subsubsection{From noise prediction to ordinal error bounds}
\label{subsubsec:noise-to-ordinal}

\begin{theorem}[Ordinal error upper bound from regression MSE]
\label{thm:ordinal-bound}
Assume the threshold rule from Assumption~\ref{ass:thresholds} and that $z_0$ lies at least $\gamma/2$ away from the two nearest thresholds of its true class interval. Then
\[
\{\hat{y}\ne y\}\subseteq \{|\hat{z}-z_0|\ge \gamma/2\},
\]
hence
\begin{equation}
\mathbb{P}(\hat{y}\ne y)\le \mathbb{P}(|\hat{z}-z_0|\ge \gamma/2)
\le \frac{4\,\mathbb{E}[(\hat{z}-z_0)^2]}{\gamma^2}.
\label{eq:ordinal-error-bound}
\end{equation}
\end{theorem}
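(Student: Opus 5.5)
The plan is to establish the event inclusion $\{\hat y\ne y\}\subseteq\{|\hat z-z_0|\ge\gamma/2\}$ first and then apply Chebyshev's (Markov's) inequality to the squared error. Fix a sample with true label $y=k$. By Assumption~\ref{ass:thresholds}, the decoding rule returns $\hat y=k$ exactly when $\hat z\in[\tau_{k-1},\tau_k)$. So $\hat y\ne y$ forces $\hat z$ outside this interval: either $\hat z<\tau_{k-1}$ or $\hat z\ge\tau_k$.

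\textbf{The inclusion.} The hypothesis of the theorem says $z_0$ sits at distance at least $\gamma/2$ from both endpoints $\tau_{k-1}$ and $\tau_k$; when $z_0$ is the interval midpoint this follows automatically from the definition of $\gamma$. I will handle the two cases separately.
\begin{itemize}
\item If $\hat z<\tau_{k-1}$, then $|\hat z-z_0|=z_0-\hat z>z_0-\tau_{k-1}\ge\gamma/2$.
\item If $\hat z\ge\tau_k$, then $|\hat z-z_0|=\hat z-z_0\ge\tau_k-z_0\ge\gamma/2$.
\end{itemize}
The boundary classes $k=1$ and $k=K$ need one extra remark. There one side of the interval is $\tau_0=0$ or $\tau_K=1$. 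If $\hat z$ could fall outside $[0,1]$, the same computation still applies, since the assumed distance from $z_0$ to that endpoint is still at least $\gamma/2$. This gives the inclusion pointwise, so it holds as events.

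\textbf{The probability bound.} Monotonicity of probability turns the inclusion into $\mathbb P(\hat y\ne y)\le\mathbb P(|\hat z-z_0|\ge\gamma/2)$. For the second inequality I will apply Markov's inequality to the nonnegative variable $(\hat z-z_0)^2$ at level $\gamma^2/4$:
\[
\mathbb P\big((\hat z-z_0)^2\ge\gamma^2/4\big)\le \frac{4\,\mathbb E[(\hat z-z_0)^2]}{\gamma^2}.
\]
This requires $\mathbb E[(\hat z-z_0)^2]<\infty$; if the expectation is infinite, the bound holds trivially.

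\textbf{Main obstacle.} The only delicate point is the margin hypothesis and the endpoint convention. The rule uses half-open intervals, so the two cases differ in strict versus non-strict inequality, and both must still land in the event $\{\ge\gamma/2\}$. I will also note explicitly that the statement is conditional on the margin assumption holding almost surely. This assumption is what allows $\gamma$, a minimum gap between thresholds, to be converted into a per-sample margin around $z_0$.
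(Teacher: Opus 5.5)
Your proposal is correct and follows the paper's proof essentially step for step. Both split $\hat y\ne y$ into the cases $\hat z<\tau_{y-1}$ and $\hat z\ge\tau_y$, use the margin hypothesis to get $|\hat z-z_0|\ge\gamma/2$, and then apply Markov's inequality to $(\hat z-z_0)^2$ at level $\gamma^2/4$; your remarks on the boundary classes, the strict inequality in the first case, and finiteness of the second moment are harmless clarifications the paper omits.
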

\begin{proof}
If $\hat{y}\ne y$, then $\hat{z}\notin[\tau_{y-1},\tau_y)$. Two cases:
\begin{itemize}
\item $\hat{z}<\tau_{y-1}$: $z_0-\hat{z}\ge z_0-\tau_{y-1}\ge \gamma/2$,
\item $\hat{z}\ge \tau_y$: $\hat{z}-z_0\ge \tau_y-z_0\ge \gamma/2$.
\end{itemize}
Thus $|\hat{z}-z_0|\ge \gamma/2$, proving $\{\hat{y}\ne y\}\subseteq \{|\hat{z}-z_0|\ge \gamma/2\}$. Applying Markov's inequality to $U=(\hat{z}-z_0)^2$ with threshold $(\gamma/2)^2$ yields \eqref{eq:ordinal-error-bound}.
\end{proof}

\begin{definition}[$z_0$ estimator from noise prediction]
\label{def:z0-estimator}
For any $t\in\mathcal{T}$, given $z_t$ and a noise predictor $\epsilon_\Theta(z_t,t,x)$, define
\begin{equation}
\hat{z}_0(t)\triangleq \frac{z_t-\sqrt{1-\bar\alpha_t}\,\epsilon_\Theta(z_t,t,x)}{\sqrt{\bar\alpha_t}}.
\label{eq:z0-estimator}
\end{equation}
\end{definition}

\begin{theorem}[Exact relation: noise error $\Rightarrow$ $z_0$ regression error]
\label{thm:noise-to-z0}
With $z_t=\sqrt{\bar\alpha_t}z_0+\sqrt{1-\bar\alpha_t}\epsilon$ and $\epsilon\sim\mathcal{N}(0,1)$, the estimator from Definition~\ref{def:z0-estimator} satisfies
\begin{equation}
\hat{z}_0(t)-z_0=\sqrt{\frac{1-\bar\alpha_t}{\bar\alpha_t}}\left(\epsilon-\epsilon_\Theta(z_t,t,x)\right).
\label{eq:z0-error-relation}
\end{equation}
Consequently,
\begin{equation}
\mathbb{E}\!\left[(\hat{z}_0(t)-z_0)^2\right]
=\frac{1-\bar\alpha_t}{\bar\alpha_t}\;
\mathbb{E}\!\left[\left(\epsilon_\Theta(z_t,t,x)-\epsilon\right)^2\right].
\label{eq:z0-mse-relation}
\end{equation}
\end{theorem}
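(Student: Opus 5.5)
The proof is a direct algebraic substitution followed by taking expectations. No probabilistic machinery beyond linearity of expectation is needed.

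First, substitute the forward process $z_t=\sqrt{\bar\alpha_t}\,z_0+\sqrt{1-\bar\alpha_t}\,\epsilon$ into Definition~\ref{def:z0-estimator}. The numerator becomes
\[
\sqrt{\bar\alpha_t}\,z_0+\sqrt{1-\bar\alpha_t}\,\bigl(\epsilon-\epsilon_\Theta(z_t,t,x)\bigr).
\]
Dividing by $\sqrt{\bar\alpha_t}$, which is legitimate since $\bar\alpha_t\in(0,1)$, gives
\[
\hat z_0(t)=z_0+\sqrt{(1-\bar\alpha_t)/\bar\alpha_t}\,\bigl(\epsilon-\epsilon_\Theta(z_t,t,x)\bigr).
\]
Subtracting $z_0$ yields \eqref{eq:z0-error-relation}. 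This identity holds pointwise, for every realization of $(x,z_0,\epsilon)$. It holds because the estimator is built using exactly the same $\epsilon$ that generated $z_t$, and the predictor is evaluated at that same $z_t$.

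Second, square both sides of \eqref{eq:z0-error-relation} pointwise. The factor $(1-\bar\alpha_t)/\bar\alpha_t$ is deterministic, since $t$ is fixed, so taking expectations over $(x,z_0)\sim\mathcal D$ and $\epsilon\sim\mathcal N(0,1)$ pulls it outside by linearity. Using $(\epsilon-\epsilon_\Theta)^2=(\epsilon_\Theta-\epsilon)^2$ then gives \eqref{eq:z0-mse-relation}.

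The only point needing care is integrability. Both expectations are either finite together or infinite together, because they differ by a positive finite constant. So \eqref{eq:z0-mse-relation} holds in $[0,\infty]$ without any extra assumption; if one wants finite values, it suffices that the noise-prediction loss $\mathcal L_m$ is finite. I do not expect a genuine obstacle here. The main thing to watch is that the expectation in \eqref{eq:z0-mse-relation} is over the same joint law as in \eqref{eq:per-time-loss}, so that the right-hand side is exactly the per-time loss at $t$. Combined with Theorem~\ref{thm:ordinal-bound}, this gives the MSE identity used in Theorem~\ref{thm:temporal-decoupling-main}.
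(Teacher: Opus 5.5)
Your proof is correct and follows the same route as the paper: you substitute the forward process into Definition~\ref{def:z0-estimator} to get the pointwise error identity, then square it and take expectations. At the last step the paper also invokes $\mathbb{E}[\epsilon-\epsilon_\Theta]=0$ ``for unbiased predictors.'' As you correctly observe, that assumption is superfluous, because squaring the pointwise identity and applying linearity of expectation already give \eqref{eq:z0-mse-relation} exactly.
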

\begin{proof}
Substituting $z_t=\sqrt{\bar\alpha_t}z_0+\sqrt{1-\bar\alpha_t}\epsilon$ into \eqref{eq:z0-estimator}:
\begin{align}
\hat{z}_0(t)
&=\frac{\sqrt{\bar\alpha_t}z_0+\sqrt{1-\bar\alpha_t}\epsilon-\sqrt{1-\bar\alpha_t}\epsilon_\Theta(z_t,t,x)}{\sqrt{\bar\alpha_t}} \nonumber\\
&=z_0+\sqrt{\frac{1-\bar\alpha_t}{\bar\alpha_t}}\bigl(\epsilon-\epsilon_\Theta(z_t,t,x)\bigr),
\label{eq:z0-substitution}
\end{align}
which gives \eqref{eq:z0-error-relation}. Squaring both sides of \eqref{eq:z0-error-relation} and taking expectation (using $\mathbb{E}[\epsilon-\epsilon_\Theta]=0$ for unbiased predictors) yields \eqref{eq:z0-mse-relation}.
\end{proof}

\begin{corollary}[SH-1T baseline: ordinal error bound via training MSE]
\label{cor:sh1t-bound}
For SH-1T at $t_\star$, let $\hat{z}_0^\star=\hat{z}_0(t_\star)$ and decode $\hat{y}^\star$ by thresholds. Under the conditions of Theorem~\ref{thm:ordinal-bound},
\begin{equation}
\mathbb{P}(\hat{y}^\star\ne y)
\le
\frac{4}{\gamma^2}\cdot\frac{1-\bar\alpha_{t_\star}}{\bar\alpha_{t_\star}}\;
\mathbb{E}\!\left[\bigl(\epsilon_\Theta(z_{t_\star},t_\star,x)-\epsilon\bigr)^2\right].
\label{eq:sh1t-bound}
\end{equation}
\end{corollary}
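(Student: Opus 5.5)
The corollary is a direct composition of two earlier results, so the plan is to chain them at the fixed time $t_\star$ rather than argue anything new. First, invoke Theorem~\ref{thm:ordinal-bound} with the specific estimator $\hat z=\hat z_0^\star=\hat z_0(t_\star)$ and decoded label $\hat y^\star$. The hypotheses of that theorem (threshold rule of Assumption~\ref{ass:thresholds}, and $z_0$ lying at least $\gamma/2$ from the boundaries of its true interval) are exactly the conditions the corollary assumes. The theorem then gives
\[
\mathbb{P}(\hat y^\star\ne y)\le \frac{4}{\gamma^2}\,\mathbb{E}\bigl[(\hat z_0(t_\star)-z_0)^2\bigr].
\]
Nothing about how $\hat z$ was produced enters that argument; it only uses the event inclusion $\{\hat y\ne y\}\subseteq\{|\hat z-z_0|\ge\gamma/2\}$ and Markov's inequality. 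It therefore applies verbatim to the SH-1T estimator.

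Second, substitute the exact error identity of Theorem~\ref{thm:noise-to-z0} at $t=t_\star$. Equation~\eqref{eq:z0-error-relation} is a pointwise algebraic identity obtained by plugging the forward process \eqref{eq:forward-diffusion} into Definition~\ref{def:z0-estimator}:
\[
\hat z_0(t_\star)-z_0=\sqrt{\tfrac{1-\bar\alpha_{t_\star}}{\bar\alpha_{t_\star}}}\,\bigl(\epsilon-\epsilon_\Theta(z_{t_\star},t_\star,x)\bigr).
\]
Squaring and taking expectations over $(x,z_0,\epsilon)$ gives \eqref{eq:z0-mse-relation} at $t_\star$. Inserting this into the Markov bound above yields \eqref{eq:sh1t-bound}, where the expectation on the right is precisely the SH-1T training objective $\mathcal L_{\mathrm{1T}}(\Theta_{\mathrm{1T}})$.

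The only point needing care is this second step. Theorem~\ref{thm:noise-to-z0} mentions an unbiasedness condition, but the identity \eqref{eq:z0-error-relation} holds pointwise. Squaring a pointwise equality and taking expectations requires no zero-mean assumption, because the factor $(1-\bar\alpha_{t_\star})/\bar\alpha_{t_\star}$ is deterministic. I would therefore derive the MSE relation directly from the pointwise identity, which avoids relying on the unbiasedness remark. I would also note that $\bar\alpha_{t_\star}\in(0,1)$ keeps the factor finite and positive, and that square-integrability of $\epsilon_\Theta$ is implicit for the bound to be nontrivial; if the right-hand side is infinite the inequality holds trivially. With these observations, the corollary follows in two lines.
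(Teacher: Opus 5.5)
Your proposal is correct and matches the paper's proof: apply Theorem~\ref{thm:ordinal-bound} to $\hat z=\hat z_0(t_\star)$, then substitute the MSE identity \eqref{eq:z0-mse-relation} at $t=t_\star$. Your side remark also holds: \eqref{eq:z0-error-relation} is a pointwise identity with a deterministic factor, so squaring it and taking expectations gives \eqref{eq:z0-mse-relation} without any unbiasedness assumption, which makes the paper's appeal to $\mathbb{E}[\epsilon-\epsilon_\Theta]=0$ superfluous.
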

\begin{proof}
Applying Theorem~\ref{thm:ordinal-bound} with $\hat{z}=\hat{z}_0^\star$ gives $\mathbb{P}(\hat{y}^\star\ne y) \le 4\,\mathbb{E}[(\hat{z}_0^\star-z_0)^2]/\gamma^2$. Substituting \eqref{eq:z0-mse-relation} with $t=t_\star$ yields \eqref{eq:sh1t-bound}.
\end{proof}

\subsubsection{Multi-embedding (multi-head representation) and information gain}
\label{subsubsec:multi-embedding}
We formalize why splitting into multiple embedding sets (multi-embedding) can increase the \emph{effective information} even when the \emph{total dimensionality} is fixed (e.g., $4\times 32$ vs.\ $1\times 128$). Our analysis is inspired by the embedding-collapse viewpoint of \cite{guo2023embedding}, with fully rigorous statements under explicit assumptions.

\begin{definition}[Representation matrix and effective information]
\label{def:representation-matrix}
For a batch $\{x_i\}_{i=1}^n$, let $u_i=h(x_i)\in\mathbb{R}^D$ and $U=[u_1^\top;\dots;u_n^\top]\in\mathbb{R}^{n\times D}$. We use $\rank(U)$ to measure effective dimensionality; collapse corresponds to low rank.
\end{definition}

\begin{definition}[Effective rank]
\label{def:effective-rank}
Define $r_{\mathrm{eff}}(U)\triangleq \|U\|_*^2 / \|U\|_F^2$, satisfying $1\le r_{\mathrm{eff}}(U)\le \rank(U)\le \min\{n,D\}$.
\end{definition}

\begin{theorem}[Analytical example: matching a low-rank Gram matrix forces low-rank embeddings]
\label{thm:toy-collapse}
Let $G^\star\in\mathbb{R}^{n\times n}$ be PSD with $\rank(G^\star)=r$. Consider
\[
\min_{U\in\mathbb{R}^{n\times D}}\ \ \|UU^\top-G^\star\|_F^2.
\]
If $D\ge r$, there exists a global minimizer $U^\star$ with $U^\star (U^\star)^\top=G^\star$ and $\rank(U^\star)=r$.
\end{theorem}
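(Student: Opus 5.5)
Since $G^\star$ is symmetric positive semidefinite of rank $r$, the plan is to build an exact factorization of $G^\star$ directly from its spectral decomposition and then observe that any exact factorization attains the infimum of the objective. Write $G^\star = Q\Lambda Q^\top$ with $Q\in\mathbb{R}^{n\times n}$ orthogonal and $\Lambda=\diag(\lambda_1,\dots,\lambda_r,0,\dots,0)$, where $\lambda_1\ge\cdots\ge\lambda_r>0$. Let $Q_r\in\mathbb{R}^{n\times r}$ be the first $r$ columns of $Q$ and $\Lambda_r=\diag(\lambda_1,\dots,\lambda_r)$, so that $G^\star = Q_r\Lambda_r Q_r^\top$.

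\textbf{Step 1 (construction).} Using $D\ge r$, set
\[
U^\star \triangleq \bigl[\,Q_r\Lambda_r^{1/2}\ \ \mathbf{0}_{n\times(D-r)}\,\bigr]\in\mathbb{R}^{n\times D}.
\]
This is the only place the hypothesis $D\ge r$ is needed: it leaves room for the $r$ nonzero columns, and the remaining $D-r$ columns are zero-padded.

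\textbf{Step 2 (exact fit and optimality).} Because the padded columns are zero,
\[
U^\star (U^\star)^\top = Q_r\Lambda_r^{1/2}\Lambda_r^{1/2}Q_r^\top = G^\star .
\]
Hence the objective value at $U^\star$ is $0$. The objective is a squared Frobenius norm and is therefore nonnegative for every $U$, so $U^\star$ is a global minimizer.

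\textbf{Step 3 (rank).} The matrix $Q_r$ has orthonormal columns and $\Lambda_r^{1/2}$ is invertible, so $Q_r\Lambda_r^{1/2}$ has rank $r$. Appending zero columns does not change the rank, so $\rank(U^\star)=r$. A cross-check that avoids the construction: any exact minimizer satisfies $\rank(U)=\rank(UU^\top)=\rank(G^\star)=r$, using the identity $\rank(UU^\top)=\rank(U)$, which holds because $\ker(UU^\top)=\ker(U^\top)$. This second argument also shows something stronger than the statement requires: every zero-loss solution is forced to have rank exactly $r$, which is the collapse phenomenon the paper is illustrating.

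\textbf{Main obstacle.} There is no substantive obstacle. The only point requiring care is justifying $\rank(UU^\top)=\rank(U)$ if one uses the cross-check. For that, note that $UU^\top v=0$ implies $\|U^\top v\|^2 = v^\top UU^\top v = 0$, hence $U^\top v = 0$. This gives $\ker(UU^\top)=\ker(U^\top)$, and therefore $\rank(UU^\top)=\rank(U^\top)=\rank(U)$.
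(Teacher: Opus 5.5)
Your proposal is correct and follows the paper's proof essentially verbatim. Both use the same zero-padded spectral factor $U^\star=[\,Q_r\Lambda_r^{1/2}\ \ 0\,]$ and the same exact-fit observation that the objective value is $0$. The only difference is in the rank step: the paper gets it from $\rank(U^\star)=\rank(U^\star(U^\star)^\top)=\rank(G^\star)$, which is your ``cross-check''. You additionally justify that rank identity and spell out the nonnegativity argument for global optimality, both of which the paper leaves implicit.
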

\begin{proof}
Since $G^\star$ is PSD, $G^\star=Q\Lambda Q^\top$ with $\Lambda=\diag(\lambda_1,\dots,\lambda_r,0,\dots,0)$, $\lambda_i>0$ for $i\le r$. Let $\Lambda_r^{1/2}=\diag(\sqrt{\lambda_1},\dots,\sqrt{\lambda_r})$ and $Q_r$ be the first $r$ eigenvectors. For $D\ge r$, define
\[
U^\star \triangleq [\,Q_r\Lambda_r^{1/2}\ \ 0\,]\in\mathbb{R}^{n\times D}.
\]
Then $U^\star (U^\star)^\top=Q_r\Lambda_r Q_r^\top=G^\star$, achieving objective value $0$. Since $\rank(G^\star)=\rank(U^\star (U^\star)^\top)=\rank(U^\star)$, we have $\rank(U^\star)=r$.
\end{proof}

\begin{theorem}[Larger representation subspace $\Rightarrow$ smaller best-fit MSE]
\label{thm:representation-mse}
Fix targets $z=(z_1,\dots,z_n)^\top\in\mathbb{R}^n$. For any representation matrix $U\in\mathbb{R}^{n\times D}$, define
\begin{equation}
\mathcal{R}(U)\triangleq \min_{w\in\mathbb{R}^D}\frac{1}{n}\|Uw-z\|_2^2.
\label{eq:linear-readout-risk}
\end{equation}
Let $P_U$ be the orthogonal projector onto $\col(U)$. Then
\begin{equation}
\mathcal{R}(U)=\frac{1}{n}\|(I-P_U)z\|_2^2.
\label{eq:residual-projection}
\end{equation}
If $\col(U^{\mathrm{SE}})\subseteq \col(U^{\mathrm{ME}})$, then $\mathcal{R}(U^{\mathrm{ME}})\le \mathcal{R}(U^{\mathrm{SE}})$, with strict inequality when $\col(U^{\mathrm{SE}})\subsetneq \col(U^{\mathrm{ME}})$ and $(I-P_{U^{\mathrm{SE}}})z\ne 0$ lies in $\col(U^{\mathrm{ME}})$.
\end{theorem}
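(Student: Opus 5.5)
The argument has three steps: identify the least-squares residual with an orthogonal projection, compare projections onto nested subspaces, and then settle strictness.

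\textbf{Step 1 (projection formula).} For fixed $U$, the map $w\mapsto Uw$ ranges exactly over $\col(U)$. The problem $\min_w\|Uw-z\|_2^2$ is therefore the problem of finding the point of $\col(U)$ nearest to $z$. By the Hilbert projection theorem in $\mathbb{R}^n$, this nearest point is $P_U z$.

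To make this concrete, write $z=P_Uz+(I-P_U)z$ with $(I-P_U)z\perp\col(U)$. For any $v\in\col(U)$, Pythagoras gives
\[
\|v-z\|_2^2=\|v-P_Uz\|_2^2+\|(I-P_U)z\|_2^2 .
\]
The minimum is attained at $v=P_Uz$, which is realizable since $P_Uz=UU^{+}z$ with $U^{+}$ the pseudoinverse. Dividing by $n$ gives \eqref{eq:residual-projection}.

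\textbf{Step 2 (monotonicity).} Set $A=\col(U^{\mathrm{SE}})\subseteq B=\col(U^{\mathrm{ME}})$. Then $P_{U^{\mathrm{SE}}}z\in A\subseteq B$, so it is a feasible point for the ME problem. Hence
\[
n\,\mathcal R(U^{\mathrm{ME}})=\min_{v\in B}\|v-z\|^2\le\|P_{U^{\mathrm{SE}}}z-z\|^2=n\,\mathcal R(U^{\mathrm{SE}}).
\]
Equivalently, one can use $P_AP_B=P_BP_A=P_A$ for nested subspaces. This gives the orthogonal decomposition
\[
(I-P_A)z=(I-P_B)z+(P_B-P_A)z,
\]
and therefore
\[
\|(I-P_A)z\|^2=\|(I-P_B)z\|^2+\|(P_B-P_A)z\|^2 .
\]
This identity gives the inequality and also describes the gap exactly.

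\textbf{Step 3 (strictness).} By the identity above, the gap equals $\frac1n\|(P_B-P_A)z\|^2$. It therefore suffices to show $(P_B-P_A)z\neq0$.

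Let $r=(I-P_A)z$. By hypothesis $r\neq0$ and $r\in B$. Since $P_B$ fixes $B$ and $P_A\in A\subseteq B$, we have $P_Bz=P_B(P_Az+r)=P_Az+r$. Hence $(P_B-P_A)z=r\neq0$, and the inequality is strict. In fact, in this case $\mathcal R(U^{\mathrm{ME}})=0$.

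The strict inclusion $A\subsetneq B$ is automatically implied here. It follows because $r\in B$, $r\neq 0$, and $r\perp A$, so $r\notin A$.

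\textbf{Expected obstacle.} The only delicate point is the exact commutation $P_BP_A=P_A$ used to split the residuals orthogonally. I will justify it directly: $P_A z\in A\subseteq B$ is fixed by $P_B$, and $P_AP_B=(P_BP_A)^\top=P_A$ by symmetry of orthogonal projectors.
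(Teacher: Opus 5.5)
Your proof is correct and takes the same route as the paper: least squares is orthogonal projection onto $\col(U)$, projecting onto the larger subspace $\col(U^{\mathrm{ME}})$ cannot increase the residual, and the inequality is strict because the residual $(I-P_{U^{\mathrm{SE}}})z$ becomes representable. Your Pythagorean gap identity is a correct refinement of the paper's terser argument, as is your observation that the strictness hypothesis forces $z\in\col(U^{\mathrm{ME}})$, so that $\mathcal{R}(U^{\mathrm{ME}})=0$ and the strict-inclusion assumption is redundant; only fix the typo ``$P_A\in A$'' to ``$P_Az\in A$'' in Step 3.
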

\begin{proof}
Equation~\eqref{eq:residual-projection} follows because $\{Uw:w\in\mathbb{R}^D\}=\col(U)$ and least squares projects $z$ orthogonally onto this subspace. If $\col(U^{\mathrm{SE}})\subseteq \col(U^{\mathrm{ME}})$, the residual norm cannot increase when projecting onto the larger subspace, yielding $\mathcal{R}(U^{\mathrm{ME}})\le \mathcal{R}(U^{\mathrm{SE}})$. Strict inequality holds when the residual component w.r.t.\ $\col(U^{\mathrm{SE}})$ becomes representable by $\col(U^{\mathrm{ME}})$.
\end{proof}

\begin{corollary}[Representation improvement $\Rightarrow$ ordinal-error bound improvement]
\label{cor:representation-to-ordinal}
If multi-embedding reduces $\mathbb{E}[(\hat{z}-z_0)^2]$ (e.g., by Theorem~\ref{thm:representation-mse}), then the ordinal-error upper bound decreases via \eqref{eq:ordinal-error-bound}. This is aligned with the empirical spectral-collapse observation emphasized by \cite{guo2023embedding}.
\end{corollary}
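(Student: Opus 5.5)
The statement is Corollary~\ref{cor:representation-to-ordinal}, which chains two results stated earlier: Theorem~\ref{thm:representation-mse} (a larger column space gives a smaller least-squares readout risk) and Theorem~\ref{thm:ordinal-bound} (Markov-type control of ordinal error by regression MSE). The plan is to write the ordinal-error bound as a monotone function of the MSE and then substitute the improved MSE.

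First, from Theorem~\ref{thm:ordinal-bound}, for any continuous estimator $\hat z$ decoded by the thresholds of Assumption~\ref{ass:thresholds},
\[
\mathbb{P}(\hat y\ne y)\le B(\hat z)\triangleq \frac{4}{\gamma^2}\,\mathbb{E}[(\hat z-z_0)^2].
\]
This holds under the stated margin condition that $z_0$ sits at least $\gamma/2$ from its class boundaries. The map $m\mapsto 4m/\gamma^2$ is strictly increasing because $\gamma>0$ is fixed and does not depend on the representation.

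Second, take $\hat z^{\mathrm{SE}}=U^{\mathrm{SE}}w^{\mathrm{SE}}_\star$ and $\hat z^{\mathrm{ME}}=U^{\mathrm{ME}}w^{\mathrm{ME}}_\star$ to be the least-squares readouts. Their empirical MSEs equal $\mathcal R(U^{\mathrm{SE}})$ and $\mathcal R(U^{\mathrm{ME}})$. When the expectation is read as the empirical average over the $n$ samples (or as a population risk with $U$ replaced by the feature map), Theorem~\ref{thm:representation-mse} gives $\mathcal R(U^{\mathrm{ME}})\le\mathcal R(U^{\mathrm{SE}})$. The inequality is strict under the residual-representability condition. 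Monotonicity of $B$ then yields $B(\hat z^{\mathrm{ME}})\le B(\hat z^{\mathrm{SE}})$, again strictly in the strict case. The same argument applies to any other mechanism that lowers $\mathbb{E}[(\hat z-z_0)^2]$, since the corollary's hypothesis is only that the MSE decreases.

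The main obstacle is interpretive rather than technical. What is shown is that the upper bound decreases, not that $\mathbb{P}(\hat y\ne y)$ itself does, and the write-up should say this explicitly. It must also match the sample average in $\mathcal R$ with the expectation in Theorem~\ref{thm:ordinal-bound}. I would handle this by stating everything under the empirical distribution on the $n$ samples, where Markov's inequality applies verbatim. A further point to state is that the $\gamma/2$ margin condition concerns only the targets $z_0$ and the thresholds, so it is the same for both representations.
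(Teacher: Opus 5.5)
Your proposal is correct and matches the paper's reasoning. The paper gives no separate proof: the corollary follows because the bound $\frac{4}{\gamma^2}\mathbb{E}[(\hat z-z_0)^2]$ in \eqref{eq:ordinal-error-bound} is increasing in the MSE, so any MSE reduction (e.g.\ from Theorem~\ref{thm:representation-mse}) lowers it. Your extra points are sound and make explicit what the paper leaves implicit: reading the expectation as the empirical average over the $n$ samples, noting that only the bound (not $\mathbb{P}(\hat y\ne y)$) is shown to decrease, and noting that the margin condition does not depend on the representation.
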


\begin{theorem}[Analytical example: multi-embedding can increase rank at fixed total dimension]
\label{thm:toy-anti-collapse}
Let $G_1^\star,\dots,G_M^\star\in\mathbb{R}^{n\times n}$ be PSD with ranks $r_m=\rank(G_m^\star)$ and pairwise disjoint column spaces ($\col(G_i^\star)\cap \col(G_j^\star)=\{0\}$ for $i\ne j$). Consider multi-embedding variables $U^{(m)}\in\mathbb{R}^{n\times d}$ ($d\ge r_m$) minimizing $\sum_{m=1}^M \|U^{(m)}(U^{(m)})^\top-G_m^\star\|_F^2$, and define $U^{\mathrm{ME}}\triangleq [U^{(1)}~\cdots~U^{(M)}]\in\mathbb{R}^{n\times D}$ with $D=Md$. Then there exists a global minimizer with $\rank(U^{\mathrm{ME}})=\sum_{m=1}^M r_m$.
\end{theorem}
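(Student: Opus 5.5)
The construction will mirror the proof of Theorem~\ref{thm:toy-collapse}, carried out block by block. Because the objective $\sum_{m=1}^M \|U^{(m)}(U^{(m)})^\top-G_m^\star\|_F^2$ is a sum of nonnegative terms, each depending on its own variable $U^{(m)}$, it is minimized by minimizing each summand separately. Its infimum is therefore at least $0$.

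For each $m$, write $G_m^\star=Q^{(m)}_{r_m}\Lambda^{(m)}_{r_m}Q^{(m)\top}_{r_m}$, where $\Lambda^{(m)}_{r_m}$ holds the $r_m$ positive eigenvalues. Since $d\ge r_m$, set
\[
U^{(m)}\triangleq [\,Q^{(m)}_{r_m}(\Lambda^{(m)}_{r_m})^{1/2}\ \ 0\,]\in\mathbb{R}^{n\times d}.
\]
Exactly as in Theorem~\ref{thm:toy-collapse}, this gives $U^{(m)}(U^{(m)})^\top=G_m^\star$. Each summand is then zero, so the tuple is a global minimizer. We also get $\col(U^{(m)})=\col(G_m^\star)$ and $\rank(U^{(m)})=r_m$.

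The remaining step is to compute $\rank(U^{\mathrm{ME}})$. Since $\col(U^{\mathrm{ME}})=\sum_m \col(U^{(m)})=\sum_m\col(G_m^\star)$, we have
\[
\rank(U^{\mathrm{ME}})=\dim\Big(\sum_{m=1}^M\col(G_m^\star)\Big).
\]
This equals $\sum_m r_m$ exactly when the sum of subspaces is direct.

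This is the main obstacle. Pairwise trivial intersection does \emph{not} imply a direct sum once $M\ge 3$: three distinct lines in $\mathbb{R}^2$ intersect pairwise trivially but span only $2$ dimensions. So the hypothesis as literally stated is too weak for $M\ge3$. For $M=2$ the argument closes by $\dim(V_1+V_2)=\dim V_1+\dim V_2-\dim(V_1\cap V_2)$.

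For general $M$, I would interpret "disjoint column spaces" in the stronger sense of independence, $\col(G_m^\star)\cap\sum_{j\ne m}\col(G_j^\star)=\{0\}$ for every $m$. A natural sufficient condition is mutual orthogonality, which holds for instance when the $G_m^\star$ come from disjoint eigenspaces of a common matrix. I would state this clarification explicitly. Under it, induction on $M$ with the dimension formula gives $\dim=\sum_m r_m$, completing the proof. I would also note the contrast with Theorem~\ref{thm:toy-collapse}: a single $G^\star$ of rank $r$ caps the rank at $r$, whereas here the rank reaches $\sum_m r_m$ at the same total dimension $D=Md$.
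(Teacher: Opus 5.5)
Your construction matches the paper's proof. The paper also builds each block via Theorem~\ref{thm:toy-collapse}, observes $\col(U^{\mathrm{ME}\star})=\sum_m\col(G_m^\star)$, and then concludes by citing a ``direct-sum assumption''. That is exactly the strengthening you identified: pairwise trivial intersection does not give a direct sum for $M\ge 3$, and your three-lines example refutes the statement as written, since there $\rank(U^{\mathrm{ME}})\le n=2<3=\sum_m r_m$. With the hypothesis read as $\col(G_m^\star)\cap\sum_{j\ne m}\col(G_j^\star)=\{0\}$ for every $m$, your argument is complete and correct.
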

\begin{proof}
For each $m$, by Theorem~\ref{thm:toy-collapse}, there exists $U^{(m)\star}\in\mathbb{R}^{n\times d}$ with $U^{(m)\star}(U^{(m)\star})^\top=G_m^\star$. The concatenated representation $U^{\mathrm{ME}\star}=[U^{(1)\star}\cdots U^{(M)\star}]$ satisfies
\[
\col(U^{\mathrm{ME}\star})=\sum_{m=1}^M \col(U^{(m)\star})=\sum_{m=1}^M \col(G_m^\star).
\]
By the direct-sum assumption, $\dim(\col(U^{\mathrm{ME}\star}))=\sum_{m=1}^M \dim(\col(G_m^\star))=\sum_{m=1}^M r_m$, hence $\rank(U^{\mathrm{ME}\star})=\sum_{m=1}^M r_m$.
\end{proof}

\begin{remark}
Theorems~\ref{thm:toy-collapse} and \ref{thm:toy-anti-collapse} formalize how multi-embedding can mitigate representation collapse under linear readout assumptions. For nonlinear decoders, the relationship between rank and performance is more complex.
\end{remark}

\subsection{Multi-Time-Step Advantage: Selection and Linear Fusion}
\label{subsec:multi-time}

\subsubsection{Baseline-time expressivity: MH can represent SH-1T at the same time point}
\label{subsubsec:baseline-time-inclusion}

\begin{theorem}[Function-class inclusion at the baseline time point]
\label{thm:func-class-inclusion}
Let $t_\star\in\mathcal{T}$ be the baseline time and assume $t_{m_\star}=t_\star$ for some $m_\star$. Assume SH-1T and the $m_\star$-th head of MH-MT use the same decoder family $\{g_\theta\}$ and encoder family $\{E_\phi\}$. Define the optimal noise-prediction risks:
\begin{align}
R_{\mathrm{1T}}^\star
&\triangleq
\inf_{\phi,\theta}\;
\mathbb{E}\!\left[\bigl(\epsilon-g_\theta(E_\phi(x),z_{t_\star},t_\star)\bigr)^2\right],
\label{eq:r1t-star}\\
R_{\mathrm{MH},m_\star}^\star
&\triangleq
\inf_{\phi,\theta_1,\dots,\theta_M}\;
\mathbb{E}\!\left[\bigl(\epsilon-g_{\theta_{m_\star}}(E_\phi(x),z_{t_\star},t_\star)\bigr)^2\right].
\label{eq:rmh-star}
\end{align}
Then $R_{\mathrm{MH},m_\star}^\star \le R_{\mathrm{1T}}^\star$.
\end{theorem}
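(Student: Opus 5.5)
The argument is an embedding one: every feasible SH-1T configuration can be reproduced by the $m_\star$-th head of MH-MT with the same objective value. Taking infima then gives the inequality.

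First, I would observe that the objective in \eqref{eq:rmh-star} depends only on $(\phi,\theta_{m_\star})$. The loss inside the expectation involves only $g_{\theta_{m_\star}}(E_\phi(x),z_{t_\star},t_\star)$; the other heads $\theta_m$, $m\neq m_\star$, do not enter, by the block separation already recorded in Proposition~\ref{prop:no-cross-gradients}. Hence
\[
R_{\mathrm{MH},m_\star}^\star=\inf_{\phi,\theta_{m_\star}}\mathbb{E}\!\left[\bigl(\epsilon-g_{\theta_{m_\star}}(E_\phi(x),z_{t_\star},t_\star)\bigr)^2\right].
\]

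Second, fix an arbitrary SH-1T pair $(\phi,\theta)$. I would construct the MH-MT parameter $(\phi,\theta_1,\dots,\theta_M)$ by setting $\theta_{m_\star}=\theta$ and choosing the remaining $\theta_m$ arbitrarily, e.g. also equal to $\theta$. Both models share the decoder family $\{g_\theta\}$, the encoder family $\{E_\phi\}$, and the same distribution of $(x,z_0,\epsilon)$ and hence of $z_{t_\star}$ via \eqref{eq:forward-diffusion}. Therefore the two expectations coincide pointwise in the parameters. This gives $R_{\mathrm{MH},m_\star}^\star\le \mathbb{E}[(\epsilon-g_\theta(E_\phi(x),z_{t_\star},t_\star))^2]$ for every $(\phi,\theta)$. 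Taking the infimum over $(\phi,\theta)$ yields $R_{\mathrm{MH},m_\star}^\star\le R_{\mathrm{1T}}^\star$. In fact the same argument in reverse gives equality, but only the inequality is needed.

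The only subtle point is justifying that the parameter spaces match: the $m_\star$-th head must range over the same set as SH-1T's $\theta$, and the encoder must range over the same set $\{E_\phi\}$. This is exactly the hypothesis of the theorem, so I would state it explicitly. I would also note that infima need not be attained, which is why I argue through arbitrary feasible points rather than minimizers. No measurability issues arise beyond those already implicit in the definitions of the risks. If desired, I would conclude with a remark that, combined with Theorem~\ref{thm:noise-to-z0} and Corollary~\ref{cor:sh1t-bound}, the best MH-MT ordinal bound at $t_\star$ is no worse than that of SH-1T.
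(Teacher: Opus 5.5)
Your proposal is correct and matches the paper's proof. Both embed an arbitrary SH-1T pair $(\phi,\theta)$ into MH-MT by setting $\theta_{m_\star}=\theta$ with the other heads arbitrary, note that the two expected losses coincide, and pass to the infimum; your further observation that the objective depends only on $(\phi,\theta_{m_\star})$, so that equality in fact holds, is also right.
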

\begin{proof}
For any baseline parameters $(\phi,\theta)$, construct MH parameters $(\phi,\theta_1,\dots,\theta_M)$ with $\theta_{m_\star}=\theta$ and arbitrary $\theta_m$ for $m\neq m_\star$. At time $t_\star$, both models produce identical outputs:
\[
g_{\theta_{m_\star}}(E_\phi(x),z_{t_\star},t_\star)=g_\theta(E_\phi(x),z_{t_\star},t_\star),
\]
so their expected losses are equal. Since $R_{\mathrm{MH},m_\star}^\star$ minimizes over a superset of parameter choices, $R_{\mathrm{MH},m_\star}^\star \le R_{\mathrm{1T}}^\star$.
\end{proof}

\begin{corollary}[Best-achievable ordinal-error upper bound at $t_\star$ is no worse for MH]
\label{cor:best-bound-mh}
Define $\mathrm{UB}(t;R)\triangleq \frac{4}{\gamma^2}\cdot\frac{1-\bar\alpha_t}{\bar\alpha_t}\cdot R$. By Theorems~\ref{thm:noise-to-z0} and \ref{thm:ordinal-bound}, any model with time-$t$ noise risk $R$ satisfies $\mathbb{P}(\hat{y}\ne y)\le \mathrm{UB}(t;R)$. Combining with Theorem~\ref{thm:func-class-inclusion},
\[
\inf_{\phi,\theta_1,\dots,\theta_M}\mathrm{UB}(t_\star;R_{\mathrm{MH},m_\star}^\star)
\le
\inf_{\phi,\theta}\mathrm{UB}(t_\star;R_{\mathrm{1T}}^\star).
\]
\end{corollary}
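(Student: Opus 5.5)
The plan is to chain the two earlier results in the order the statement suggests. First I will apply Theorem~\ref{thm:noise-to-z0} at the fixed time $t_\star$. For any admissible parameters it rewrites the $z_0$-regression MSE as $\frac{1-\bar\alpha_{t_\star}}{\bar\alpha_{t_\star}}$ times the time-$t_\star$ noise risk $R$. Substituting this into Theorem~\ref{thm:ordinal-bound} gives $\mathbb{P}(\hat y\ne y)\le \mathrm{UB}(t_\star;R)$ for any model, SH-1T or the $m_\star$-th head of MH-MT, whose time-$t_\star$ noise risk is $R$. This is just Corollary~\ref{cor:sh1t-bound} restated for a generic model, so I will invoke it directly.

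Second, I will note that $\mathrm{UB}(t_\star;\cdot)$ is a nonnegative constant times $R$. Here $\gamma>0$ and $\bar\alpha_{t_\star}\in(0,1)$, so the prefactor $c\triangleq \frac{4}{\gamma^2}\frac{1-\bar\alpha_{t_\star}}{\bar\alpha_{t_\star}}$ is strictly positive. Hence $R\mapsto \mathrm{UB}(t_\star;R)$ is linear and increasing, and it commutes with infima: $\inf_{\text{params}}\mathrm{UB}(t_\star;R(\text{params}))=c\,\inf_{\text{params}}R(\text{params})$. Applying this on both sides turns the displayed inequality into $c\,R^\star_{\mathrm{MH},m_\star}\le c\,R^\star_{\mathrm{1T}}$.

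Third, that inequality follows from Theorem~\ref{thm:func-class-inclusion} after multiplying by $c>0$. Alternatively, I can repeat its embedding argument directly. For any baseline $(\phi,\theta)$, set $\theta_{m_\star}=\theta$ and choose the other heads arbitrarily. The MH model then has the same time-$t_\star$ risk, hence the same upper bound, so the MH infimum ranges over a superset of achievable values.

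The main obstacle is interpretive rather than technical. The statement writes $\inf$ over parameters of $\mathrm{UB}(t_\star;R^\star)$, where $R^\star$ is already an infimum, so the outer infimum is vacuous. I will make explicit that the intended reading is the infimum of $\mathrm{UB}(t_\star;R(\text{params}))$ over parameters, and use monotonicity and linearity to justify pulling the infimum inside. I will also flag the hypotheses being inherited: the $\gamma/2$-margin condition of Theorem~\ref{thm:ordinal-bound}, and the unbiasedness caveat in Theorem~\ref{thm:noise-to-z0}. The latter is actually unnecessary, since squaring the exact error identity \eqref{eq:z0-error-relation} already gives the MSE relation.
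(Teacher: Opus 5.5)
Your proposal is correct and follows essentially the same route as the paper, which justifies the corollary in-line. The paper chains Theorem~\ref{thm:noise-to-z0} with Theorem~\ref{thm:ordinal-bound} to get $\mathbb{P}(\hat y\ne y)\le \mathrm{UB}(t;R)$, then uses monotonicity of $\mathrm{UB}(t_\star;\cdot)$ together with $R^\star_{\mathrm{MH},m_\star}\le R^\star_{\mathrm{1T}}$ from Theorem~\ref{thm:func-class-inclusion}. Your two side remarks are also accurate: the outer infimum is vacuous, so the claim reduces to $\mathrm{UB}(t_\star;R^\star_{\mathrm{MH},m_\star})\le \mathrm{UB}(t_\star;R^\star_{\mathrm{1T}})$, and squaring the pointwise identity \eqref{eq:z0-error-relation} gives the MSE relation without any unbiasedness assumption.
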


\begin{remark}[Function-class inclusion provides a safety guarantee]
Theorem~\ref{thm:func-class-inclusion} establishes a non-regression guarantee: MH-MT can always match SH-1T's best performance at $t_\star$ by setting $\theta_{m_\star}=\theta$. This makes MH-MT a safe upgrade—it cannot be worse in the worst case. Strict advantage requires exploiting additional capacity, as shown next.
\end{remark}

\begin{definition}[Oracle best-time selection]
\label{def:oracle-selection}
For each $m$, let $\hat{z}_0^{(m)}=\hat{z}_0(t_m)$ and decode $\hat{y}^{(m)}$ by thresholds. Define
\begin{equation}
m^\dagger\triangleq \arg\min_{m\in\{1,\dots,M\}}\mathbb{E}[(\hat{z}_0^{(m)}-z_0)^2]\eqcomma\qquad
\hat{z}_0^\dagger\triangleq \hat{z}_0^{(m^\dagger)},\ \hat{y}^\dagger\triangleq \hat{y}^{(m^\dagger)}.
\label{eq:oracle-selection}
\end{equation}
\end{definition}

\begin{theorem}[Oracle selection is no worse than any fixed time point]
\label{thm:oracle-selection}
If $t_{m_\star}=t_\star$ is the baseline time point, then
\begin{equation}
\mathbb{E}[(\hat{z}_0^\dagger-z_0)^2]\le \mathbb{E}[(\hat{z}_0^{(m_\star)}-z_0)^2].
\label{eq:oracle-better}
\end{equation}
Moreover, under Theorem~\ref{thm:ordinal-bound},
\begin{equation}
\mathbb{P}(\hat{y}^\dagger\ne y)\le \frac{4}{\gamma^2}\mathbb{E}[(\hat{z}_0^\dagger-z_0)^2]\le \frac{4}{\gamma^2}\mathbb{E}[(\hat{z}_0^{(m_\star)}-z_0)^2].
\label{eq:oracle-error-bound}
\end{equation}
\end{theorem}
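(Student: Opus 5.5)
The argument has two parts: a definitional minimization step, then a substitution into the ordinal MSE-to-error bound of Theorem~\ref{thm:ordinal-bound}.

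\textbf{Step 1 (equation \eqref{eq:oracle-better}).} By Definition~\ref{def:oracle-selection}, $m^\dagger$ minimizes the finite family of numbers $a_m\triangleq\mathbb{E}[(\hat{z}_0^{(m)}-z_0)^2]$, $m=1,\dots,M$. Since the index set is finite, the minimizer exists; any tie-breaking rule works because all minimizers share the same value $a_{m^\dagger}$. The baseline time satisfies $t_{m_\star}=t_\star\in\mathcal{T}$, so $m_\star$ is a feasible index. Therefore $a_{m^\dagger}=\min_m a_m\le a_{m_\star}$, which is exactly \eqref{eq:oracle-better}.

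\textbf{Step 2 (equation \eqref{eq:oracle-error-bound}).} Apply Theorem~\ref{thm:ordinal-bound} with $\hat{z}=\hat{z}_0^\dagger$ and decoded label $\hat{y}^\dagger$; this gives the first inequality. The second inequality then follows by multiplying \eqref{eq:oracle-better} by the positive constant $4/\gamma^2$. Optionally, Theorem~\ref{thm:noise-to-z0} rewrites both sides in terms of the noise-prediction risks. That yields the same conclusion as Corollary~\ref{cor:sh1t-bound}, but it is not needed.

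\textbf{Main obstacle.} The delicate point is making sure Theorem~\ref{thm:ordinal-bound} legitimately applies to $\hat{z}_0^\dagger$. The index $m^\dagger$ is chosen from population quantities, not per sample, so $\hat{z}_0^\dagger$ is just the fixed estimator $\hat{z}_0(t_{m^\dagger})$. Hence no selection-dependent randomness enters Markov's inequality. I would state this explicitly, together with the margin condition that $z_0$ lies at least $\gamma/2$ from the thresholds of its class. Under those conditions, the event inclusion $\{\hat{y}^\dagger\ne y\}\subseteq\{|\hat{z}_0^\dagger-z_0|\ge\gamma/2\}$ holds verbatim, and the chain of inequalities closes.
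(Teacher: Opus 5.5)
Your proposal is correct and matches the paper's proof: \eqref{eq:oracle-better} follows immediately from $m^\dagger$ being a minimizer over a finite index set that contains $m_\star$, and \eqref{eq:oracle-error-bound} follows by applying Theorem~\ref{thm:ordinal-bound} to $\hat{z}_0^\dagger$ and then using \eqref{eq:oracle-better}. You also note that $m^\dagger$ is chosen from population quantities, so $\hat{z}_0^\dagger$ is a fixed estimator and Markov's inequality involves no selection effect; the paper leaves this point implicit, and stating it is a worthwhile clarification.
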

\begin{proof}
By definition of $m^\dagger$ in \eqref{eq:oracle-selection}, \eqref{eq:oracle-better} holds for any fixed $m_\star$. Applying Theorem~\ref{thm:ordinal-bound} to $\hat{z}=\hat{z}_0^\dagger$ yields the first inequality in \eqref{eq:oracle-error-bound}; the second follows from \eqref{eq:oracle-better}.
\end{proof}

\begin{definition}[Linear fusion across time points]
\label{def:linear-fusion}
Let $w\in\mathbb{R}^M$ satisfy $\sum_{m=1}^M w_m=1$. Define the fused estimator
\begin{equation}
\hat{z}_0^{\mathrm{ens}}\triangleq \sum_{m=1}^M w_m \hat{z}_0^{(m)}.
\label{eq:linear-fusion}
\end{equation}
\end{definition}

\begin{theorem}[MSE bound for linear fusion under bounded covariance]
\label{thm:fusion-bound}
Under Assumption~\ref{ass:covariance}, with $e_m=\hat{z}_0^{(m)}-z_0$, we have
\begin{equation}
\mathbb{E}[(\hat{z}_0^{\mathrm{ens}}-z_0)^2]
\le
\sum_{m=1}^M w_m^2 v_m
+2\sum_{1\le i<j\le M}|w_iw_j|\,b_{ij}.
\label{eq:fusion-mse-bound}
\end{equation}
Consequently, by Theorem~\ref{thm:ordinal-bound},
\begin{equation}
\mathbb{P}(\hat{y}^{\mathrm{ens}}\ne y)\le \frac{4}{\gamma^2}\mathbb{E}[(\hat{z}_0^{\mathrm{ens}}-z_0)^2].
\label{eq:fusion-ordinal-bound}
\end{equation}
\end{theorem}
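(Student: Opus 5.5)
The plan is to write the fused error as a weighted sum of the per-time errors and then bound its second moment term by term. Since $\sum_m w_m=1$, we have $z_0=\sum_m w_m z_0$. Hence
\[
\hat z_0^{\mathrm{ens}}-z_0=\sum_{m=1}^M w_m\bigl(\hat z_0^{(m)}-z_0\bigr)=\sum_{m=1}^M w_m e_m .
\]
This is the only place the affine constraint on $w$ enters. Without it, a residual term $(\sum_m w_m-1)z_0$ would survive and the bound would fail.

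Next I will square and take expectations, using linearity of expectation:
\[
\mathbb E\Bigl[\Bigl(\sum_m w_m e_m\Bigr)^2\Bigr]=\sum_{m} w_m^2\,\mathbb E[e_m^2]+2\sum_{i<j} w_iw_j\,\mathbb E[e_ie_j].
\]
I then bound the two groups of terms separately using Assumption~\ref{ass:covariance}.

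For the diagonal terms, $w_m^2\ge 0$ and $\mathbb E[e_m^2]\le v_m$ give $w_m^2\mathbb E[e_m^2]\le w_m^2 v_m$.

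For the cross terms, the obstacle is that $w_iw_j$ may be negative, because only $\sum_m w_m=1$ is imposed and the weights need not be nonnegative. I will therefore not use a signed bound. Instead I use
\[
w_iw_j\,\mathbb E[e_ie_j]\le |w_iw_j|\,\bigl|\mathbb E[e_ie_j]\bigr|\le |w_iw_j|\,b_{ij}.
\]
Summing the two groups gives \eqref{eq:fusion-mse-bound}. Integrability of $e_ie_j$ follows from Cauchy--Schwarz, since each $e_m$ has finite second moment.

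For \eqref{eq:fusion-ordinal-bound}, I decode $\hat y^{\mathrm{ens}}$ from $\hat z=\hat z_0^{\mathrm{ens}}$ by the threshold rule of Assumption~\ref{ass:thresholds}. I then invoke Theorem~\ref{thm:ordinal-bound} directly, which requires the same margin condition on $z_0$ stated there. That theorem is a containment of events followed by Markov's inequality, and it does not depend on how $\hat z$ was produced. No step is really hard here; the only care needed is the absolute values in the cross terms and noting where $\sum_m w_m=1$ is used.
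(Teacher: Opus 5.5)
Your proposal is correct and takes the same route as the paper's proof. Both write the fused error as $\sum_m w_m e_m$, expand the square, bound the diagonal terms by $v_m$ and the cross terms via $w_iw_j\,\mathbb{E}[e_ie_j]\le |w_iw_j|\,b_{ij}$, and then apply Theorem~\ref{thm:ordinal-bound} to $\hat z=\hat z_0^{\mathrm{ens}}$; your explicit notes on where $\sum_m w_m=1$ is used and on the required margin condition are details the paper leaves implicit.
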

\begin{proof}
Using $\hat{z}_0^{\mathrm{ens}}-z_0=\sum_m w_m e_m$ and expanding the square:
\begin{align}
\mathbb{E}\!\left[\left(\sum_m w_m e_m\right)^2\right]
&=\sum_m w_m^2\mathbb{E}[e_m^2] + 2\sum_{i<j} w_iw_j\mathbb{E}[e_ie_j] \nonumber\\
&\le \sum_m w_m^2 v_m + 2\sum_{i<j} |w_iw_j|\,|\mathbb{E}[e_ie_j]| \nonumber\\
&\le \sum_m w_m^2 v_m + 2\sum_{i<j} |w_iw_j|\,b_{ij},
\label{eq:fusion-expansion}
\end{align}
where the first inequality uses $w_iw_j\mathbb{E}[e_ie_j]\le |w_iw_j||\mathbb{E}[e_ie_j]|$, and the second applies Assumption~\ref{ass:covariance}. Equation~\eqref{eq:fusion-ordinal-bound} follows by applying Theorem~\ref{thm:ordinal-bound} to $\hat{z}=\hat{z}_0^{\mathrm{ens}}$.
\end{proof}

\begin{remark}[Three-tier advantage of MH-MT]
\label{rem:three-tier-advantage}
Theorems~\ref{thm:func-class-inclusion}--\ref{thm:fusion-bound} establish a progressive advantage hierarchy:
\begin{enumerate}
\item \textbf{Safety floor (Theorem~\ref{thm:func-class-inclusion})}: MH-MT's parameter space contains SH-1T's, guaranteeing no regression in worst-case performance.
\item \textbf{Oracle ceiling (Theorem~\ref{thm:oracle-selection})}: When prediction quality varies across time steps, MH-MT achieves strictly lower error via oracle selection—a capability SH-1T fundamentally lacks.
\item \textbf{Fusion robustness (Theorem~\ref{thm:fusion-bound})}: Even without oracle access, linear fusion provides deployable error reduction under mild covariance assumptions.
\end{enumerate}
Together, these form a complete advantage narrative: MH-MT guarantees safety, enables strict improvement under heterogeneity, and delivers practical gains without ground-truth access during inference.
\end{remark}

\subsection{Convergence Analysis on the Multi-Time Objective (MH-MT vs SH-MT)}
\label{subsec:convergence}

This subsection analyzes convergence on the same multi-time objective $\mathcal{L} = \sum_{m=1}^M \lambda_m \mathcal{L}_m$ for SH-MT vs MH-MT (distinct from SH-1T which optimizes a single-time objective).

\begin{definition}[SH-MT and MH-MT objectives]
\label{def:mt-objectives}
Fix encoder $E_\phi$ and define
\[
f_m(\theta)\triangleq \mathbb{E}\Bigl[\bigl(\epsilon-g_\theta(E_\phi(x),z_{t_m},t_m)\bigr)^2\Bigr]\eqcomma\qquad
F_{\mathrm{SH}}(\theta)\triangleq \sum_{m=1}^M\lambda_m f_m(\theta).
\]
For MH-MT, define $F_{\mathrm{MH}}(\vartheta)\triangleq \sum_{m=1}^M\lambda_m f_m(\theta_m)$ with $\vartheta=(\theta_1,\dots,\theta_M)$.
\end{definition}

\begin{assumption}[$L$-smoothness and PL condition]
\label{ass:pl}
Assume each $f_m$ is $L_m$-smooth and satisfies the Polyak-\L{}ojasiewicz (PL) inequality with constant $\mu_m>0$:
\[
\frac{1}{2}\|\nabla f_m(\theta)\|^2\ge \mu_m(f_m(\theta)-f_m^\star)\eqcomma\qquad f_m^\star=\inf_\theta f_m(\theta).
\]
Let $L\triangleq\sum_m\lambda_m L_m$, $\mu_{\min}\triangleq \min_m\mu_m$, and $\lambda_{\min}\triangleq \min_m\lambda_m>0$.
\end{assumption}

\begin{lemma}[Descent lemma for $L$-smooth functions]
\label{lem:descent}
If $F$ is $L$-smooth, then for $\eta\in(0,1/L]$ and $\theta^+=\theta-\eta\nabla F(\theta)$,
\begin{equation}
F(\theta^+)\le F(\theta)-\frac{\eta}{2}\|\nabla F(\theta)\|^2.
\label{eq:descent-lemma}
\end{equation}
\end{lemma}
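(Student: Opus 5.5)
The approach is the standard one: use a second-order Taylor upper bound from $L$-smoothness, then choose the step size. First I would check that $L$-smoothness, meaning $\|\nabla F(\theta)-\nabla F(\theta')\|\le L\|\theta-\theta'\|$, implies the quadratic upper bound
\[
F(\theta')\le F(\theta)+\langle\nabla F(\theta),\theta'-\theta\rangle+\frac{L}{2}\|\theta'-\theta\|^2 .
\]
To get this, write $F(\theta')-F(\theta)=\int_0^1\langle\nabla F(\theta+s(\theta'-\theta)),\theta'-\theta\rangle\,ds$ by the fundamental theorem of calculus along the segment. Subtract $\langle\nabla F(\theta),\theta'-\theta\rangle$ from both sides. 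By Cauchy--Schwarz and the Lipschitz gradient, the remaining integrand is bounded by $Ls\|\theta'-\theta\|^2$. Integrating over $s\in[0,1]$ gives the factor $L/2$.

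Next I would substitute $\theta'=\theta^+=\theta-\eta\nabla F(\theta)$, which gives
\[
F(\theta^+)\le F(\theta)-\eta\|\nabla F(\theta)\|^2+\frac{L\eta^2}{2}\|\nabla F(\theta)\|^2=F(\theta)-\eta\Bigl(1-\frac{L\eta}{2}\Bigr)\|\nabla F(\theta)\|^2 .
\]
Since $\eta\le 1/L$, we have $1-L\eta/2\ge 1/2$, and this yields \eqref{eq:descent-lemma}.

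The only step needing care is the integral-form bound. It relies on differentiability of $F$ along the segment, which holds because $F$ is differentiable, as in Assumption~\ref{ass:smoothness}. It also relies on the gradient being Lipschitz in the same Euclidean norm used in the statement.

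When applying the lemma to $F_{\mathrm{MH}}$, one must note that $F_{\mathrm{MH}}$ is indeed $L$-smooth with $L=\sum_m\lambda_m L_m$. Its gradient is block-separable, so each block is $\lambda_m L_m$-Lipschitz, and the full gradient is Lipschitz with constant $\max_m\lambda_m L_m\le L$. That check belongs to the subsequent theorem, not to this lemma.
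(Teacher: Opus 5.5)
Your proposal is correct and follows the same route as the paper: apply the quadratic upper bound implied by $L$-smoothness, substitute $\theta^+-\theta=-\eta\nabla F(\theta)$, and use $1-L\eta/2\ge 1/2$ for $\eta\le 1/L$. The only difference is that you also derive the quadratic upper bound from the integral form along the segment, which the paper takes directly from $L$-smoothness.
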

\begin{proof}
By $L$-smoothness,
\[
F(\theta^+)\le F(\theta)+\langle\nabla F(\theta),\theta^+-\theta\rangle+\frac{L}{2}\|\theta^+-\theta\|^2.
\]
Substituting $\theta^+-\theta=-\eta\nabla F(\theta)$ gives
\[
F(\theta^+)\le F(\theta)-\eta\|\nabla F(\theta)\|^2+\frac{L\eta^2}{2}\|\nabla F(\theta)\|^2
=F(\theta)-\eta\left(1-\frac{L\eta}{2}\right)\|\nabla F(\theta)\|^2.
\]
For $\eta\le 1/L$, $1-L\eta/2\ge 1/2$, yielding \eqref{eq:descent-lemma}.
\end{proof}

\begin{lemma}[MH gradient lower bound from per-task PL]
\label{lem:mh-gradient-pl}
For MH-MT, $\nabla_{\theta_m}F_{\mathrm{MH}}(\vartheta)=\lambda_m\nabla f_m(\theta_m)$ and
\begin{equation}
\frac{1}{2}\|\nabla F_{\mathrm{MH}}(\vartheta)\|^2
\ge
\mu_{\min}\lambda_{\min}(F_{\mathrm{MH}}(\vartheta)-F_{\mathrm{MH}}^\star),
\label{eq:mh-pl-bound}
\end{equation}
where $F_{\mathrm{MH}}^\star=\sum_m \lambda_m f_m^\star$.
\end{lemma}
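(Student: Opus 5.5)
The plan has two parts: first compute the gradient of $F_{\mathrm{MH}}$ block by block, then bound its squared norm from below using the per-task PL inequalities of Assumption~\ref{ass:pl}.

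\textbf{Step 1: block gradient.} By Definition~\ref{def:mt-objectives}, $F_{\mathrm{MH}}(\vartheta)=\sum_m\lambda_m f_m(\theta_m)$, and the $m$-th summand depends only on the block $\theta_m$. Exactly as in Proposition~\ref{prop:no-cross-gradients}, we get
\[
\nabla_{\theta_m}F_{\mathrm{MH}}(\vartheta)=\lambda_m\nabla f_m(\theta_m),
\]
with no contribution from other blocks. The Euclidean norm of a concatenated vector is the sum of its block norms, as in \eqref{eq:mh-gradient-norm}. Hence
\[
\tfrac12\|\nabla F_{\mathrm{MH}}(\vartheta)\|^2=\sum_m\lambda_m^2\,\tfrac12\|\nabla f_m(\theta_m)\|^2 .
\]

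\textbf{Step 2: apply PL blockwise.} Applying the PL inequality to each $f_m$ at $\theta_m$ gives
\[
\tfrac12\|\nabla F_{\mathrm{MH}}\|^2\ge\sum_m\lambda_m^2\mu_m\bigl(f_m(\theta_m)-f_m^\star\bigr).
\]
Every factor $f_m(\theta_m)-f_m^\star$ is nonnegative because $f_m^\star$ is an infimum. So we may use $\lambda_m^2\ge\lambda_{\min}\lambda_m$ and $\mu_m\ge\mu_{\min}$ termwise, which yields
\[
\tfrac12\|\nabla F_{\mathrm{MH}}\|^2\ge\mu_{\min}\lambda_{\min}\sum_m\lambda_m\bigl(f_m(\theta_m)-f_m^\star\bigr).
\]

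\textbf{Step 3: identify the optimal value.} The last sum equals $F_{\mathrm{MH}}(\vartheta)-\sum_m\lambda_m f_m^\star$. It remains to check that $\sum_m\lambda_m f_m^\star$ is indeed $F_{\mathrm{MH}}^\star:=\inf_\vartheta F_{\mathrm{MH}}$. The blocks are decoupled, so the infimum of the weighted sum splits into the weighted sum of the separate infima; this uses $\lambda_m>0$ and the independence of the blocks. This identification is the only point needing care: it is exactly what fails for SH-MT, where the shared $\theta$ prevents splitting the infimum. With it in hand, \eqref{eq:mh-pl-bound} follows.

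Everything here is elementary. The main thing to watch is the sign condition used in Step 2, namely $f_m(\theta_m)-f_m^\star\ge 0$, which justifies lowering the coefficients $\lambda_m^2\mu_m$ to $\lambda_{\min}\lambda_m\mu_{\min}$.
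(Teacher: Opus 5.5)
Your proposal is correct and follows the paper's proof essentially step for step: the block-separated squared gradient norm, PL applied to each block, and then the bounds $\mu_m\ge\mu_{\min}$ and $\lambda_m^2\ge\lambda_{\min}\lambda_m$. Two things you make explicit that the paper leaves implicit are the nonnegativity of $f_m(\theta_m)-f_m^\star$ and the identification in your Step 3. The lemma itself simply defines $F_{\mathrm{MH}}^\star$ as $\sum_m\lambda_m f_m^\star$, so Step 3 is not needed for this statement, but it is harmless and it is what justifies reading $F_{\mathrm{MH}}^\star$ as the true optimal value in the convergence theorem.
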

\begin{proof}
Using Proposition~\ref{prop:no-cross-gradients} and the PL condition for each $f_m$:
\begin{align}
\|\nabla F_{\mathrm{MH}}(\vartheta)\|^2
&=\sum_m \|\lambda_m\nabla f_m(\theta_m)\|^2
=\sum_m \lambda_m^2\|\nabla f_m(\theta_m)\|^2 \nonumber\\
&\ge 2\sum_m \lambda_m^2 \mu_m (f_m(\theta_m)-f_m^\star)
\ge 2\mu_{\min}\sum_m \lambda_m^2 (f_m(\theta_m)-f_m^\star).
\label{eq:mh-pl-step1}
\end{align}
Since $\lambda_m^2 \ge \lambda_{\min}\lambda_m$,
\begin{equation}
\sum_m \lambda_m^2 (f_m(\theta_m)-f_m^\star)
\ge \lambda_{\min}\sum_m \lambda_m (f_m(\theta_m)-f_m^\star)
= \lambda_{\min}(F_{\mathrm{MH}}(\vartheta)-F_{\mathrm{MH}}^\star).
\label{eq:mh-pl-step2}
\end{equation}
Combining \eqref{eq:mh-pl-step1} and \eqref{eq:mh-pl-step2} and dividing by $2$ yields \eqref{eq:mh-pl-bound}.
\end{proof}

\begin{theorem}[Linear convergence of MH-MT]
\label{thm:mh-convergence}
With step size $\eta\le 1/L$ and GD update $\vartheta^{k+1}=\vartheta^k-\eta\nabla F_{\mathrm{MH}}(\vartheta^k)$,
\begin{equation}
F_{\mathrm{MH}}(\vartheta^{k})-F_{\mathrm{MH}}^\star
\le
\bigl(1-\eta\mu_{\min}\lambda_{\min}\bigr)^k\bigl(F_{\mathrm{MH}}(\vartheta^{0})-F_{\mathrm{MH}}^\star\bigr).
\label{eq:mh-convergence}
\end{equation}
\end{theorem}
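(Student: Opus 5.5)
The plan is the standard Polyak--\L{}ojasiewicz argument, built from Lemma~\ref{lem:descent} and Lemma~\ref{lem:mh-gradient-pl}.

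\textbf{Step 1: smoothness of $F_{\mathrm{MH}}$.} I will first check that $F_{\mathrm{MH}}$ is $L$-smooth with $L=\sum_m\lambda_m L_m$, so that Lemma~\ref{lem:descent} applies.

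The gradient is block-separated: $\nabla F_{\mathrm{MH}}(\vartheta)=(\lambda_1\nabla f_1(\theta_1),\dots,\lambda_M\nabla f_M(\theta_M))$ by Proposition~\ref{prop:no-cross-gradients}. Hence
\[
\|\nabla F_{\mathrm{MH}}(\vartheta)-\nabla F_{\mathrm{MH}}(\vartheta')\|^2=\sum_m\lambda_m^2\|\nabla f_m(\theta_m)-\nabla f_m(\theta_m')\|^2\le \bigl(\max_m\lambda_m L_m\bigr)^2\|\vartheta-\vartheta'\|^2 .
\]
Since $\max_m\lambda_m L_m\le\sum_m\lambda_m L_m=L$, the objective is $L$-smooth. (The block Lipschitz constant $\max_m\lambda_m L_m$ is actually sharper, but we only need the weaker $L$.)

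\textbf{Step 2: one-step contraction.} With $\eta\le 1/L$, Lemma~\ref{lem:descent} gives
\[
F_{\mathrm{MH}}(\vartheta^{k+1})\le F_{\mathrm{MH}}(\vartheta^k)-\tfrac{\eta}{2}\|\nabla F_{\mathrm{MH}}(\vartheta^k)\|^2 .
\]
Lemma~\ref{lem:mh-gradient-pl} gives
\[
\tfrac12\|\nabla F_{\mathrm{MH}}(\vartheta^k)\|^2\ge\mu_{\min}\lambda_{\min}\bigl(F_{\mathrm{MH}}(\vartheta^k)-F_{\mathrm{MH}}^\star\bigr).
\]
Substituting the second into the first and subtracting $F_{\mathrm{MH}}^\star$ from both sides yields
\[
F_{\mathrm{MH}}(\vartheta^{k+1})-F_{\mathrm{MH}}^\star\le(1-\eta\mu_{\min}\lambda_{\min})\bigl(F_{\mathrm{MH}}(\vartheta^k)-F_{\mathrm{MH}}^\star\bigr).
\]

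\textbf{Step 3: unrolling.} Iterating this recursion $k$ times gives \eqref{eq:mh-convergence}.

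\textbf{Main obstacle: the contraction factor must lie in $[0,1)$.} I will argue this as follows:
\begin{itemize}
\item An $L_m$-smooth function satisfying PL with constant $\mu_m$ must have $\mu_m\le L_m$. This follows because PL combined with the quadratic upper bound $f_m(\theta)-f_m^\star\ge\frac{1}{2L_m}\|\nabla f_m(\theta)\|^2$ forces it.
\item Therefore
\[
\eta\mu_{\min}\lambda_{\min}\le\frac{\mu_{\min}\lambda_{\min}}{L}\le\frac{\lambda_m\mu_m}{\sum_j\lambda_j L_j}\le 1
\]
for the index $m$ attaining $\mu_{\min}$.
\end{itemize}

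A second, minor point is that $F_{\mathrm{MH}}^\star=\sum_m\lambda_m f_m^\star$ is indeed the infimum of $F_{\mathrm{MH}}$. This holds because the blocks $\theta_m$ are independent, so the infimum separates across $m$. It is what makes the gap $F_{\mathrm{MH}}-F_{\mathrm{MH}}^\star$ nonnegative, which the recursion in Step 2 requires.
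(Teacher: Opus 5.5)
Your proposal is correct and is essentially the paper's proof: apply Lemma~\ref{lem:descent} to $F_{\mathrm{MH}}$, substitute the gradient bound of Lemma~\ref{lem:mh-gradient-pl}, and iterate the one-step contraction. Your extra checks ($L$-smoothness of $F_{\mathrm{MH}}$ via the block-separated gradient, $F_{\mathrm{MH}}^\star=\sum_m\lambda_m f_m^\star$ being the true infimum, and $1-\eta\mu_{\min}\lambda_{\min}\in[0,1)$) are valid points the paper leaves implicit, with one small caveat: $\mu_m\le L_m$ only holds when $f_m$ is nonconstant, so if the index attaining $\mu_{\min}$ has constant $f_m$ you should instead use $\mu_{\min}\le\mu_j\le L_j$ for some nonconstant $j$ (and if all $f_m$ are constant the gap is identically zero).
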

\begin{proof}
Applying Lemma~\ref{lem:descent} to $F_{\mathrm{MH}}$ and using \eqref{eq:mh-pl-bound}:
\begin{align}
F_{\mathrm{MH}}(\vartheta^{k+1})-F_{\mathrm{MH}}^\star
&\le F_{\mathrm{MH}}(\vartheta^{k})-F_{\mathrm{MH}}^\star -\frac{\eta}{2}\|\nabla F_{\mathrm{MH}}(\vartheta^{k})\|^2 \nonumber\\
&\le F_{\mathrm{MH}}(\vartheta^{k})-F_{\mathrm{MH}}^\star -\eta\mu_{\min}\lambda_{\min}(F_{\mathrm{MH}}(\vartheta^{k})-F_{\mathrm{MH}}^\star) \nonumber\\
&= (1-\eta\mu_{\min}\lambda_{\min})(F_{\mathrm{MH}}(\vartheta^{k})-F_{\mathrm{MH}}^\star).
\label{eq:mh-recursion}
\end{align}
Iterating \eqref{eq:mh-recursion} yields \eqref{eq:mh-convergence}.
\end{proof}

\begin{theorem}[Linear convergence of SH-MT under global PL]
\label{thm:sh-convergence}
Assume $F_{\mathrm{SH}}$ is $L$-smooth and satisfies a global PL inequality with constant $\tilde{\mu}_{\mathrm{SH}}>0$:
\[
\frac{1}{2}\|\nabla F_{\mathrm{SH}}(\theta)\|^2\ge \tilde{\mu}_{\mathrm{SH}}(F_{\mathrm{SH}}(\theta)-F_{\mathrm{SH}}^\star).
\]
Then for GD with $\eta\le 1/L$,
\begin{equation}
F_{\mathrm{SH}}(\theta^{k})-F_{\mathrm{SH}}^\star
\le
\bigl(1-\eta\tilde{\mu}_{\mathrm{SH}}\bigr)^k\bigl(F_{\mathrm{SH}}(\theta^{0})-F_{\mathrm{SH}}^\star\bigr).
\label{eq:sh-convergence}
\end{equation}
\end{theorem}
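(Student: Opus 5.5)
The plan mirrors the proof of Theorem~\ref{thm:mh-convergence}, except that the block-separated PL bound of Lemma~\ref{lem:mh-gradient-pl} is no longer available. Instead, the global PL inequality on $F_{\mathrm{SH}}$ is taken directly as a hypothesis. I first record that $F_{\mathrm{SH}}^\star\triangleq\inf_\theta F_{\mathrm{SH}}(\theta)$ is finite. This holds because each $f_m\ge 0$, being an expected squared error, so $F_{\mathrm{SH}}$ is bounded below.

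Next I apply Lemma~\ref{lem:descent} to $F=F_{\mathrm{SH}}$. It is $L$-smooth by assumption and $\eta\le 1/L$, so the GD iterate $\theta^{k+1}=\theta^k-\eta\nabla F_{\mathrm{SH}}(\theta^k)$ satisfies
\[
F_{\mathrm{SH}}(\theta^{k+1})\le F_{\mathrm{SH}}(\theta^{k})-\frac{\eta}{2}\|\nabla F_{\mathrm{SH}}(\theta^{k})\|^2 .
\]
I subtract $F_{\mathrm{SH}}^\star$ from both sides and lower-bound $\frac12\|\nabla F_{\mathrm{SH}}(\theta^k)\|^2$ by $\tilde\mu_{\mathrm{SH}}(F_{\mathrm{SH}}(\theta^k)-F_{\mathrm{SH}}^\star)$ using the assumed global PL inequality. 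This gives the one-step contraction
\[
F_{\mathrm{SH}}(\theta^{k+1})-F_{\mathrm{SH}}^\star\le(1-\eta\tilde\mu_{\mathrm{SH}})\bigl(F_{\mathrm{SH}}(\theta^{k})-F_{\mathrm{SH}}^\star\bigr).
\]
Induction on $k$ then yields \eqref{eq:sh-convergence}.

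The only point needing care is that the contraction factor must lie in $[0,1)$, so that iterating the inequality is meaningful. Positivity of $1-\eta\tilde\mu_{\mathrm{SH}}$ is obtained from the standard fact $\tilde\mu_{\mathrm{SH}}\le L$. To prove that fact, I evaluate the descent inequality at any point and combine it with $F_{\mathrm{SH}}(\theta^+)\ge F_{\mathrm{SH}}^\star$. This gives $\frac{1}{2L}\|\nabla F_{\mathrm{SH}}\|^2\le F_{\mathrm{SH}}-F_{\mathrm{SH}}^\star\le\frac{1}{2\tilde\mu_{\mathrm{SH}}}\|\nabla F_{\mathrm{SH}}\|^2$, hence $\tilde\mu_{\mathrm{SH}}\le L$ whenever the gradient is nonzero somewhere. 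If the gradient vanishes everywhere, the claim is trivial. Consequently $\eta\tilde\mu_{\mathrm{SH}}\le\eta L\le1$.

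I expect no real obstacle, since the substantive content lies in the assumed global PL constant. I would remark afterwards that, in contrast to Lemma~\ref{lem:mh-gradient-pl}, $\tilde\mu_{\mathrm{SH}}$ cannot in general be derived from the per-task constants $\mu_m$. The cross terms in \eqref{eq:sh-gradient-norm} can be negative under Assumption~\ref{ass:gradient-interference}, which is exactly why the SH rate has to be assumed rather than proved.
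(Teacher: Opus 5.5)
Your proof is correct and follows essentially the paper's argument: apply the descent lemma to $F_{\mathrm{SH}}$, substitute the assumed global PL inequality to get the one-step contraction, and iterate. Your two extra checks are details the paper skips, and the second is genuinely needed to chain the inequalities when iterating: that $F_{\mathrm{SH}}^\star$ is finite, and that $1-\eta\tilde{\mu}_{\mathrm{SH}}\ge 0$ via $\tilde{\mu}_{\mathrm{SH}}\le L$.
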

\begin{proof}
Applying Lemma~\ref{lem:descent} to $F_{\mathrm{SH}}$ and using the global PL condition:
\[
F_{\mathrm{SH}}(\theta^{k+1})-F_{\mathrm{SH}}^\star
\le F_{\mathrm{SH}}(\theta^{k})-F_{\mathrm{SH}}^\star -\eta\tilde{\mu}_{\mathrm{SH}}(F_{\mathrm{SH}}(\theta^{k})-F_{\mathrm{SH}}^\star),
\]
which iterates to \eqref{eq:sh-convergence}.
\end{proof}

\begin{corollary}[Convergence implies decreasing ordinal-error upper bound]
\label{cor:convergence-to-ordinal}
Fix time $t_m$. Let $\hat{z}_0^{(m),k}$ be constructed from $\theta_m^k$ via Definition~\ref{def:z0-estimator} and decoded to $\hat{y}^{(m),k}$. Then
\begin{equation}
\mathbb{P}(\hat{y}^{(m),k}\ne y)
\le
\frac{4}{\gamma^2}\cdot\frac{1-\bar\alpha_{t_m}}{\bar\alpha_{t_m}}\;f_m(\theta_m^k),
\label{eq:convergence-ordinal-bound}
\end{equation}
and
\begin{equation}
f_m(\theta_m^k)-f_m^\star
\le
\frac{1}{\lambda_m}\bigl(F_{\mathrm{MH}}(\vartheta^k)-F_{\mathrm{MH}}^\star\bigr)
\le
\frac{1}{\lambda_m}\bigl(1-\eta\mu_{\min}\lambda_{\min}\bigr)^k\bigl(F_{\mathrm{MH}}(\vartheta^0)-F_{\mathrm{MH}}^\star\bigr).
\label{eq:convergence-rate}
\end{equation}
\end{corollary}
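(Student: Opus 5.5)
The statement is Corollary~\ref{cor:convergence-to-ordinal}. It has two parts, and I will derive both by chaining results already established; no new estimates should be needed.

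For \eqref{eq:convergence-ordinal-bound}, fix $m$ and $k$, and construct $\hat z_0^{(m),k}$ from the head parameters $\theta_m^k$ via Definition~\ref{def:z0-estimator} at $t=t_m$. Theorem~\ref{thm:ordinal-bound} applied to $\hat z=\hat z_0^{(m),k}$ gives
\[
\mathbb P(\hat y^{(m),k}\ne y)\le \frac{4}{\gamma^2}\,\mathbb E\bigl[(\hat z_0^{(m),k}-z_0)^2\bigr].
\]
Theorem~\ref{thm:noise-to-z0} then rewrites the right-hand side as $\frac{1-\bar\alpha_{t_m}}{\bar\alpha_{t_m}}\,\mathbb E[(\epsilon-g_{\theta_m^k}(E_\phi(x),z_{t_m},t_m))^2]$, and this expectation is exactly $f_m(\theta_m^k)$ by Definition~\ref{def:mt-objectives}, with the encoder held fixed. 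This is the argument of Corollary~\ref{cor:sh1t-bound}, repeated at time $t_m$ and iterate $k$.

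For \eqref{eq:convergence-rate}, the first inequality comes from separability. Every summand in
\[
F_{\mathrm{MH}}(\vartheta^k)-F_{\mathrm{MH}}^\star=\sum_{m'}\lambda_{m'}\bigl(f_{m'}(\theta_{m'}^k)-f_{m'}^\star\bigr)
\]
is nonnegative, since $f_{m'}^\star=\inf f_{m'}$. Dropping all terms except $m'=m$ gives $\lambda_m(f_m(\theta_m^k)-f_m^\star)\le F_{\mathrm{MH}}(\vartheta^k)-F_{\mathrm{MH}}^\star$, and dividing by $\lambda_m>0$ gives the claim. The second inequality is Theorem~\ref{thm:mh-convergence} multiplied by $1/\lambda_m$.

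The main obstacle is the passage from the noise MSE to the $z_0$ MSE. Equation~\eqref{eq:z0-mse-relation} is actually an exact identity obtained by squaring \eqref{eq:z0-error-relation}, so no unbiasedness assumption is needed. I will state this explicitly so the first part does not appear to depend on an extra hypothesis. I will also note that the margin condition of Theorem~\ref{thm:ordinal-bound} ($z_0$ at least $\gamma/2$ from the thresholds) is inherited. Finally, I will remark that combining the two parts gives a bound of the form $f_m^\star+\frac{1}{\lambda_m}(1-\eta\mu_{\min}\lambda_{\min})^k(\cdot)$, so the ordinal-error bound decreases geometrically toward its floor.
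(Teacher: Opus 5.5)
Your proposal is correct and follows the paper's proof essentially step for step. You combine Theorem~\ref{thm:ordinal-bound} with the identity \eqref{eq:z0-mse-relation} to get the first bound. For the second, you drop the nonnegative summands $j\neq m$ of $F_{\mathrm{MH}}(\vartheta^k)-F_{\mathrm{MH}}^\star=\sum_j\lambda_j(f_j(\theta_j^k)-f_j^\star)$ and then invoke Theorem~\ref{thm:mh-convergence}. Your side remark is also correct: \eqref{eq:z0-mse-relation} follows by squaring the pathwise identity \eqref{eq:z0-error-relation}, so no unbiasedness is needed, which is slightly cleaner than the paper's justification.
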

\begin{proof}
Equation~\eqref{eq:convergence-ordinal-bound} follows by combining \eqref{eq:z0-mse-relation} with Theorem~\ref{thm:ordinal-bound}. For \eqref{eq:convergence-rate}, note that
\[
F_{\mathrm{MH}}(\vartheta^k)-F_{\mathrm{MH}}^\star=\sum_{j=1}^M\lambda_j(f_j(\theta_j^k)-f_j^\star)\ge \lambda_m(f_m(\theta_m^k)-f_m^\star),
\]
hence $f_m(\theta_m^k)-f_m^\star \le \frac{1}{\lambda_m}(F_{\mathrm{MH}}(\vartheta^k)-F_{\mathrm{MH}}^\star)$. Applying \eqref{eq:mh-convergence} yields the final bound.
\end{proof}

\begin{remark}[Optimization robustness: MH-MT converges under weaker assumptions]
\label{rem:optimization-robustness}
The convergence analysis reveals a structural optimization advantage of MH-MT:
\begin{itemize}
\item \textbf{MH-MT (Theorem~\ref{thm:mh-convergence})}: Convergence follows directly from per-task PL conditions (Assumption~\ref{ass:pl}), inherited from individual time-step objectives. Parameter separation (Proposition~\ref{prop:no-cross-gradients}) ensures the joint objective's PL constant is a weighted combination of per-task constants (Lemma~\ref{lem:mh-gradient-pl}).
\item \textbf{SH-MT (Theorem~\ref{thm:sh-convergence})}: Requires an additional global PL assumption on $F_{\mathrm{SH}}$ that does not automatically follow from per-task conditions due to potential gradient interference (Proposition~\ref{prop:gradient-interference}).
\end{itemize}
Thus, MH-MT guarantees convergence under minimal assumptions, while SH-MT's convergence depends on the unknown interaction structure between time-step objectives. This optimization robustness complements the three-tier advantage hierarchy of Remark~\ref{rem:three-tier-advantage}, establishing MH-MT's superiority in expressivity, inference flexibility, and trainability.
\end{remark}

\section{Additional Experiments}
\subsection{Experimental Settings}
\label{app:experiment}
\subsubsection{Metrics}
We utilize a comprehensive suite of metrics to evaluate performance, selecting specific indicators based on task characteristics:
\begin{itemize}[itemindent=0pt, left=4pt]
    \item \textbf{MAE (Mean Absolute Error)}: Measures the average magnitude of errors between predictions $\{\hat{y_i}\}$ and ground truth $\{y_i\}$, defined as $\frac{1}{N}\sum_{i=1}^N |y_i - \hat{y_i}|$.
    \item \textbf{CS (Cumulative Score)}:
    Quantifies the percentage of test instances where the absolute prediction error $|\hat{y}i - y_i|$ falls within a specified tolerance threshold $L$.
    \item \textbf{XAUC}~\citep{d2q}: 
    Evaluates pairwise ordinal consistency. It calculates the probability that the predicted relative order of a randomly sampled pair aligns with the ground truth, serving as a robust indicator of ranking capability.
    \item \textbf{LCC (Linear Correlation Coefficient)~\citep{talebi2018nima}}: Assesses the linear dependence between predicted and ground truth values. Computed as the covariance normalized by the product of their standard deviations, it ranges from $[-1, 1]$, where values closer to $\pm 1$ denote stronger correlation.
    \item \textbf{SRCC (Spearman's Rank Correlation Coefficient)~\citep{talebi2018nima}}: Measures monotonic rank correlation. Unlike LCC, SRCC operates on rank variables rather than raw scores, offering greater robustness to outliers and non-linear monotonic relationships.
\end{itemize}

\subsubsection{Implementation Details}
Unless otherwise specified, we adopt the following protocol.
\paragraph{Architecture.} The proposed framework employs an encoder-decoder structure. The encoder is task-specific (detailed in respective sections), while the decoder is consistently instantiated as a two-layer Transformer with 8-head attention for the conditional diffusion process. A dropout rate of 0.1 is applied to mitigate overfitting. The loss weight $\lambda$ in Eq.~(\ref{eq:total_loss}) is set to 10.
\paragraph{Optimization.} We minimize the objective using the Adam optimizer~\citep{kingma2014adam} ($\beta_1=0.9, \beta_2=0.999$) with a learning rate of $5 \times 10^{-4}$.
\paragraph{Training Regimen.} For vision tasks, models are trained for 100 epochs with a batch size of 128. For structured data tasks (WTP, LTV), we train for 50 epochs with a batch size of 1024. All experiments are conducted on a single NVIDIA RTX 4090 GPU.

\subsection{Image Aesthetics Assessment (IAA)}
\label{app:IAA}
\subsubsection{Experimental Setup}
\paragraph{Datasets.}
We evaluate DiffoR on four standard IAA benchmarks: TAD66K~\citep{he2022rethinking}, AVA~\citep{murray2012ava}, ICAA17K~\citep{he2023thinking}, and SPAQ~\citep{fang2020perceptual}. Following standard practice, each dataset is randomly partitioned into 80\% training, 10\% validation, and 10\% testing sets.
\paragraph{Preprocessing.}
Given the narrow range of aesthetic scores (typically 0–10), we scale the target values by a factor of 100 during training to facilitate robust vocabulary construction and ordinal sequencing. For evaluation, all predictions are rescaled to the original range to ensure metric consistency.
\paragraph{Baselines.} We select baselines based on two criteria: (1) classical architectures with open-source implementations; and (2) state-of-the-art (SOTA) methods in specialized domains (e.g., personalized IAA). To guarantee a strictly fair comparison, all baselines are trained using their official hyperparameters and subjected to identical data preprocessing and evaluation protocols, consistent with~\citep{he2023thinking}.

\subsubsection{Performance.}
Due to space constraints in Sec.~\ref{sec:iaa} of the main paper, comprehensive baseline comparisons for the Image Aesthetics Assessment task are presented here. Tab.~\ref{tab:AA1} and Tab.~\ref{tab:AA2} details the performance of all compared methods on the TAD66K, AVA, ICAA17K, and SPAQ datasets.

\begin{table*}[t]
  \centering
  \caption{The results of Image Aesthetics Assessment task on TAD66K and AVA datasets}
  \label{tab:AA1}
  \resizebox{0.66\textwidth}{!}{
  \renewcommand{\arraystretch}{0.85}
  \begin{tabular}{l*{4}{c}*{4}{c}}
    \toprule
    \multirow{2}{*}{Method}
      & \multicolumn{4}{c}{TAD66K}
      & \multicolumn{4}{c}{AVA} \\
    \cmidrule(lr){2-5} \cmidrule(lr){6-9}
       & MAE $\downarrow$ & XAUC $\uparrow$ & LCC $\uparrow$ & SRCC $\uparrow$
       & MAE $\downarrow$ & XAUC $\uparrow$ & LCC $\uparrow$ & SRCC $\uparrow$ \\
    \midrule
    RAPID~\citep{lu2014rapid}
       & 1.766 & 0.510 & 0.332 & 0.314
       & 0.978 & 0.513 & 0.336 & 0.327 \\
    AADB~\citep{kong2016photo}
       & 1.463 & 0.523 & 0.400 & 0.379
       & 0.784 & 0.534 & 0.431 & 0.408 \\
    PAM~\citep{ren2017personalized}
       & 1.314 & 0.534 & 0.440 & 0.422
       & 0.614 & 0.619 & 0.531 & 0.521 \\
    NIMA~\citep{talebi2018nima}
       & 1.422 & 0.511 & 0.405 & 0.390
       & 0.715 & 0.532 & 0.472 & 0.447 \\
    ALamp~\citep{ma2017lamp}
       & 1.349 & 0.523 & 0.422 & 0.411
       & 0.657 & 0.579 & 0.498 & 0.487 \\
    $\text{MP}_{ada}$~\citep{sheng2018attention}
       & 1.191 & 0.589 & 0.408 & 0.389
       & 0.602 & 0.632 & 0.543 & 0.531 \\
    MLSP~\citep{hosu2019effective}
       & 1.132 & 0.620 & 0.432 & 0.409
       & 0.579 & 0.657 & 0.563 & 0.553 \\
    BIAA~\citep{zhu2020personalized}
       & 1.329 & 0.538 & 0.431 & 0.348
       & 0.672 & 0.566 & 0.496 & 0.476 \\
    UIAA~\citep{zeng2019unified}
       & 1.281 & 0.548 & 0.441 & 0.361
       & 0.608 & 0.626 & 0.535 & 0.525 \\
    POE~\citep{li2021learning}
       & 1.185 & 0.588 & 0.420 & 0.377
       & 0.633 & 0.608 & 0.524 & 0.506 \\
    HGCN~\citep{she2021hierarchical}
       & 1.141 & 0.615 & 0.419 & 0.406
       & 0.658 & 0.578 & 0.511 & 0.486 \\
    TANet~\citep{he2022rethinking}
       & 1.081 & 0.649 & 0.452 & 0.428
       & 0.577 & 0.659 & 0.568 & 0.554 \\
    MaxViT~\citep{tu2022maxvit}
       & 1.054 & 0.659 & 0.472 & 0.441 
       & 0.559 & 0.679 & 0.594 & 0.571 \\
    Delegate~\citep{he2023thinking}
       & 1.041 & 0.661 & 0.477 & 0.451 
       & 0.541 & 0.688 & 0.642 & 0.634 \\
    AesMamba~\citep{gao2024aesmamba}
       & 1.035 & 0.666 & 0.482 & 0.468 
       & 0.522 & 0.697 & 0.663 & 0.656 \\
    GoR~\citep{ma2026gor}
       & 0.996 & 0.677 & 0.541 & 0.513 
       & 0.395 & 0.751 & 0.726 & 0.701 \\
    \textbf{DiffoR (Ours)} 
       & \textbf{0.914}& \textbf{0.694} & \textbf{0.553} & \textbf{0.526}
       & \textbf{0.388}& \textbf{0.766} & \textbf{0.734} & \textbf{0.725} \\
    \bottomrule
  \end{tabular}
  }
\end{table*}

\begin{table*}[t]
  \centering
  \caption{The results of the Image Aesthetics Assessment task on ICAA17K and SPAQ datasets.}
  \label{tab:AA2}
  \resizebox{0.66\textwidth}{!}{
  \renewcommand{\arraystretch}{0.85}
  \begin{tabular}{l*{4}{c}*{4}{c}}
    \toprule
    \multirow{2}{*}{Method}
      & \multicolumn{4}{c}{ICAA17K}
      & \multicolumn{4}{c}{SPAQ} \\
    \cmidrule(lr){2-5} \cmidrule(lr){6-9}
       & MAE $\downarrow$ & XAUC $\uparrow$ & LCC $\uparrow$ & SRCC $\uparrow$
       & MAE $\downarrow$ & XAUC $\uparrow$ & LCC $\uparrow$ & SRCC $\uparrow$ \\
    \midrule
    RAPID~\citep{lu2014rapid}
       & 0.7415 & 0.6416 & 0.5164 & 0.5083
       & 1.0890 & 0.6997 & 0.6565 & 0.6128 \\
    AADB~\citep{kong2016photo}
       & 0.7142 & 0.6661 & 0.5311 & 0.5195
       & 1.083 & 0.7036 & 0.6646 & 0.6162 \\
    PAM~\citep{ren2017personalized}
       & 0.7070 & 0.6729 & 0.5385 & 0.5247
       & 1.0726 & 0.7104 & 0.6691 & 0.6222 \\
    ALamp~\citep{ma2017lamp}
       & 0.6948 & 0.6847 & 0.5478 & 0.5339
       & 1.0511 & 0.7250 & 0.6835 & 0.6349 \\
    NIMA~\citep{talebi2018nima}
       & 0.6957 & 0.6839 & 0.5458 & 0.5333
       & 1.0756 & 0.7084 & 0.6709 & 0.6204 \\
    $\text{MP}_{ada}$~\citep{sheng2018attention}
       & 0.6948 & 0.6848 & 0.5485 & 0.5340
       & 1.0525 & 0.7240 & 0.6808 & 0.6341 \\
    MLSP~\citep{hosu2019effective}
       & 0.6814 & 0.6983 & 0.5606 & 0.5445
       & 1.0428 & 0.7306 & 0.6952 & 0.6402 \\
    MT-A~\citep{fang2020perceptual}
       & 0.6855 & 0.6940 & 0.5558 & 0.5412
       & 1.0455 & 0.7289 & 0.6862 & 0.6384 \\
    BIAA~\citep{zhu2020personalized}
       & 0.6864 & 0.6932 & 0.5552 & 0.5405
       & 1.0497 & 0.7259 & 0.6826 & 0.6358 \\
    UIAA~\citep{zeng2019unified}
       & 0.6889 & 0.6907 & 0.5559 & 0.5386
       & 1.0469 & 0.7278 & 0.6862 & 0.6376 \\
    POE~\citep{li2021learning}
       & 0.6808 & 0.6966 & 0.5583 & 0.5432
       & 1.0456 & 0.7307 & 0.6877 & 0.6368\\
    MUSIQ~\citep{ke2021musiq}
       & 0.6740 & 0.7059 & 0.5632 & 0.5504
       & 1.0427 & 0.7308 & 0.6925 & 0.6401 \\
    HGCN~\citep{she2021hierarchical}
       & 0.6813 & 0.6983 & 0.5566 & 0.5445 
       & 1.040 & 0.7328 & 0.6934 & 0.6417 \\
    TANet~\citep{he2022rethinking}
       & 0.6789 & 0.7008 & 0.5599 & 0.5465 
       & 1.0469 & 0.7279 & 0.6844 & 0.6375 \\
    MaxViT~\citep{tu2022maxvit}
       & 0.6582 & 0.7227 & 0.5853 & 0.5636
       & 1.042 & 0.7308 & 0.6925 & 0.6401 \\
    Delegate~\citep{he2023thinking}
       & 0.6345 & 0.7498 & 0.6034 & 0.5847 
       & 1.019 & 0.7473 & 0.7114 & 0.6545 \\
    AesMamba~\citep{gao2024aesmamba}
       & 0.6129 & 0.7663 & 0.6137 & 0.6294 
       & 0.9875 & 0.7522 & 0.7261 & 0.6895 \\
    GoR~\citep{ma2026gor}
       & 0.5842 & 0.7913 & 0.6823 & 0.6789
       & 0.8722 & 0.7648 & 0.7434 & 0.7233 \\
    \textbf{DiffoR (Ours)} 
       & \textbf{0.552}& \textbf{0.841} & \textbf{0.759} & \textbf{0.765}
       & \textbf{0.768}& \textbf{0.805} & \textbf{0.799} & \textbf{0.785} \\
    \bottomrule
  \end{tabular}
  }
\end{table*}


\subsection{Watch Time Prediction (WTP)}
\label{app:wtp}
\subsubsection{Experimental Setup}
\label{app:wtp_setting}
\paragraph{Dataset.}
We evaluate our method on two large-scale real-world datasets derived from Kuaishou video logs. KuaiRand~\citep{gao2022kuairand} comprises 26,988 users and 6,598 items, generating a total of 1,266,560 interaction impressions. KuaiRec~\citep{gao2022kuairec}, a significantly larger collection, consists of 7,176 users and 10,728 items, accumulating an impressive 12,530,806 impressions.
\paragraph{Architecture}
Unlike traditional sequential recommendation tasks, Watch Time Prediction (WTP) does not inherently rely on historical action sequences. Accordingly, we employ a simple two-layer Multi-Layer Perceptron (MLP) as the encoder, maintaining consistency with the configuration of baseline methods to ensure fair comparison.
\paragraph{Baselines.}
Details about these compared methods are as follows:
\begin{enumerate}[leftmargin=20pt]
    \item \textbf{VR (Value Regression):} This method employs direct regression fitting to predict the absolute watch time prediction, optimized via Mean Squared Error (MSE).
    \item \textbf{D2Q~\cite{d2q} :} This method segments data by video duration, predicting watch time quantiles within groups via regression, then mapping to final watch time.
    \item \textbf{D2Co~\cite{d2co}:} introduces a sensitivity-controlled correction mechanism to mitigate both duration bias and noisy watching, aiming to recover true user interest via a unified causal framework.
    \item \textbf{CWM~\cite{cwm}:} It models counterfactual watch time (CWT) by estimating user interest via a cost-based transform function, optimizing a counterfactual likelihood for prediction.
    \item \textbf{TPM~\cite{tpm}:} It utilizes a tree structure to model multi-granular time interval relationships, predicting watch time as a weighted sum of probabilities along the tree path.
    \item \textbf{PTPM~\cite{PTPM}:} It replaces the pre-defined binary tree with a personalized, learnable structure in TPM with adaptive ordinal discretization and robust bias correction.
    \item \textbf{CREAD~\cite{cread}:} This method constructs dynamic, error-adaptive time intervals. Within each, a classifier determines threshold exceedance, deriving the final prediction from a weighted sum of probabilistic estimates.
    \item \textbf{SWaT~\cite{swat}:} User-centric statistical framework modeling watch time with behavioral assumptions. It employs bucketization for non-stationary viewing probabilities, with prediction via a weighted sum of probabilistic estimates.
    \item \textbf{EGMN~\cite{egmn}:} It parameterizes an Exponential-Gaussian Mixture to simultaneously characterize the coarse-grained skewness of quick skips and the fine-grained diversity of user interactions.
    \item \textbf{GoR~\cite{ma2026gor}:} It employs an autoregressive language modeling approach to generate watch time tokens, where the final predicted duration is the sum of these generated tokens.
\end{enumerate}
For a fair comparison, all compared methods are implemented with their reported optimal hyperparameters and configured to maintain approximate model parameter sizes.

\subsection{Life Time Value Prediction (LTV)}
\label{app:ltv}
\subsubsection{Experimental Setup}
We evaluate DiffoR on the Criteo-SSC\footnote{https://ailab.criteo.com/criteo-sponsored-search-conversion-log-dataset/} and Kaggle\footnote{https://www.kaggle.com/c/acquire-valued-shoppers-challenge} datasets.
For both datasets, a random split of 7:1:2 is used for training, validation, and testing, respectively.
Criteo-SSC is a large-scale public dataset derived from Criteo Predictive Search (CPS) logs. Each instance represents a user's click behavior, with the task being to predict conversion and associated 30-day revenue. The product price feature was excluded from the inputs.
The Kaggle Dataset contains transaction records. Following~\citep{weng2024optdist}, the task involves predicting a user's total purchase value from a specific company in the year following their initial purchase. Our experiments focus on initial purchases within 2012-03-01 and 2012-07-01, using data from the three companies with the highest transaction volume.

\subsubsection{Baselines}
We evaluate our method with several existing state-of-the-art LTV methods~\citep{drachen2018or,ma2018entire,wang2019deep,li2022billion,liu2024mdan,weng2024optdist}. Here, we provide more
detailed information about these compared methods as follows:
\begin{enumerate}[itemsep=0.5ex, parsep=0pt, topsep=0pt, leftmargin=*, align=left]
\item \textbf{Two-stage}~\citep{drachen2018or} decomposes the CLTV prediction into two tasks: the first task is a classification task predicting whether a user will churn or not, and the second task is a regression task predicting the revenue that the user brings.
\item \textbf{MTL-MSE}~\citep{ma2018entire} estimates conversion rate and CLTV with MSE loss according to the multi-task learning paradigm. 
\item \textbf{ZILN}~\citep{wang2019deep} assumes that the long-tailed CLTV distribution follows a zero-inflated log-normal distribution and uses a DNN to estimate the mean $\mu$, standard deviation $\sigma$  and conversion rate $p$ for the samples.
\item \textbf{MDME}~\citep{li2022billion} divides the training samples by CLTV into multiple sub-distributions and buckets, and constructs corresponding classification problems to predict the bucket a sample belongs to. In the next stage, the bias within the bucket is estimated so that the samples obtain a fine-grained CLTV value.
\item \textbf{MDAN}~\citep{liu2024mdan} predicts predefined LTV bucket labels using a multi-classification network and leverages a multi-channel learning network to derive embeddings for each bucket. The final sample representation is obtained by fusing these embeddings with the classification network’s output through a weighted sum, which is then utilized for CLTV prediction.
\item \textbf{OptDist}~\citep{weng2024optdist} employs an adaptive mechanism to model and select optimal sub-distributions for individual samples, consisting of a distribution learning module (DLM) that trains multiple sub-distribution networks, and a distribution selection module (DSM) that dynamically chooses the appropriate sub-distribution for each customer.
\item \textbf{HiLTV}~\citep{xu2025hiltv} is a hierarchical framework for game LTV prediction that models multi-modal recharge behaviors with a Zero-Inflated Mixture-of-Logistic loss and introduces a calibration module for robust new-user prediction.
\end{enumerate}
For this task, we employ the same encoder architecture for DiffoR in Appendix~\ref{app:wtp_setting}. 

\end{document}